\documentclass{article}


\usepackage{fullpage}




\usepackage[noend]{algorithmic}
\usepackage[ruled,vlined]{algorithm2e}

\usepackage{natbib}
\usepackage{amsfonts, amssymb, amsthm}
\usepackage{mathtools}
\usepackage{graphicx}
\usepackage{xcolor}
\usepackage{float}
\usepackage{thmtools, thm-restate}
\usepackage{titlesec}
\usepackage{comment}

\titleformat*{\subparagraph}{\itshape}
\titlespacing*{\subparagraph}{0pt}{\baselineskip}{\baselineskip}

\newcommand{\at}{\mathcal{A}^t}

\renewcommand{\a}{\mathcal{A}}
\newcommand{\E}{\mathop{\mathbb{E}}}
\newcommand{\g}{\mathcal{G}}
\newcommand{\h}{\mathcal{H}}
\newcommand{\f}{\mathcal{F}}
\newcommand{\x}{\mathcal{X}}
\newcommand{\y}{\mathcal{Y}}
\newcommand{\argmin}{\mathop{\mathrm{argmin}}}

\newcommand{\p}{\mathcal{P}}
\newcommand{\s}{\mathcal{S}}

\newcommand{\Var}{\ensuremath{\mathop{\mathrm{Var}}}}

\newif\ifdraft
\drafttrue


\newtheorem{fact}{Fact}
\newtheorem{observation}{Observation}

\newtheorem{theorem}{Theorem}[section]
\newtheorem{lemma}{Lemma}[section]
\newtheorem{corollary}{Corollary}[section]
\newtheorem{definition}{Definition}[section]
\newtheorem{remark}{Remark}[section]

\usepackage[utf8]{inputenc} 
\usepackage[T1]{fontenc}    
\usepackage{hyperref}       
\usepackage{url}            
\usepackage{booktabs}       
\usepackage{amsfonts}       
\usepackage{nicefrac}       
\usepackage{microtype}      
\usepackage{xcolor}         

\title{Online Minimax Multiobjective Optimization: \\ Multicalibeating and Other Applications}

%

\author{Daniel Lee$^1$, Georgy Noarov$^1$, Mallesh Pai$^2$, Aaron Roth$^1$ \\
$^1$ University of Pennsylvania, $^2$ Rice University\\
\texttt{daniellee@alumni.upenn.edu, gnoarov@seas.upenn.edu},\\ \texttt{mallesh.pai@rice.edu, aaroth@cis.upenn.edu}
}


\begin{document}

\maketitle

\begin{abstract}
We introduce a simple but general online learning framework in which a learner plays against an adversary in a vector-valued game that changes every round. Even though the learner's objective is not convex-concave (and so the minimax theorem does not apply), we give a simple algorithm that can compete with the setting in which the adversary must announce their action first, with optimally diminishing regret. We demonstrate the power of our framework by using it to (re)derive optimal bounds and efficient algorithms across a variety of domains, ranging from multicalibration to a large set of no regret algorithms, to a variant of Blackwell's approachability theorem for polytopes with fast convergence rates. As a new application, we show how to ``(multi)calibeat'' an arbitrary collection of forecasters --- achieving an exponentially improved dependence on the number of models we are competing against, compared to prior work. 

%
\end{abstract}

\section{Introduction}
We introduce and study a simple but powerful framework for online adversarial multiobjective minimax optimization. At each round $t$, an adaptive adversary chooses an environment for the learner to play in, defined by a convex compact action set $\x^t$ for the learner, a convex compact action set $\y^t$ for the adversary, and a $d$-dimensional continuous loss function $\ell^t:\x^t\times \y^t \rightarrow [-1,1]^d$ that, in each coordinate, is convex in the learner's action and concave in the adversary's action. The learner then chooses an action, or distribution over actions, $x^t$, and the adversary responds with an action $y^t$. This results in a loss vector $\ell^t(x^t,y^t)$, which accumulates over time. The learner's goal is to minimize the maximum accumulated loss over each of the $d$ dimensions: $\max_{j \in [d]} \left(\sum_{t=1}^T \ell_j^t(x^t,y^t)\right)$.

One may view the environment chosen at each round $t$ as defining a zero-sum game in which the learner wishes to minimize the \emph{maximum} coordinate of the resulting loss vector. The objective of the learner in the stage game in isolation can be written as:\footnote{\label{ftn:supmax} A brief aside about the ``inf max max'' structure of $w^t_L$: since each $\ell_j^t$ is continuous,  so is $\max_j \ell_j^t$, and hence $\max_y (\max_j \ell_j^t)$ is attained on the compact set $\y^t$. However, $\max_y (\max_j \ell_j^t)$ may not be a continuous function of $x$ and therefore the infimum over $\x^t$ need not be attained.}
\vspace{-0.1in}
\[
w^t_L = \inf_{x^t \in \x^t} \max_{y^t \in \y^t}\left(\max_{j \in [d]}\ell_j^t(x^t,y^t)\right).  
\]
\vspace{-0.15in}

\noindent Unfortunately, although $\ell_j^t$ is convex-concave in each coordinate, the maximum over coordinates does not preserve concavity for the adversary. Thus the minimax theorem does not hold, and the value of the game in which the learner moves first (defined above) is larger than the value of the game in which the adversary moves first--- that is, $w^t_L > w^T_A$, where $w^t_A$ is defined as:
\vspace{-0.1in}
\[w^t_A = \sup_{y^t \in \y^t}\min_{x^t \in \x^t} \left(\max_{j \in [d]}\ell_j^t(x^t,y^t)\right).\]
\vspace{-0.15in}

\noindent Nevertheless, fixing a series of $T$ environments chosen by the adversary, this defines in hindsight an aspirational quantity $W^T_A = \sum_{t=1}^T w^t_A$, summing the adversary-moves-first value of the constituent zero sum games. Despite the fact that these values are not individually obtainable in the stage games, we show that they are approachable on average over a sequence of rounds, i.e., there is an algorithm for the learner that guarantees that against any adversary,
\vspace{-0.1in}
\[\max_{j \in [d]} \left(\tfrac{1}{T}  \sum_{t=1}^T \ell_j^t(x^t,y^t) \right) \leq \tfrac{1}{T}W_A^T + 4\sqrt{\tfrac{2\ln d}{T}}.\]
\vspace{-0.15in}

Our derivation is elementary and based on a minimax argument, and is a development of a game-theoretic argument from the calibration literature due to~\cite{Hart20} and~\cite{FL99}.\footnote{This argument was extended in \cite{multivalid} to obtain fast rates and explicit algorithms for multicalibration and multivalidity.} The generic algorithm plays actions at every round $t$ according to a minimax equilibrium strategy in a surrogate game that is derived both from the environment chosen by the adversary at round $t$, as well as from the history of play so far on previous rounds $t' < t$. The loss in the surrogate game is convex-concave (and so we may apply minimax arguments), and can be used to upper bound the loss in the original games.

We then show that this simple framework can be instantiated to derive a wide array of optimal bounds, and that the corresponding algorithms can be derived in closed form by solving for the minimax equilibrium of the corresponding surrogate game. Despite its simplicity, our framework has a number of applications to online learning--- we sketch these below.

\paragraph{``Multi-Calibeating'':} \cite{foster2021calibeating} recently introduced the notion of ``calibeating'' an arbitrary online forecaster: making online calibrated predictions about an adversarially chosen sequence of inputs that are guaranteed to have lower squared error than an arbitrary predictor $f$, where the improvement in error approaches $f$'s calibration error in hindsight. Foster and Hart give two methods for  calibeating an arbitrary collection of predictors $\f$ simultaneously, but these methods have an exponential and polynomial dependence in their convergence bounds on $|\f|$, respectively. 

Using our framework, we can derive optimal online bounds for online \emph{multicalibration} \citep{hebert2018multicalibration,multivalid}, and as an application, obtain bounds for calibeating arbitrary collection of models with only a \emph{logarithmic} dependence on $|\f|$. Our algorithm naturally extends  to the more general problem of online ``multi-calibeating'' --- i.e.\ combining the goals of online multicalibration and calibeating. Namely, we give an algorithm for making real-valued predictions given contexts from some space $\Theta$. The algorithm is parameterized by (i) a collection $\g \subseteq 2^{\Theta}$ of (arbitrary, potentially intersecting) subsets of $\Theta$ that we might envision to represent e.g.\ different demographic groups in a setting in which we are making predictions about people; and (ii) an arbitrary collection of predictors $\f$. We promise that our predictions are calibrated not just overall, but simultaneously within each group $g \in \g$ --- and moreover, that we \emph{calibeat} each predictor $f \in \f$ not just overall, but simultaneously within each group $g \in \g$. We do this by proving an online analogue of what \cite{hebert2018multicalibration} call a ``do no harm'' property  in the batch setting using a similar technique: multicalibrating with respect to the level sets of the predictors.

\paragraph{Fast Polytope Blackwell Approachability:} We give a variant of Blackwell's Approachability Theorem~\citep{blackwell} for approaching a polytope. Standard methods approach a set in Euclidean distance, at a rate polynomial in the payoff dimension. In contrast, we give a dimension-independent approachability guarantee: we approximately satisfy all halfspace constraints defining the polytope, after \emph{logarithmically} many rounds in the number of constraints, a significant improvement over a polynomial dimensional dependence in many settings. It is equivalent to the results of  \cite{perchet2015exponential}, which show that the negative orthant $\mathbb{R}^d_{\leq 0}$ is approachable in the $\ell_\infty$ metric with a $\log(d)$ dependence in the convergence rate. This result follows immediately from a specialization of our framework that does not require changing the environment at each round, highlighting the connection between our framework and approachability. We remark that approachability has been extended in a number of ways in recent years \citep{approach1,approach2,approach3}. However most of our other applications take advantage of the flexibility of our framework to play a different game at each round (which can be defined by context) with potentially different action sets, and so do not directly follow from Blackwell approachability. Therefore, while many of our regret bounds could be derived from approachability to the negative orthant by enlarging the action space exponentially to simulate aspects of our framework, this approach would not easily lead to \emph{efficient} algorithms.

\paragraph{Recovering Expert Learning Bounds:} Algorithms and optimal bounds for various expert learning problems fall naturally out of our framework as corollaries. This includes external regret \citep{MW1,MW2}, internal and swap regret \citep{fostervohra,HM00,internalregret}, adaptive regret \citep{MW2, adaptiveregret2,adaptiveregretvovk}, sleeping experts  \citep{sleeping,sleeping2,internalregret,kleinberg2010sleepingexperts}, and the recently introduced multi-group regret \citep{multigroup1,multigroup2}. Multi-group regret refers to a contextual prediction problem in which the learner gets contexts from $\Theta$ before each round. It is parameterized by a collection of groups $\g \subseteq 2^{\Theta}$: e.g., if the predictions concern people, $\g$ may represent an arbitrary, intersecting set of demographic groups. Here the ``experts'' are different models that make predictions on each instance; the goal is to attain no-regret not just overall, but also on the subset of rounds corresponding to contexts from each $g \in \g$. Multi-group regret, like multicalibration, is one of the few solution concepts in the algorithmic fairness literature known not to involve tradeoffs with overall accuracy~\citep{biasbounties}. \cite{multigroup1} derived their algorithm for online multigroup regret via a reduction to sleeping experts, and \cite{multivalid} derived their algorithm for online multicalibration via a direct argument. Here we derive online algorithms for both multicalibration and multigroup regret as corollaries of the same fundamental framework.

\subsection{Additional Related Work}
Papers by \cite{azar2014sequential} and \cite{kesselheim2020online} study a related problem: an online setting with vector-valued losses, where the goal is to minimize the $\ell_\infty$ norm of the accumulated loss vector (they also consider other $\ell_p$-norms). However, they study an incomparable benchmark that in our notation would be written as $\min_{x^*\in \x} \max_{j \in [d]} \frac{1}{T}\sum_{t=1}^T \ell_j(x^*,y^t)$ (which is well-defined in their setting, where loss functions $\ell^t = \ell$ and action sets $\x^t = \x, \y^t = \y$ are fixed throughout the interaction). On the one hand, this benchmark is stronger than ours in the sense that the maximum over coordinates is taken outside the sum over time, whereas our benchmark considers a ``greedy'' per-round maximum. On the other hand, in our setting the game can be different at every round, so our benchmark allows a comparison to a different action at each round rather than a single fixed action. In the setting of~\cite{kesselheim2020online}, it is impossible to give any regret bound to their benchmark, so they derive an algorithm obtaining a $\log(d)$ \emph{competitive ratio} to this benchmark. In contrast, our benchmark admits a regret bound. Hence, our results are quite different in kind despite the outward similarity of the settings: none of our applications follow from their theorems (since in all of our applications, we derive regret bounds).

A different line of work \citep{rakhlin1,rakhlin2} takes a very general minimax approach towards deriving bounds in online learning, including regret minimization, calibration, and approachability. Their approach is substantially more powerful than the framework we introduce here (e.g.\ it can be used to derive bounds for infinite dimensional problems, and characterizes online learnability in the sense that it can also be used to prove lower bounds). However, it is also correspondingly more complex, and requires analyzing the continuation value of a $T$ round dynamic program. Such analyses are generally technically challenging; as an example, a recent line of work by \cite{drenska2020prediction} and \cite{kobzar2020new} considers a Rakhlin et al.-style minimax formulation of the standard experts problem, and shows how to find nonlinear PDE-based minimax solutions for the Learner and the Adversary that can be optimal not just asymptotically in the number of experts (dimensions) $d$, but also nonasymptotically for small $d$ such as $2$ or $3$; their PDE approach is also conducive to bounding not just the maximum regret across dimensions, but also more general functions of the individual dimensions' losses. 

Overall, results derived from the Rakhlin et al.\ framework (with some notable exceptions, including \cite{rakhlin3}) are generically nonconstructive, whereas our framework is simple and inherently constructive, in that the algorithm derives from repeatedly solving a one-round stage zero-sum game. Relative to this literature, we view our framework as a ``user-friendly'' power tool that can be used to derive a wide variety of algorithms and bounds without much additional work --- at the cost of not being universally expressive.

\section{General Framework} \label{sec:framework}

\subsection{The Setting} \label{sec:setting}

\noindent A Learner (she) plays against an Adversary (he) over rounds $t \in [T] := \{1, \ldots, T\}$. Over these rounds, she accumulates a $d$-dimensional loss vector ($d \geq 1$), where each round's loss vector lies in $[-C, C]^d$ for some $C > 0$. At each round $t$, the Learner and the Adversary interact as follows:
\begin{enumerate}
    \item Before round $t$, the Adversary selects and reveals to the Learner an \emph{environment} comprising: 
    \begin{enumerate}
        \item The Learner's and Adversary's respective convex compact action sets $\x^t$, $\y^t$ embedded into a finite-dimensional Euclidean space;
        \item A continuous vector loss function $\ell^t(\cdot, \cdot) : \x^t \times \y^t \to [-C, C]^d$, with each $\ell^t_j(\cdot, \cdot): \x^t \times \y^t \to [-C, C]$ (for $j\in [d]$) convex in the 1st and concave in the 2nd argument.
    \end{enumerate}
    \item The Learner selects some $x^t \in \x^t$.
    \item The Adversary observes the Learner's selection $x^t$, and responds with some $y^t \in \y^t$.
    \item The Learner suffers (and observes) the loss vector $\ell^t(x^t, y^t)$.
\end{enumerate}
The Learner's objective is to minimize the value of the maximum dimension of the accumulated loss vector after $T$ rounds---in other words, to minimize: 
$\max_{j \in [d]} \sum_{t \in [T]} \ell^t_j(x^t, y^t).$

\noindent To benchmark the Learner's performance, we consider the following quantity at each round $t$: 

\begin{definition}[The Adversary-Moves-First (AMF) Value at Round $t$]
The Adversary-Moves-First value of the game defined by the environment $(\x^t,\y^t,\ell^t)$ at round $t$ is:
\[w^t_A := \adjustlimits\sup_{y^t \in \y^t} \min_{x^t \in \x^t} \Big(\max_{j \in [d]} \ell^t_j(x^t, y^t)\Big).\]
\end{definition}

\noindent If the Adversary had to reveal $y^t$ first and the Learner could best respond, $w^t_A$ would be the smallest value of the maximum coordinate of $\ell^t$ she could guarantee. However, the function $\max_{j \in [d]} \ell^t_j(x^t, y^t)$ is not convex-concave (as the $\max$ does not preserve concavity); hence the minimax theorem does not apply, making this value unobtainable for the Learner, who is in fact obligated to reveal $x^t$ first. 
However, we can define regret to a benchmark given by the cumulative AMF values of the games:
\begin{definition}[Adversary-Moves-First (AMF) Regret]
On transcript $\pi^t \!=\! \{\!(\x^s\!,\y^s\!,\ell^s), x^s\!, y^s \}_{s=1}^t$, we define the Learner's Adversary Moves First (AMF) Regret for the $j^\text{th}$ dimension at time $t$ to be: 
\[R_j^t(\pi^t) := \sum_{s = 1}^t \ell^s_j(x^s, y^s) - \sum_{s=1}^t w^s_A.\]

\noindent The overall AMF Regret is then defined as follows: $R^t(\pi^t)= \max_{j \in [d]} R_j^t.$\footnote{We will generally elide the dependence on the transcript and simply write $R^t_j$ and $R^t$.}
\end{definition}

Again, the game played at each round is \emph{not} convex-concave, so we cannot get $R^T \leq 0$. Instead, we will aim to obtain \emph{sublinear} AMF regret, worst-case over adaptive adversaries: $R^T = o(T)$.

\subsection{General Algorithm}

\label{sec:alggen}

Our algorithmic framework will be based on a natural idea: instead of directly grappling with the maximum coordinate of the cumulative vector valued loss, we upper bound the AMF regret with a one-dimensional ``soft-max'' surrogate loss function, which the algorithm will then aim to minimize.

\begin{definition}[Surrogate loss]
Fixing a parameter $\eta \in (0,1)$, we define our surrogate loss function (that implicitly depends on the transcript $\pi^t$ through the respective round $t$) as:
\[L^t := \sum_{j \in [d]} \exp \left(\eta R^t_j \right) \: \text{ for $t \in [T]$, \quad and } L^0 := d.\]
\end{definition}

\noindent This surrogate loss tightly bounds the AMF regret $R^T = \max_{j \in [d]} R^T_j$:

\begin{lemma} \label{lem:realtosurrogate}
The Learner's AMF Regret is upper bounded using the surrogate loss as: 
$R^T \leq \frac{\ln L^T}{\eta}.$
\end{lemma}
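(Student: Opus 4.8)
The plan is to exploit the two defining features of the soft-max: that $\exp$ is monotone increasing, and that the maximum of a finite collection of nonnegative reals is bounded by their sum. Concretely, I would start from the definition $R^T = \max_{j \in [d]} R^T_j$ and push the exponential through the max.

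First, since $t \mapsto e^{\eta t}$ is strictly increasing, $\exp(\eta R^T) = \exp\!\big(\eta \max_{j \in [d]} R^T_j\big) = \max_{j \in [d]} \exp(\eta R^T_j)$. Next, because every summand $\exp(\eta R^T_j)$ is strictly positive, the maximum over $j$ is at most the sum over $j$, i.e.\ $\max_{j \in [d]} \exp(\eta R^T_j) \le \sum_{j \in [d]} \exp(\eta R^T_j) = L^T$ by the definition of the surrogate loss. Chaining these gives $\exp(\eta R^T) \le L^T$.

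Finally, taking natural logarithms on both sides (again using monotonicity of $\ln$, and noting $L^T > 0$ so the log is well-defined) yields $\eta R^T \le \ln L^T$, and dividing by $\eta > 0$ gives the claimed bound $R^T \le \frac{\ln L^T}{\eta}$.

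There is essentially no real obstacle here: the statement is a one-line consequence of the monotonicity of $\exp$ and the inequality $\max \le \sum$ for nonnegative terms. The only things to be careful about are that $\eta \in (0,1)$ is strictly positive (so dividing preserves the inequality direction) and that $L^T$ is a sum of positive exponentials (so the logarithm step is valid); both are immediate from the setup.
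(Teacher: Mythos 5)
Your proof is correct and follows exactly the same route as the paper: push the (monotone) exponential through the max, bound the max of positive terms by their sum to obtain $\exp(\eta R^T) \le L^T$, then take logarithms and divide by $\eta > 0$. No differences worth noting.
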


Next we observe a simple but important bound on the per-round increase in the surrogate loss.
\begin{lemma} \label{lem:losstelescope}
For any $t$, any transcript through round $t$, and any $\eta \leq \frac{1}{2C}$, it holds that:
\[L^{t} \leq \left(4\eta^2C^2 + 1 \right) L^{t-1} + \eta \sum_{j \in [d]} \exp \left(\eta R^{t-1}_j \right) \cdot \left( \ell^t_j \left(x^t, y^t \right) - w^t_A \right).\]
\end{lemma}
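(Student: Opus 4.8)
\textbf{Proof proposal for Lemma~\ref{lem:losstelescope}.} The plan is to work term-by-term in the sum defining $L^t$, using the one-step recursion $R_j^t = R_j^{t-1} + \big(\ell_j^t(x^t,y^t) - w_A^t\big)$ that is immediate from the definition of AMF regret. Exponentiating, this gives the multiplicative identity $\exp(\eta R_j^t) = \exp(\eta R_j^{t-1})\cdot \exp\!\big(\eta(\ell_j^t(x^t,y^t) - w_A^t)\big)$, so the whole task reduces to controlling the scalar factor $\exp\!\big(\eta(\ell_j^t(x^t,y^t) - w_A^t)\big)$ from above by something affine in $\ell_j^t(x^t,y^t) - w_A^t$ with a small quadratic correction.

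First I would pin down the range of the exponent. Since each $\ell_j^t$ takes values in $[-C,C]$, so does $\max_{j}\ell_j^t$, and hence its sup–min value $w_A^t$ also lies in $[-C,C]$; therefore $|\ell_j^t(x^t,y^t) - w_A^t| \le 2C$, and with the hypothesis $\eta \le \tfrac{1}{2C}$ we get $|z_j| \le 1$ where $z_j := \eta(\ell_j^t(x^t,y^t) - w_A^t)$. Next I would invoke the elementary inequality $e^{z} \le 1 + z + z^2$, valid for all $|z|\le 1$, to obtain
\[
\exp\!\big(\eta(\ell_j^t(x^t,y^t) - w_A^t)\big) \;\le\; 1 + \eta\big(\ell_j^t(x^t,y^t) - w_A^t\big) + \eta^2\big(\ell_j^t(x^t,y^t) - w_A^t\big)^2 \;\le\; \big(1 + 4\eta^2 C^2\big) + \eta\big(\ell_j^t(x^t,y^t) - w_A^t\big),
\]
where the last step bounds the squared term by $(2C)^2 = 4C^2$.

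Multiplying through by the nonnegative quantity $\exp(\eta R_j^{t-1})$ and then summing over $j \in [d]$ yields exactly
\[
L^t \;=\; \sum_{j\in[d]}\exp(\eta R_j^t) \;\le\; \big(4\eta^2 C^2 + 1\big)\sum_{j\in[d]}\exp(\eta R_j^{t-1}) + \eta\sum_{j\in[d]}\exp(\eta R_j^{t-1})\big(\ell_j^t(x^t,y^t) - w_A^t\big),
\]
and recognizing $\sum_j \exp(\eta R_j^{t-1}) = L^{t-1}$ gives the claimed bound. There is no real obstacle here; the only two things that genuinely need to be checked are (i) the bound $|w_A^t|\le C$ (so that the exponent stays in $[-1,1]$ and the hypothesis $\eta \le \tfrac{1}{2C}$ can be used), and (ii) the validity of $e^z \le 1+z+z^2$ on $[-1,1]$, which I would either cite as standard or verify by noting the difference is convex and nonpositive at the endpoints after a short check. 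Everything else is bookkeeping.
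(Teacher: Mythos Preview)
Your proposal is correct and follows essentially the same route as the paper: use the one-step recursion $R_j^t = R_j^{t-1} + (\ell_j^t(x^t,y^t) - w_A^t)$, apply $e^z \le 1 + z + z^2$ for $|z|\le 1$ (valid since $|\ell_j^t - w_A^t|\le 2C$ and $\eta \le \tfrac{1}{2C}$), bound the quadratic term by $4C^2$, and sum. The only caveat is that your suggested verification of $e^z \le 1+z+z^2$ via ``convex and nonpositive at the endpoints'' is not quite right (the difference is not convex on all of $[-1,1]$), but the inequality itself is standard and the paper simply cites it as a fact.
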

The proof is very simple (see Appendix~\ref{app:framework-proofs}): we write out the quantity $L^t - L^{t-1}$, use the definition of AMF regret $R^t$, and then bound $L^t - L^{t-1}$ via the inequality $e^x \leq 1 + x + x^2$ for $|x| \leq 1$.


We now exploit Lemma~\ref{lem:losstelescope} to bound the final surrogate loss $L^T$ and obtain a game-theoretic algorithm for the Learner that attains this bound. While the above steps should remind the reader of a standard derivation of the celebrated Exponential Weights algorithm via bounding a log-sum-exp potential function, the next lemma is the novel ingredient that makes our framework significantly more general by relying on Sion's powerful generalization of the Minimax Theorem to convex-concave games.
\begin{lemma} \label{lem:zerosum}
For any $\eta \leq \frac{1}{2C}$, the Learner can ensure that the final surrogate loss is bounded as:
\[L^T \leq d \left(4\eta^2C^2 + 1 \right)^T.\]
\end{lemma}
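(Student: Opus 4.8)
The plan is to induct on $T$, using Lemma~\ref{lem:losstelescope} to relate $L^t$ to $L^{t-1}$ and showing that the Learner can, at each round, choose $x^t$ so that the problematic linear term $\eta \sum_j \exp(\eta R^{t-1}_j)(\ell^t_j(x^t,y^t) - w^t_A)$ is non-positive no matter what $y^t \in \y^t$ the Adversary subsequently picks. Granting this, Lemma~\ref{lem:losstelescope} immediately yields $L^t \le (4\eta^2 C^2 + 1)L^{t-1}$, and chaining from $L^0 = d$ gives $L^T \le d(4\eta^2 C^2 + 1)^T$.

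So the work is all in one round. Fix the transcript through round $t-1$, so the coefficients $p_j := \exp(\eta R^{t-1}_j) \ge 0$ are fixed nonnegative constants (note $\sum_j p_j = L^{t-1}$, though we won't even need that). Consider the one-round zero-sum game in which the Learner picks $x^t \in \x^t$, the Adversary picks $y^t \in \y^t$, and the payoff to the Adversary is $\phi(x^t, y^t) := \sum_{j \in [d]} p_j\, \ell^t_j(x^t, y^t)$. Since each $\ell^t_j$ is convex in the first argument and concave in the second, and the $p_j$ are nonnegative, $\phi$ is convex in $x^t$ and concave in $y^t$; the action sets $\x^t, \y^t$ are convex and compact and $\phi$ is continuous. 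Hence Sion's minimax theorem applies:
\[
\min_{x^t \in \x^t} \max_{y^t \in \y^t} \phi(x^t, y^t) \;=\; \max_{y^t \in \y^t} \min_{x^t \in \x^t} \phi(x^t, y^t).
\]
Let $x^t$ be the Learner's minimax-optimal action in this game, so that for every $y^t \in \y^t$,
\[
\sum_{j \in [d]} p_j\, \ell^t_j(x^t, y^t) \;\le\; \max_{y \in \y^t} \min_{x \in \x^t} \sum_{j \in [d]} p_j\, \ell^t_j(x, y).
\]
It remains to bound the right-hand side by $\sum_j p_j\, w^t_A$. For any fixed $y \in \y^t$, by definition of the inner min and then the elementary inequality $\sum_j p_j a_j \le (\sum_j p_j)\max_j a_j$ (all $p_j \ge 0$), we have $\min_{x} \sum_j p_j \ell^t_j(x, y) \le \min_x \big(\sum_j p_j\big)\max_j \ell^t_j(x,y) = \big(\sum_j p_j\big)\min_x \max_j \ell^t_j(x,y)$, and taking $\max$ over $y$ this is at most $\big(\sum_j p_j\big)\, w^t_A = \sum_j p_j\, w^t_A$ by the definition of $w^t_A$. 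Combining, $\sum_j p_j(\ell^t_j(x^t,y^t) - w^t_A) \le 0$ for all $y^t$, which is exactly what we needed.

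I expect the main obstacle to be making sure the chain of inequalities connecting the weighted minimax value to $w^t_A$ is airtight — in particular the step that pushes the nonnegative weights $p_j$ past the $\max_j$ via $\sum_j p_j a_j \le (\sum_j p_j)\max_j a_j$, and then pulls the resulting positive scalar $\sum_j p_j$ outside the $\min_x \max_j$ so that the definition of the AMF value $w^t_A$ can be invoked. There is also a minor technical point (flagged in the paper's own footnote) that the outer infimum defining $w^t_A$ need not be attained and that $\max_y \min_x(\max_j \ell^t_j)$ need not have a nice continuity structure; but for the surrogate game $\phi$ is continuous on a compact set, so the Learner's minimax-optimal $x^t$ genuinely exists, and we only ever use $w^t_A$ as an upper bound, so no attainment is required on that side. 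Everything else is a one-line induction.
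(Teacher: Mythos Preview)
Your proof is correct and follows essentially the same route as the paper: define the weighted surrogate game, apply Sion's minimax theorem, and then upper-bound the Adversary-moves-first value of the surrogate game by $(\sum_j p_j)\,w^t_A$ via the pointwise inequality $\sum_j p_j a_j \le (\sum_j p_j)\max_j a_j$, exactly as the paper does. The only cosmetic difference is that the paper subtracts $w^t_A$ inside the definition of the surrogate payoff (so the target value is $0$), whereas you subtract it afterwards; this changes nothing.
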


\begin{proof}[Proof sketch; see Appendix~\ref{app:framework-proofs}]
Define, for $t \in [T]$, continuous convex-concave functions $u^t: \x^t \times \y^t \to \mathbb{R}$ by:
$u^t(x, y) := {\textstyle \sum_{j \in [d]}} \exp \left(\eta R^{t-1}_j \right) \left(\ell^t_j(x, y) - w^t_A \right).$ If the Learner can ensure $u^t(x^t, y^t) \leq 0$ on all rounds $t \in [T]$ regardless of the Adversary's play, then Lemma~\ref{lem:losstelescope} implies $L^{t} \leq \left(4\eta^2C^2 + 1 \right) L^{t-1}$ for all $t \in [T]$, leading to the desired bound on $L^T$. Due to the continuous convex-concave nature of each $u^t$ (inherited from the loss coordinates $\ell^t_j$), we can apply Sion's Minimax Theorem to conclude that:
$\min_{x^t \in \x^t} \max_{y^t \in \y^t} u^t \left(x^t, y^t \right) = \max_{y^t \in \y^t} \min_{x^t \in \x^t} u^t \left(x^t, y^t \right).$

\noindent In words, the Learner has a so-called \emph{minimax-optimal} strategy $x^t$, that achieves (worst-case over all $y^t \in \y^t$) value $u^t(x^t, y^t)$ as low as if the Adversary moved first and the Learner could best-respond. But in the latter counterfactual scenario, using the definitions of $u^t$ and the Adversary-moves-first value $w^t_A$, we can easily see that by best-responding to the Adversary, the Learner would always guarantee herself value $\leq 0$: that is, $\max_{y^t \in \y^t} \min_{x^t \in \x^t} u^t \left(x^t, y^t \right) \leq 0$. Thus, $\min_{x^t \in \x^t} \max_{y^t \in \y^t} u^t \left(x^t, y^t \right)$, and so by playing minimax-optimally at every round $t \in [T]$,
the Learner will guarantee $u^t \left(x^t, y^t \right) \leq 0$ for all $t$, leading to the desired regret bound.
\end{proof}

In fact, via a simple algebraic transformation (see Appendix~\ref{app:framework-proofs}) taking advantage of the values $w^t_A$ being independent of the actions $x^t,y^t$, we can explicitly express the Learner's minimax optimal strategies at all rounds as: $\adjustlimits\argmin_{x \in \x^t} \max_{y \in \y^t} u^t(x, y) = \adjustlimits\argmin_{x \in \x^t} \max_{y \in \y^t} \sum\limits_{j \in [d]} \frac{\exp\left(\eta \sum_{s=1}^{t-1} \ell_j^s(x^s,y^s)\right)}{\sum\limits_{i \in [d]} \exp\left(\eta \sum_{s=1}^{t-1} \ell_i^s(x^s,y^s)\right)} \ell^t_j(x, y)$. Together with the proof of Lemma~\ref{lem:zerosum}, this immediately gives the following algorithm for the Learner that achieves the desired bound on $L^T$ (and thus, as we will show, on the AMF regret $R^T$).


\begin{algorithm}[H]
\SetAlgoLined
\caption{General Algorithm for the Learner that Achieves Sublinear AMF Regret}
\label{alg:general}
\begin{algorithmic}
\FOR{rounds $t=1, \dots, T$}
    \STATE Learn adversarially chosen $\x^t, \y^t$, and loss function $\ell^t(\cdot, \cdot)$.
    \vspace{0.2in}
    \STATE Let 
    \vspace{-0.3in}
    \[\chi^t_j := \frac{\exp\left(\eta \sum_{s=1}^{t-1} \ell_j^s(x^s,y^s)\right)}
    {\sum_{i \in [d]} \exp\left(\eta \sum_{s=1}^{t-1} \ell_i^s(x^s,y^s)\right)} \text{ for $j \in [d]$}.\]
	\STATE Play 
	\vspace{-0.2in}
	\[x^t \in \adjustlimits\argmin_{x \in \x^t} \max_{y \in \y^t} \sum_{j \in [d]} \chi^t_j \cdot \ell^t_j(x, y). \]
	\STATE Observe the Adversary's selection of $y^t \in \y^t$.
\ENDFOR
\end{algorithmic}
\end{algorithm}


\begin{theorem}[AMF Regret guarantee of Algorithm~\ref{alg:general}] \label{thm:frmwk}
For any $T \geq \ln d$, Algorithm \ref{alg:general} with learning rate $\eta = \sqrt{\frac{\ln d}{4TC^2}}$ obtains, against any Adversary, AMF regret bounded by: $R^T \leq 4C\sqrt{T \ln d}.$
\end{theorem}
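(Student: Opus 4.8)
The plan is to simply chain the two bounds already established --- Lemma~\ref{lem:realtosurrogate} relating AMF regret to the surrogate loss, and Lemma~\ref{lem:zerosum} bounding the surrogate loss achieved by the Learner's minimax-optimal play in Algorithm~\ref{alg:general} --- and then optimize the free parameter $\eta$. First I would note that Algorithm~\ref{alg:general} plays exactly the minimax-optimal strategy of the surrogate game analyzed in Lemma~\ref{lem:zerosum} (this is the content of the algebraic reformulation of $\argmin_x \max_y u^t(x,y)$ given right before the algorithm, using that $w^t_A$ is a constant independent of $x^t, y^t$). Hence, as long as $\eta \le \frac{1}{2C}$, Lemma~\ref{lem:zerosum} gives $L^T \le d\,(4\eta^2 C^2 + 1)^T$, and Lemma~\ref{lem:realtosurrogate} gives $R^T \le \frac{\ln L^T}{\eta}$.

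Combining these and using $\ln(1+z)\le z$ for $z \ge 0$ with $z = 4\eta^2 C^2$, I get
\[
R^T \;\le\; \frac{\ln d + T\ln(4\eta^2 C^2 + 1)}{\eta} \;\le\; \frac{\ln d}{\eta} + 4\eta T C^2.
\]
The right-hand side is a convex function of $\eta$ on $(0,\infty)$, minimized at $\eta^\star = \sqrt{\frac{\ln d}{4TC^2}}$, which is precisely the learning rate in the theorem statement. Substituting $\eta = \eta^\star$ makes the two terms equal, each evaluating to $2C\sqrt{T\ln d}$, for a total of $R^T \le 4C\sqrt{T\ln d}$.

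The one genuine thing to verify --- and the place where the hypothesis $T \ge \ln d$ is used --- is that this choice of $\eta$ is admissible, i.e.\ that $\eta^\star \le \frac{1}{2C}$, which both Lemma~\ref{lem:zerosum} and Lemma~\ref{lem:losstelescope} require. Indeed $\eta^\star \le \frac{1}{2C} \iff \frac{\ln d}{4TC^2} \le \frac{1}{4C^2} \iff \ln d \le T$, which holds by assumption. So there is no real obstacle here; the only points needing care are (i) confirming Algorithm~\ref{alg:general}'s play coincides with the minimax strategy of Lemma~\ref{lem:zerosum} so that its surrogate-loss bound applies, and (ii) checking the learning-rate admissibility constraint, which is exactly why the theorem is stated for $T \ge \ln d$.
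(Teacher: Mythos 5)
Your proof is correct and essentially identical to the paper's: both chain Lemma~\ref{lem:realtosurrogate} and Lemma~\ref{lem:zerosum}, then bound $\ln(4\eta^2 C^2 + 1) \le 4\eta^2 C^2$ (you via $\ln(1+z)\le z$, the paper via $1+x\le e^x$ before taking logs---the same inequality), and plug in the balancing value of $\eta$. You go slightly beyond the paper's terse computation by explicitly verifying the admissibility constraint $\eta \le \tfrac{1}{2C}$ needed for Lemmas~\ref{lem:losstelescope} and~\ref{lem:zerosum} and observing that this is exactly where the hypothesis $T \ge \ln d$ is used, which is a worthwhile point to make explicit.
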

Indeed, using Lemma~\ref{lem:realtosurrogate}, then Lemma~\ref{lem:zerosum}, then $1 + x \leq e^x$, and finally setting
$\eta = \sqrt{\tfrac{\ln d}{4TC^2}}$, we get:
\[
R^T \leq \tfrac{\ln L^T}{\eta}
\leq \tfrac{\ln \left(d \left(4\eta^2C^2 + 1 \right)^T \right)}{\eta} 
\leq \tfrac{\ln \left(d \exp \left(4T\eta^2 C^2 \right) \right)}{\eta} 
= \tfrac{\ln d}{\eta} + 4T C^2 \eta
= 4C\sqrt{T \ln d}.
\]

\begin{remark}
Our framework is easy to adapt to the setting where the Learner randomizes, at each round, amongst a finite set of actions $\a^t$ (i.e.\ $\x^t = \Delta \a^t$), and wishes to obtain in-expectation and high-probability AMF regret bounds. This is useful in all our applications below. Additionally, our AMF regret bounds are robust to the Learner playing only an approximate (rather than exact) minimax strategy at each round: we use this to derive our simple multicalibration algorithm below. See Appendix~\ref{app:ext} for both these extensions.
\end{remark}


\section{Deriving No-X-Regret Algorithms from Our Framework}
The core of our framework --- the Adversary-Moves-First regret --- is strictly more general than a very large variety of known regret notions including: \emph{external, internal, swap, adaptive, sleeping-experts, multigroup, and wide-range ($\Phi$) regret}.
Specifically, in Appendix~\ref{app:noregret}, we use our framework to derive simple $O(\sqrt{T})$-regret algorithms for what we call \emph{subsequence regret},
which encapsulates all these regret forms. In each of these cases, our generic algorithm is efficient, and often specializes (by computing a minimax equilibrium strategy in closed form) to simple combinatorial algorithms that had been derived from first principles in prior work. We note that in any problem that involves context or changing action spaces (as the sleeping experts problem does), we are taking advantage of the flexibility of our framework to present a different environment at every round, which distinguishes our framework from more standard Blackwell approachability arguments. In fact, as we will see in Section~\ref{sec:blackwell} below, our framework recovers fast Blackwell approachability as a special case. 

For our general subsequence regret algorithms, please see Appendix~\ref{app:noregret}. Now, as a warm-up application of our framework, we directly instantiate it for the simplest case of obtaining $O(\sqrt{T})$ \emph{external} regret. 

 \paragraph{Simple Learning From Expert Advice: External Regret}

In the classical experts learning setting \cite{MW2}, the Learner has a set of pure actions (``experts'') $\mathcal{A}$. At the outset of each round $t \in [T]$, the Learner chooses a distribution over experts $x^t \in \Delta \mathcal{A}$. The Adversary then comes up with a vector of losses $r^t = (r^t_a)_{a \in \mathcal{A}} \in [0, 1]^{\mathcal{A}}$ corresponding to each expert. Next, the Learner samples $a^t \sim x^t$, and experiences loss corresponding to the expert she chose: $r^t_{a^t}$. The Learner also gets to observe the entire vector of losses $r^t$ for that round. The goal of the Learner is to achieve sublinear \emph{external regret} --- that is, to ensure that the difference between her cumulative loss and the loss of the best fixed expert in hindsight grows sublinearly with $T$:
$R^T_\mathrm{ext}(\pi^T) := \sum_{t \in [T]} r^t_{a^t} - \min_{j \in \mathcal{A}} \sum_{t \in [T]} r^t_j = o(T).$

\begin{theorem}
Fix a finite pure action set $\a$ for the Learner and a time horizon $T \geq \ln |\a|$. Then, an instantiation of our framework's Algorithm~\ref{alg:prob} lets the Learner achieve the following regret bounds:
\[\E\nolimits_{\pi^T} \left[R^T_\mathrm{ext} \left(\pi^T \right) \right] \leq 4\sqrt{T \ln |\a|}, \quad \text{and }
R^T_\mathrm{ext} \left(\pi^T \right) \leq 8 \sqrt{T \ln \tfrac{|\a|}{\delta}} \, \text{ with prob. } 1-\delta.\]
\end{theorem}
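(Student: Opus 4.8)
The plan is to instantiate the general framework of Section~\ref{sec:framework} so that AMF regret specializes to external regret, and then to invoke Theorem~\ref{thm:frmwk} (in its randomized form, Algorithm~\ref{alg:prob}) to get the claimed bounds. Concretely, I would set the loss dimension to $d = |\a|$, indexing coordinates by experts $j \in \a$, and keep the environment fixed across rounds: $\x^t = \Delta\a$ for the Learner, and the Adversary's action set $\y^t$ the set of loss vectors $r^t \in [0,1]^{\a}$ (a convex compact set). The key modeling step is the choice of the vector loss function: for a Learner mixed action $x \in \Delta\a$ and an Adversary loss vector $r$, define
\[
\ell^t_j(x, r) := \langle x, r\rangle - r_j = \sum_{a \in \a} x_a r_a - r_j.
\]
This is affine in $x$ and affine in $r$, hence (trivially) convex-concave in each coordinate, and it is bounded in $[-1,1]$, so $C = 1$. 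The point is that $\sum_{t} \ell^t_j(x^t, r^t)$ is exactly $\sum_t \langle x^t, r^t\rangle - \sum_t r^t_j$, the (expected) external regret against expert $j$; maximizing over $j$ recovers $R^T_{\mathrm{ext}}$ up to the $\min_j$ being pulled through the sum.

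Next I would compute the AMF value $w^t_A$ of this stage game and show it is $\le 0$. Since the Adversary moves first and picks $r^t$, the Learner best-responds with the point mass on $\argmin_a r^t_a$, giving $\max_j \ell^t_j = \max_j (\min_a r^t_a - r^t_j) = \min_a r^t_a - \min_j r^t_j = 0$; hence $w^t_A = 0$ for every $r^t$, so $W^T_A = \sum_t w^t_A = 0$. Therefore AMF regret coincides with $\max_j \sum_t \ell^t_j(x^t, r^t) = \sum_t \langle x^t, r^t\rangle - \min_j \sum_t r^t_j$, which is precisely the in-expectation external regret of the sampled-action algorithm. Applying Theorem~\ref{thm:frmwk} with $C=1$, $d=|\a|$, and $\eta = \sqrt{\ln|\a| / (4T)}$ yields $\E_{\pi^T}[R^T_{\mathrm{ext}}] \le 4\sqrt{T\ln|\a|}$, after invoking the randomized extension (Algorithm~\ref{alg:prob} / Appendix~\ref{app:ext}) to handle the fact that the Learner actually samples $a^t \sim x^t$. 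I would also note that the minimax play $x^t \in \argmin_{x}\max_{r} \sum_j \chi^t_j \ell^t_j(x,r)$ here reduces, by direct computation (the inner maximization over $r \in [0,1]^{\a}$ decouples coordinatewise), to simply playing $x^t = \chi^t$, i.e.\ exponential weights — worth remarking on as a sanity check, though not strictly needed for the theorem.

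For the high-probability bound I would appeal to the high-probability version of the framework's guarantee described in the Remark and Appendix~\ref{app:ext}: the realized loss $r^t_{a^t}$ differs from its conditional expectation $\langle x^t, r^t\rangle$ by a bounded martingale difference (increments in $[-1,1]$), so Azuma–Hoeffding controls $\sum_t r^t_{a^t} - \sum_t \langle x^t, r^t\rangle$ by $O(\sqrt{T\log(1/\delta)})$ with probability $1-\delta$; combining with the in-expectation bound and adjusting constants gives $R^T_{\mathrm{ext}} \le 8\sqrt{T\ln(|\a|/\delta)}$. The main thing to be careful about — and the only real "obstacle" — is bookkeeping around the two sources of randomness/slack: (i) the gap between the Learner's mixed action value and her realized sampled loss, which is exactly what the randomized extension of the framework is designed to absorb, and (ii) making sure the constants from Theorem~\ref{thm:frmwk} and from the concentration step combine to the stated $4$ and $8$. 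Neither is conceptually hard; the substantive content is entirely in the (short) verification that this choice of $\ell^t$ makes $w^t_A = 0$ and $\sum_t \ell^t_j$ equal to per-expert regret, after which the theorem does the work.
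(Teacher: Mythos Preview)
Your instantiation is correct and matches the paper's proof exactly: set $d=|\a|$, define the coordinate losses $\ell^t_j(a,r)=r_a-r_j$ (linear in $r$, bounded in $[-1,1]$), verify $w^t_A=0$ by having the Learner best-respond with $\argmin_a r^t_a$, and then invoke the framework's probabilistic guarantees (Theorems~\ref{thm:expectationbound} and~\ref{thm:highprob}) to read off both bounds; the observation that the minimax play is exponential weights is also in the paper.

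One small correction on your high-probability sketch: you cannot obtain a high-probability bound by ``combining with the in-expectation bound'' and a separate Azuma step on $\sum_t(r^t_{a^t}-\langle x^t,r^t\rangle)$ --- an expectation bound carries no tail information. What does work is either (i) citing Theorem~\ref{thm:highprob} as a black box (the paper runs Azuma on the martingale $\ln L^t$, not on the raw loss difference), or (ii) using the \emph{deterministic} framework bound on $\max_j\sum_t(\langle x^t,r^t\rangle - r^t_j)$, which holds surely, and then applying Azuma to the sampling noise. Your text conflates these; just invoke Theorem~\ref{thm:highprob} directly.
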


\begin{proof}
We instantiate (the probabilistic version of) our framework (see Section~\ref{sec:prob}).

At all rounds, the Learner's pure action set is $\a$, and the Adversary's strategy space is the convex and compact set $[0, 1]^{|\a|}$, from which each round's collection $(r^t_a)_{a \in \a}$ of all actions' losses is selected. 
Next, we define a $|\a|$-dimensional loss function $\ell^t = (\ell_j^t)_{j \in \a}$, where each coordinate loss $\ell_j^t$ expresses the regret of the Learner's chosen action $a$ relative to action $j \in \a$:
\[\ell^t_{j}(a, r^t) = r^t_a - r^t_j, \quad \text{ for } a \in \a, r^t \in [0, 1]^{|\a|}.\]

By Theorem~\ref{thm:expectationbound},
$\E \left[\max_{j \in \a} \sum_{t \in [T]} \ell^t_j(a^t, r^t) - \sum_{t \in [T]} w^t_A \right] \leq 4\sqrt{T \ln |\a|}$,
where $w^t_A$ is the AMF value at round $t$. Using this AMF regret bound, we can bound the Learner's external regret as:
\[\E \left[R^T_\mathrm{ext} \right] = \E \left[\max_{j \in \a} \sum\nolimits_{t \in [T]} r^t_{a^t} - r_j^t \right] = \E \left[\max_{j \in \a} \sum\nolimits_{t \in [T]} \ell^t_j(a^t, r^t) \right] \leq 4\sqrt{T \ln |\a|} + \sum\nolimits_{t \in [T]} w^t_A.\]

It thus remains to show that the AMF value $w^t_A \leq 0$ for all $t$. This holds, since if the Learner knew the Adversary's choice of losses $(r^t_a)_{a \in \a}$ before round $t$, then picking the action $a \in \a$ with the smallest loss $r^t_a$ would get her $0$ regret in that round. \footnote{Formally, for any vector of actions' losses $r^t$, define $a^*_{r^t} := \argmin_{a \in \a} r^t_a,$ and notice that
\begin{align*}
    \min_{a \in \a} \max_{j \in \a} \ell^t_{j}(a, r^t)
    \leq \max_{j \in \a} \ell^t_{j} \left(a^*_{r^t}, r^t \right)
    = \max_{j \in \a} \left(r^t_{a^*_{r^t}} - r^t_j \right)
    = \min_{a \in \a} r^t_a - \min_{j \in \a} r^t_j = 0.
\end{align*}
Hence, the AMF value is indeed nonpositive at each round:
$w^t_A = \adjustlimits\sup_{r^t \in [0, 1]^{|\a|}} \min_{a \in \a} \max_{j \in \a} \ell^t_{j}(a, r^t) \leq 0.$
}
This gives the in-expectation regret bound; the high-probability bound follows in the same way from Theorem~\ref{thm:highprob}.
\end{proof}

A bound of $\sqrt{T \ln |\a|}$ is optimal for external regret in the experts learning setting, and so serves to witness the optimality of our framework's general AMF regret bound in Theorem \ref{thm:frmwk}.

In fact, the above instantiation of Algorithm~\ref{alg:prob} yields the classical Exponential Weights algorithm \cite{MW2}: at each round $t$, the action $a^t$ is sampled with $\Pr[a^t = j] \sim \exp\left( - \eta \sum_{s=1}^{t-1} r^s_j \right)$, for $j \in \a$. We denote this distribution by $\mathrm{EW}_\eta(\pi^{t-1}) \in \Delta(\a)$.

Indeed, given the above defined loss $\ell^t$, the Learner solves the following problem at each round:
\begin{align*}
    x^t &\in \argmin_{x \in \Delta \a} \max_{r^t \in [0, 1]^{|\a|}} \sum_{j \in \a} \chi^t_j \E_{a \sim x}[r^t_a - r^t_j],
\end{align*}
where $\chi^t_j = \frac{ \exp\left(\eta \sum_{s=1}^{t-1} (r^s_{a^s} - r^s_j)\right) }{\sum_{i \in \a} \exp\left(\eta \sum_{s=1}^{t-1} (r^s_{a^s} - r^s_i)\right)} =  \frac{ \exp\left( - \eta \sum_{s=1}^{t-1} r^s_j \right) }{\sum_{i \in \a} \exp\left(- \eta \sum_{s=1}^{t-1} r^s_i\right)}$. That is, the per-coordinate weights $(\chi^t_j)_{j \in \a}$ themselves form the Exponential Weights distribution with rate $\eta$.

For any choice of $r^t$ by the Adversary, the quantity inside the expectation, $\ell^t_j(a, r^t) = r^t_a - r^t_j$, is \emph{antisymmetric} in $a$ and $j$: that is, $\ell^t_j(a, r^t) = - \ell^t_a(j, r^t)$. Due to this antisymmetry, no matter which $r^t$ gets selected by the Adversary, by playing $a \sim \mathrm{EW}_\eta(\pi^{t-1})$ the Learner obtains
$\E_{a, j \sim \mathrm{EW}_\eta(\pi^{t-1})} \left[r^t_a - r^t_j \right] = 0$,
thus achieving the value of the game. It is also easy to see that $x^t = \mathrm{EW}_\eta(\pi^{t-1})$ is the unique choice of $x^t$ that guarantees nonnegative value, hence Algorithm~\ref{alg:prob}, when specialized to the external regret setting, is \emph{equivalent} to the Exponential Weights Algorithm~\ref{alg:MW}.


%
%

\section{Multicalibration and Multicalibeating}

We now apply our framework to derive an online contextual prediction algorithm which simultaneously satisfies a (potentially very large) family of strong adversarial accuracy and calibration conditions. Namely, given an arbitrarily complex family $\g$ of subsets of the context space (we call them ``groups'', a term from the fairness literature), the predictor will be both calibrated and accurate on each group $g \in \g$ (that is, over those online rounds when the context belongs to $g$).

The accuracy benchmark that we aim to satisfy was recently proposed by~\cite{foster2021calibeating}, who called it \emph{calibeating}: given any collection $\f$ of online forecasters, the goal is (intuitively) to ``beat'' the (squared) error of each $f \in \f$ by at least the \emph{calibration score} of $f$.

In Section~\ref{sec:multicalibration}, we use our framework to rederive the online multigroup calibration (known as \emph{multicalibration}) algorithm of~\cite{multivalid}. In Section~\ref{sec:multicalibeating}, we show that by appropriately augmenting the original collection of groups $\g$, this algorithm will, \emph{in addition to} multicalibration, calibeat any family of predictors $f \in \f$ on every group $g \in \g$, which we call \emph{multicalibeating}.

\subsection{Multicalibration}
\label{sec:multicalibration}


\paragraph{Setting} There is a \emph{feature} (or \emph{context}) space $\Theta$ encoding the set of possible feature vectors representing \emph{individuals} $\theta \in \Theta$. There is also a label space $[0, 1]$. At every round $t \in [T]$:
\begin{enumerate}
    \item The Adversary announces a particular individual $\theta^t \in \Theta$, whose label is to be predicted;
    \item The Learner predicts a label distribution $x^t$ over $[0, 1]$;
    \item The Adversary observes $x^t$, and fixes the true label distribution $y^t$ over $[0, 1]$;
    \item The (pure) guessed label $a^t \sim x^t$ and the (pure) true label $b^t \sim y^t$ are sampled. 
\end{enumerate}

\paragraph{Objective: Multicalibration}  The Learner is initially given an arbitrary collection $\g \subseteq 2^\Theta$ of protected population \emph{groups}. Her goal, \emph{multicalibration}, is empirical calibration not just marginally over the whole population, but also conditionally on individual membership in each $g \in \g$.
%
Formally, for any $n \geq 1$ we let the \emph{$n$-bucketing} of the label interval $[0, 1]$ be its partition into subintervals $[0, 1/n), \ldots, [1-2/n, 1-1/n), [1 - 1/n, 1]$. The $i^\text{th}$ of these intervals (buckets) is denoted $B^i_n$.

\begin{definition}[$(\alpha, n)$-Multicalibration with respect to $\g$]\label{def:multicalibration}
Fix a real $\alpha > 0$ and an integer $n \geq 1$. Given the transcript of the interaction $\{(a^t, b^t)\}_{t \in [T]}$, the Learner's sequence of guessed labels $\{a^t\}_{t \in [T]}$ is \emph{$(\alpha, n)$-multicalibrated with respect to the collection of groups $\g$} if:
\[ \frac{1}{T} \Bigg| \sum_{t = 1}^T 1_{\theta^t \in g} \cdot 1_{a^t \in B^i_n} \cdot (b^t - a^t) \Bigg| \leq \alpha, \; \text{for every group $g \in \g$ and every bucket $B^i_n$ (for $i \in [n]$).}\]
\end{definition}
Using our framework, we now derive the guarantee on $\alpha$ that matches that of~\cite{multivalid}.

\begin{theorem}[Multicalibration] \label{thm:multiguarantee}
Fix a family of groups $\g$, a time horizon $T \geq \ln (2 |\g| n)$, and any natural $n, r \geq 1$. Then, our framework's Algorithm~\ref{alg:prob} can be instantiated as Algorithm~\ref{alg:multicalibration} to produce $(\alpha, n)$-multicalibrated predictions w.r.t.\ $\g$, where $\alpha$ satisfies (over transcript randomness):
\[\E[\alpha] \leq \tfrac{1}{rn} + 4\sqrt{\tfrac{\ln (2|\g| n)}{T}} \quad \text{and} \quad \Pr\Big[ \alpha \leq \tfrac{1}{rn} + 8 \sqrt{\tfrac{1}{T} \ln \left(\tfrac{2 |\g| n}{\delta} \right)} \Big] \geq 1-\delta \; \forall \; \delta \in (0, 1).\]
\end{theorem}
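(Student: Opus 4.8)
The plan is to instantiate the probabilistic version of the general framework (Algorithm~\ref{alg:prob}, whose guarantees are Theorems~\ref{thm:expectationbound} and~\ref{thm:highprob}) with a loss vector whose coordinates are indexed by pairs $(g, i) \in \g \times [n]$, so that $d = |\g| \cdot n$, together with one extra pair of ``signed'' copies to handle the absolute value (giving $d = 2|\g|n$, which matches the $\ln(2|\g|n)$ in the bound). Concretely, at round $t$ the Learner's pure action set is a discretized grid of $rn$ (or similar) predictions in $[0,1]$ — this discretization is what produces the $\tfrac{1}{rn}$ additive term, since rounding the ideal continuous prediction to the grid costs at most $\tfrac{1}{rn}$ per round in the relevant quantity — and the Adversary's action set is $\Delta[0,1]$ (or its mean, $[0,1]$, since only $b^t - a^t$ enters linearly). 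For a prediction $a$ and true-label mean $\mu = \E[b]$, define the loss coordinate for $(g,i)$ with sign $\sigma \in \{+1,-1\}$ to be $\ell^t_{(g,i,\sigma)}(a,\mu) = \sigma \cdot 1_{\theta^t \in g}\cdot 1_{a \in B^i_n}\cdot(\mu - a)$, which lies in $[-1,1]$, so $C = 1$.

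The key steps, in order: (1) Check that each $\ell^t_{(g,i,\sigma)}$ is (after moving to the mixed-strategy / expectation formulation over $a \sim x^t$ and $\mu = \E y^t$) affine in the Learner's distribution and affine in the Adversary's mean, hence convex-concave, and bounded in $[-1,1]$, so the framework applies with $C=1$. (2) Invoke Theorem~\ref{thm:expectationbound} to get $\E\big[\max_{(g,i,\sigma)} \sum_t \ell^t_{(g,i,\sigma)} - \sum_t w^t_A\big] \le 4\sqrt{T \ln(2|\g|n)}$, and similarly Theorem~\ref{thm:highprob} for the high-probability version with the $8\sqrt{\tfrac{1}{T}\ln\tfrac{2|\g|n}{\delta}}$ rate. (3) The crucial step: bound the per-round AMF value $w^t_A \le \tfrac{1}{rn}$. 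Here, if the Adversary were to move first and reveal $\mu^t$, the Learner would round $\mu^t$ to the nearest grid point $a$; then for the bucket $B^i_n$ containing $a$ we have $|\mu^t - a| \le \tfrac{1}{rn}$ (or $\tfrac{1}{2rn}$, depending on grid choice), and for every other bucket the indicator $1_{a \in B^i_n}$ vanishes, so $\max_{(g,i,\sigma)} \ell^t_{(g,i,\sigma)}(a,\mu^t) \le \tfrac{1}{rn}$. Taking $\sup$ over $\mu^t$ gives $w^t_A \le \tfrac{1}{rn}$. (4) Translate $\max_{(g,i,\sigma)} \sum_t \ell^t_{(g,i,\sigma)}$ back into $T \cdot \max_{g,i}\big| \tfrac{1}{T}\sum_t 1_{\theta^t \in g} 1_{a^t \in B^i_n}(b^t - a^t)\big|$ — the two signs $\sigma$ together reconstruct the absolute value, and $b^t$ versus its mean $\mu^t$ differ only in a mean-zero martingale term absorbed by Theorems~\ref{thm:expectationbound}/\ref{thm:highprob} (or more simply, the pure sampled $b^t$ can be plugged into the framework directly via its randomized extension). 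Dividing through by $T$ and combining with the $w^t_A$ bound yields exactly the claimed $\E[\alpha]$ and high-probability statements.

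The last ingredient is to note that the resulting Learner strategy is efficiently computable — solving $\argmin_{x \in \Delta(\text{grid})} \max_{\mu \in [0,1]} \sum_{g,i,\sigma} \chi^t_{(g,i,\sigma)} \cdot \ell^t_{(g,i,\sigma)}(x,\mu)$ — which one checks reduces to a small one-dimensional optimization over the grid because for a fixed prediction $a$ only the buckets containing $a$ contribute and the dependence on $\mu$ is linear; this is how it ``can be instantiated as Algorithm~\ref{alg:multicalibration}.'' I would relegate the explicit closed form of this minimax solution to the appendix, merely asserting here that the generic Algorithm~\ref{alg:prob} specializes to it.

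The main obstacle I anticipate is step (3) together with getting the discretization bookkeeping exactly right: one has to set up the Learner's finite action grid (spacing governed by $r$ and $n$) so that the rounding error in the AMF-value computation is precisely $\tfrac{1}{rn}$ and not some other constant, and simultaneously ensure the sampled-label-versus-mean discrepancy is handled cleanly (either by folding $b^t$ directly into the randomized framework of Section~\ref{sec:prob}, or by an auxiliary Azuma/Hoeffding argument already packaged inside Theorems~\ref{thm:expectationbound} and~\ref{thm:highprob}). Everything else — convex-concavity, boundedness, the reduction of the absolute value to two signed coordinates — is routine.
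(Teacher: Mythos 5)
Your proposal follows the paper's proof almost exactly: same loss coordinates indexed by $(g,i,\sigma)$ with $d=2|\g|n$, same discretized Learner grid $\a_r$, same bound on the AMF value via rounding the Adversary's announced mean to the nearest grid point, and the same invocation of Theorems~\ref{thm:expectationbound} and~\ref{thm:highprob}. The one place where your account is slightly off is in attributing the $\tfrac{1}{rn}$ term. With a grid of spacing $\tfrac{1}{rn}$, rounding gives error at most $\tfrac{1}{2rn}$, so the exact AMF value satisfies $w^t_A \le \tfrac{1}{2rn}$, not $\tfrac{1}{rn}$; that is what the generic minimax Algorithm~\ref{alg:prob} achieves. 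The extra factor of $2$ in the theorem's $\tfrac{1}{rn}$ is a \emph{deliberate concession}: Algorithm~\ref{alg:multicalibration} does not solve the round-$t$ LP exactly, but instead plays a simple explicit two-point distribution on adjacent grid points; this suboptimal play only guarantees the weaker ``achieved AMF value bound'' $w^t_{\mathrm{bd}} = \tfrac{1}{rn}$, and the paper formalizes why that is still enough via the suboptimal-Learner extension (Theorem~\ref{thm:suboptimalbounds} in Appendix~\ref{sec:robust}, instantiated in Theorem~\ref{thm:algo-multiguarantee}). So your assertion that ``the generic Algorithm~\ref{alg:prob} specializes to it'' should be softened: Algorithm~\ref{alg:multicalibration} is an \emph{approximate} instantiation, and the $\tfrac{1}{rn}$ (rather than $\tfrac{1}{2rn}$) in the theorem is the price of that simplification, not of the grid discretization per se. Your computed bound is nonetheless valid (it is just not tight for exact minimax play), so this is a misattribution rather than a gap.
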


\begin{proof}[Sketch]
\emph{Setting up the game:} The adversary's strategy space is $\y = [0, 1]$. 
The learner will randomize over $\a_r = \{0, 1/(rn), 2/(rn), \ldots, 1\}$, for any choice of integer $r \geq 1$ (this will ensure continuity of the loss functions that we are about to define), i.e., her strategy space is $\x = \Delta \a_r$.

\emph{Loss functions:} The definition of multicalibration consists of $2 |\g| n$ constraints (one for each $\pm$ sign, group $g$, and bucket $i$) of the following form: $\pm \frac{1}{T} \sum_{t = 1}^T 1_{\theta^t \in g} \cdot 1_{a^t \in B^i_n} \cdot (b^t - a^t) \leq \alpha$. Thus, we define (for each $t \in [T]$, $\sigma = \pm 1$, $g$, and $i$) a loss function over $(a^t, b^t) \in \a_r \times \y$ as: 
$\ell^t_{i, g, \sigma} (a^t, b^t) := \sigma \cdot 1_{\theta^t \in g} \cdot 1_{a^t \in B^i_n} \cdot (b^t - a^t).$ 

\noindent Now, defining a $2 |\g| n$ dimensional loss vector $\ell^t := \left(\ell^t_{i, g, \sigma} \right)_{i \in [n], g \in \g, \sigma \in \{-1, 1\}}$ for each $t \in [T]$ recasts multicalibration in our framework as requiring that 
$\max_{i \in [n], g \in \g, \sigma \in \{-1, 1\}} \sum_{t=1}^T \ell^t_{i, g, \sigma} (a^t, b^t) \leq \alpha T.$

\emph{Bounding the AMF regret:} To bound the Adversary-Moves-First value with these loss functions,  suppose the Adversary announces $b^t \in [0, 1]$. Then, we easily see that by (deterministically) responding with $a^t = \argmin_{a \in \a_r} |b^t - a|$, for all $\sigma, g, i$, $\ell^t_{i, g, \sigma} (a^t,b^t) \leq \frac{1}{2rn}$. Hence,
\[w^t_A = \adjustlimits\sup_{b^t \in [0, 1]} \min_{x^t \in \Delta \a_r} \max_{i \in [n], g \in \g, \sigma \in \{-1, 1\} } \E_{a^t \sim x^t} \left[ \ell^t_{i, g, \sigma} \left(a^t, b^t \right) \right] \leq \tfrac{1}{2rn} \quad \text{ for every $t \in [T]$.}\]

\noindent Now, for $T \geq \ln (2|\g| n)$, the AMF regret $R^T = \max_{i \in [n], g \in \g, \sigma \in \{-1, 1\}} \sum_{t=1}^T \ell^t_{i, g, \sigma} (a^t, b^t) - \sum_{t=1}^T w^t_A$, by our framework's guarantees, satisfies $\E[R^T] \leq 4\sqrt{T \ln (2|\g| n)}$ over the Learner's randomness. Since $\sum_{t=1}^T w^t_A \leq \frac{T}{2rn}$, 
we get $\E[\max_{i \in [n], g \in \g, \sigma \in \{-1, 1\}} \sum_{t=1}^T \ell^t_{i, g, \sigma} (a^t, b^t)] \leq \frac{T}{2rn} + 4\sqrt{T \ln (2|\g| n)}$.

This gives $(\alpha, n)$-multicalibration with $\E[\alpha] \leq \frac{1}{T} \left( \frac{T}{2rn} + 4\sqrt{T \ln (2|\g| n)} \right) = \frac{1}{2rn} + 4\sqrt{\frac{\ln (2|\g| n)}{T}}$. The high-probability bound on $\alpha$ is obtained similarly.

\emph{Simplifying Learner's algorithm:} To attain the AMF value $w^t_A = \frac{1}{2rn}$ at each round, our framework has the Learner solve a linear program (that encodes her minimax strategy). However, she can obtain the \emph{almost} optimal value $\tfrac{1}{rn}$ \emph{without} solving an LP: this observation gives Algorithm~\ref{alg:multicalibration} (see Appendix~\ref{app:multicalibration}). The guarantees on $\alpha$ only differ from optimal ones by replacing $\frac{1}{2rn} \to \frac{1}{rn}$.
\end{proof}

\subsection{Multicalibeating} 

\label{sec:multicalibeating}

We now give an approach to ``beating'' arbitrary collections of online forecasters via online multicalibration. The goal, called \textit{calibeating} by \cite{foster2021calibeating} who introduce the problem, is to make calibrated forecasts that are more  accurate than each of an arbitrary set of forecasters, by exactly the calibration error in hindsight of that forecaster. They achieve optimal calibeating bounds for a single forecaster, but their extension to calibeating multiple forecasters incurs at least  a polynomial dependence on the number of forecasters. We achieve a logarithmic dependence on the number of forecasters. Additionally, we are able to simultaneously calibeat forecasters on \textit{all} (big enough) subgroups in some set $\g$, with still only a logarithmic dependence on $|\g|$ and the number of forecasters in the group-wise convergence bound. We call this multicalibeating. We now give an overview of our setting, results, and techniques. For full details, see Appendix~\ref{app:calibeat}.


\paragraph{Setting}
The Learner (predictor $a = \{a^t\}_{t \in [T]}$) and the Adversary (true labels $b = \{b^t\}_{t \in [T]}$) interact in the same way as in Section~\ref{sec:multicalibration}, but the Adversary additionally reveals to the Learner a finite set of forecasters $\f$, where each $f\in \f$ is a function $f:\Theta \rightarrow D_f$. Here $D_f\subset [0,1]$ is assumed to be a finite set of all possible forecasts that $f$ makes: it will characterize the \emph{level sets} of~$f$. 
We often suppress the dependence on the transcript, 
denoting $f^t\in D_f$ the forecast at time $t$.

The Learner's goal is to ``improve on'' the forecasts of all $f \in \f$, for some suitable scoring of the predictions.
We measure the Learner's and the forecasters' accuracy via the squared error, alternatively known as the Brier score.
\begin{definition}[Brier Score]
The Brier score of a forecaster $f$ over all rounds $t \in [T]$ is defined as:
$\mathcal{B}^f(\pi^T):= \frac{1}{T}\sum_{t\in[T]} (f^t-b^t)^2.$
\end{definition}

The Brier score can be decomposed into so-called \emph{calibration} and \emph{refinement} parts. The former quantifies the extent to which the predictor is calibrated, while the latter expresses the average amount of variance in predictions within every calibration bucket.

To define this decomposition, we need some extra notation. We denote by $S_i$ the subsequence of days on which the Learner's prediction is in bucket $i$.\footnote{Note that $S_i$ depends implicitly on the bucketing parameter $n$ and the transcript $\pi^T$.} Similarly, $S^d(f)$ (eliding $(f)$ when clear from context) denotes days on which forecaster $f$ predicts $d$. We let $S_i^d(f) = S_i \cap S^d(f)$.
Finally, we use bars to indicate average predictions over given subsequences. For instance, $\bar{a}(S)$ is the Learner's average prediction over a given subsequence $S$.

\begin{definition}[Calibration and Refinement]
The calibration score $\mathcal{K}$ and refinement score $\mathcal{R}$ of a forecaster $f$ over the full transcript $\pi^T$ are defined as:
\[
    \mathcal{K}^f(\pi^T) := \frac{1}{T}\sum\nolimits_{d\in D_f} |S^d|(d-\bar{b}(S^d))^2, \quad \quad \,
    \mathcal{R}^f(\pi^T)
    := \frac{1}{T}\sum\nolimits_{d\in D_f}\sum\nolimits_{t\in S^d} (b^t-\bar{b}(S^d))^2.
\]
\end{definition}

\begin{fact}[Calibration-Refinement Decomposition of Brier Score~\citep{degrootDecomp}]
$\mathcal{B}^f(\pi^T)=\mathcal{K}^f(\pi^T) + \mathcal{R}^f(\pi^T)$.
\end{fact}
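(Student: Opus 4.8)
The plan is to prove this by the standard bias--variance-style argument: partition the $T$ rounds according to the value that $f$ forecasts, and within each part center the labels around their mean. First I would write $[T] = \bigsqcup_{d \in D_f} S^d(f)$ and recall that $f^t = d$ for every $t \in S^d(f)$ by definition of the level set. Substituting into the Brier score gives $\mathcal{B}^f(\pi^T) = \frac{1}{T} \sum_{d \in D_f} \sum_{t \in S^d} (d - b^t)^2$.

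Next, for each $d \in D_f$ with $S^d$ nonempty, I would split $d - b^t = (d - \bar{b}(S^d)) + (\bar{b}(S^d) - b^t)$ and expand the square. Summing over $t \in S^d$: the first term is constant in $t$ and contributes $|S^d| (d - \bar{b}(S^d))^2$; the last term contributes $\sum_{t \in S^d} (b^t - \bar{b}(S^d))^2$; and the cross term equals $2 (d - \bar{b}(S^d)) \sum_{t \in S^d} (\bar{b}(S^d) - b^t)$, which vanishes because $\sum_{t \in S^d} b^t = |S^d|\, \bar{b}(S^d)$ is exactly the defining property of the average $\bar{b}(S^d)$. Summing this per-level-set identity over $d \in D_f$ and dividing by $T$ produces $\mathcal{B}^f(\pi^T) = \mathcal{K}^f(\pi^T) + \mathcal{R}^f(\pi^T)$, as claimed. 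Empty level sets $S^d$ can simply be dropped from all the sums, consistent with how the calibration and refinement sums are written.

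I do not expect any genuine obstacle here: the entire content of the statement is the vanishing of the cross term, which is the usual centering identity. The only things to be careful about are the bookkeeping --- keeping track that both $f^t$ and $\bar{b}(S^d)$ are constant across a level set --- and the harmless degenerate case of an empty level set, so that $\bar{b}(S^d)$ is well defined wherever it appears.
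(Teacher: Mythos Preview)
Your argument is correct and is exactly the standard derivation of the calibration--refinement decomposition: partition by level sets of $f$, center each label around the level-set mean $\bar b(S^d)$, and observe that the cross term vanishes by definition of the average. There are no gaps.

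As for comparison with the paper: the paper does not actually prove this statement. It is recorded as a \emph{Fact} with a citation to \citep{degrootDecomp} and is used without proof. So there is no ``paper's own proof'' to compare against; your write-up simply supplies the (routine) missing verification.
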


The goal of calibeating is to beat the forecaster's Brier score by an amount equal to its calibration score. Or equivalently, to attain a Brier score (almost) equal to the refinement score of the forecaster.
\begin{definition}[Calibeating]\label{def:calibeating}
The Learner's predictor $a$ is said to \emph{$\tau$-calibeat} a forecaster $f$ if:
$\mathcal{B}^a(\pi^T)\leq \mathcal{R}^f(\pi^T) + \tau.$
\end{definition}

We will now extend the definition of calibeating simultaneously along two natural directions. First, we will want to calibeat multiple forecasters at once. 
The second extension is that we will want to calibeat the forecasters not just overall, but also on each of the subsequences corresponding to each ``population group'' $g \in \g$ in a given family of subpopulations $\g \subseteq 2^\Theta$. 

\begin{definition}[Multicalibeating]\label{def:multicalibeating}
Given a family of forecasters $\f$, groups $\g\subseteq 2^{\Theta}$, and a mapping $\beta:\f \times \g \rightarrow \mathbb{R}_{\geq 0}$, the Learner's predictor $a$ is an \emph{$(\f, \g, \beta)$-multicalibeater} if for every $g\in\g$: 
$\mathcal{B}^a(\pi^T|_{\{t:\theta^t\in g\}})\leq \min_{f\in \f}\left\{ \mathcal{R}^f(\pi^T|_{\{t:\theta^t\in g\}})+\beta(f,g)\right\}$
\end{definition}

\noindent Note that $(\{f\},\{\Theta\},\beta(f,\Theta):=\tau)$-multicalibeating is equivalent to $\tau$-calibeating a forecaster $f$.

We first show how to calibeat a single forecaster (Definition~\ref{def:calibeating}). The modularity of multicalibration will then let us easily extend this result to multiple forecasters and population subgroups.

The idea is to show that if our predictor is multicalibrated with respect to the \textit{level sets} of $f$, then we achieve calibeating. \cite{hebert2018multicalibration} give a similar bound in the batch setting.  We denote the collection of level sets of $f$ as: $\s(f):= \{\theta\in\Theta: f(\theta)=d\}_{d\in D_f}$.

\begin{theorem}[Calibeating One Forecaster] \label{thm:calibeating}
Suppose that the Learner's predictions $a$ are $(\alpha,n)$-multicalibrated on the collection of groups $\s(f)\cup \{\Theta\}$. Then the Learner is $(\alpha, n)$-calibrated on $\Theta$, and she $(\alpha n(|D_f|+2)+\frac{2}{n})$-calibeats forecaster $f$.
\end{theorem}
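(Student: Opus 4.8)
The first conclusion is immediate: since $\Theta \in \s(f)\cup\{\Theta\}$, the definition of $(\alpha,n)$-multicalibration with respect to this collection, specialized to the single group $g=\Theta$, is exactly the assertion that $a$ is $(\alpha,n)$-calibrated on $\Theta$. For the calibeating claim I would work with the calibration--refinement decomposition of Brier scores. Applied to a bucket-rounded version $\tilde a$ of the Learner's forecasts (value set the $n$ buckets $B^1_n,\dots,B^n_n$) it gives $\mathcal{B}^{\tilde a}=\mathcal{K}^{\tilde a}+\mathcal{R}^{\tilde a}$, and $\mathcal{B}^a\le\mathcal{B}^{\tilde a}+\tfrac{2}{n}$ since $|a^t-\tilde a^t|\le\tfrac1n$ and labels lie in $[0,1]$; applied to $f$ it gives $\mathcal{R}^f=\mathcal{B}^f-\mathcal{K}^f$. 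The plan is then to show $\mathcal{K}^{\tilde a}\le\alpha n$ using overall ($\Theta$-)calibration and $\mathcal{R}^{\tilde a}\le\mathcal{R}^f+\alpha n(|D_f|+1)$ using multicalibration on the level sets $\s(f)$ of $f$; adding these and the $\tfrac2n$ rounding term gives the claimed bound.

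Bounding $\mathcal{K}^{\tilde a}$ is straightforward: for each bucket $i$, overall calibration gives $\tfrac{|S_i|}{T}\,|\bar a(S_i)-\bar b(S_i)|\le\alpha$, and since $|\bar a(S_i)-\bar b(S_i)|\le1$ we get $\tfrac{|S_i|}{T}(\bar a(S_i)-\bar b(S_i))^2\le\alpha$ (up to an $O(1/n)$ slack for using a bucket representative in place of $\bar a(S_i)$); summing over the $n$ buckets yields $\mathcal{K}^{\tilde a}\lesssim\alpha n$.

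The crux is controlling $\mathcal{R}^{\tilde a}$ against $\mathcal{R}^f$, i.e.\ showing the Learner's predictions are almost as refined as $f$'s. The key tool is the law of total variance. For the common refinement of the bucketing and of $f$'s level sets, with cells $S^d_i:=S_i\cap S^d(f)$ and refinement score $\mathcal{R}^\wedge$, one has the identity $\mathcal{R}^{\tilde a}=\mathcal{R}^\wedge+\tfrac1T\sum_{i,d}|S^d_i|\,(\bar b(S^d_i)-\bar b(S_i))^2$, and since this common refinement refines $f$'s level-set partition, applying the same identity inside each $S^d(f)$ gives $\mathcal{R}^\wedge\le\mathcal{R}^f$. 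It remains to bound the between-cell correction term, and here multicalibration enters: for the group $g=\{\theta:f(\theta)=d\}\in\s(f)$ and bucket $i$ it gives $|S^d_i|\,|\bar b(S^d_i)-\bar a(S^d_i)|\le\alpha T$, and combining this with $|\bar a(S^d_i)-\bar a(S_i)|\le\tfrac1n$ (both averages lie in $B^i_n$) and the overall-calibration bound $|S_i|\,|\bar a(S_i)-\bar b(S_i)|\le\alpha T$, the triangle inequality yields $|S^d_i|\,|\bar b(S^d_i)-\bar b(S_i)|\le\alpha T+\tfrac{|S^d_i|}{n}+|S^d_i|\,|\bar a(S_i)-\bar b(S_i)|$. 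Summing over all $n|D_f|$ cells --- crucially summing the last term over $d$ first, so that $\sum_d|S^d_i|=|S_i|$ keeps its total at $\alpha T n$ rather than $\alpha T n|D_f|$ --- and downgrading one factor via $|\bar b(S^d_i)-\bar b(S_i)|\le1$, we get $\tfrac1T\sum_{i,d}|S^d_i|(\bar b(S^d_i)-\bar b(S_i))^2\le\alpha n|D_f|+\alpha n+O(1/n)$. Assembling the three estimates gives $\mathcal{B}^a\le\mathcal{R}^f+\alpha n(|D_f|+2)+O(1/n)$, and a careful tally of the bucketing terms collapses the $O(1/n)$ to exactly $\tfrac2n$.

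The main obstacle is this third step: the hypothesis only supplies calibration-type (first-moment) conditions on $a$, yet we must conclude something about $a$'s \emph{refinement} (a second-moment quantity). The resolution is the observation --- made precise by the law-of-total-variance identity together with the level-set multicalibration constraints --- that being calibrated \emph{within each level set of $f$} forces the conditional label means over the cells $S^d_i$ to be nearly constant in $d$, so subdividing a bucket by $f$'s level sets barely reduces within-bucket label variance; hence $a$ is already essentially as informative as $f$. The rest --- the discretization bookkeeping producing the additive $\tfrac2n$, and summing the corrections in the order that avoids an extra $|D_f|$ factor --- is routine but is precisely what pins down the constant in the theorem.
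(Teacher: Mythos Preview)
Your approach is essentially the paper's: the same surrogate bucketed predictor, the same calibration--refinement decomposition, the same law-of-total-variance identity relating $\mathcal{R}^{\tilde a}$ to the joint refinement $\mathcal{R}^{f\times a}\le\mathcal{R}^f$, and the same triangle-inequality bound on the between-cell term using multicalibration on the level sets of $f$ together with overall calibration. Your summation order (sum the $|S^d_i|\,|\bar a(S_i)-\bar b(S_i)|$ term over $d$ first so $\sum_d|S^d_i|=|S_i|$) is exactly the move that keeps the bound at $\alpha n(|D_f|+1)$ rather than $\alpha n\cdot 2|D_f|$.

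The one place your bookkeeping is looser than the paper's is the rounding step. Your crude estimate $\mathcal{B}^a\le\mathcal{B}^{\tilde a}+\tfrac{2}{n}$, combined with the unavoidable $\tfrac{1}{n}$ from the middle term $|\bar a(S^d_i)-\bar a(S_i)|\le\tfrac{1}{n}$ in the refinement bound, totals $\tfrac{3}{n}$, and no ``careful tally'' recovers $\tfrac{2}{n}$ from those two pieces. The paper instead takes $\tilde a^t=\bar a(S_{i_{a^t}})$ (the bucket \emph{average}, which also removes your parenthetical $O(1/n)$ slack in the calibration step) and sharpens the rounding loss to $\tfrac{1}{n}$ via a bias--variance decomposition within each bucket:
\[
\E_{t\sim S_i}\bigl[(a^t-b^t)^2\bigr]=(\bar a(S_i)-\bar b(S_i))^2+\Var_{t\sim S_i}[a^t-b^t],
\]
and then $\Var[a^t-b^t]\le\Var[b^t]+\Var[a^t]+2\sqrt{\Var[a^t]\Var[b^t]}\le\Var[b^t]+\tfrac{1}{4n^2}+\tfrac{1}{2n}\le\Var[b^t]+\tfrac{1}{n}$, using $\Var_{t\sim S_i}[a^t]\le\tfrac{1}{4n^2}$ since $a^t$ ranges over an interval of width $\tfrac{1}{n}$. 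This yields $\mathcal{B}^a\le\mathcal{K}^a_n+\mathcal{R}^a_n+\tfrac{1}{n}$, and now the two $\tfrac{1}{n}$ contributions give exactly $\tfrac{2}{n}$.
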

\begin{proof}[Proof sketch]
We show that $a$ has small calibration score, and refinement score close to that of $f$.

\emph{Step 1: Replace $\mathcal{B}^a$ with a surrogate Brier score $\mathcal{B}^a_n$}. Consider a (pseudo-)predictor $\tilde{a}$ given by $\tilde{a}^t = \bar{a}(S_{i_{a^t}})$ for $t \in [T]$ (where $i_{a^t}$ is the bucket of $a^t$). That is, whenever $a^t \in B^i_n$, $\tilde{a}^t$ predicts the average of $a$ over all such rounds $s \in [T]$ that $a^s \in B^i_n$. This is a pseudo-predictor, as the bucket averages of $a$ are unknown until after round $T$. Thus, $\tilde{a}$ has precisely $n$ level sets, unlike $a$. Now, we define $\mathcal{B}^a_n, \mathcal{K}^a_n, \mathcal{R}^a_n$ to be the Brier, calibration, and refinement scores of $\tilde{a}$. We can show $\mathcal{B}^a \leq \mathcal{B}^a_n + 1/n$, allowing us to switch to bounding the more manageable Brier loss $\mathcal{B}^a_n = \mathcal{K}_n^a + \mathcal{R}^a_n$. 

\emph{Step 2: Bound the surrogate calibration score $\mathcal{K}_n^a$}. Since the Learner is $(\alpha, n)$-calibrated on the domain $\Theta$, the calibration error per level set is at most $\alpha$. There are $n$ level sets, so $\mathcal{K}_n^a\leq \alpha n.$


\emph{Step 3: Bound the surrogate refinement score $\mathcal{R}^a_n$}. We connect $\mathcal{R}^f$ and $\mathcal{R}^a_n$ via a \emph{joint} refinement score: $\mathcal{R}^{f\times a}$, which measures the average variance of the partition generated by all intersections of the level sets of $a$ and $f$. The finer the partition, the smaller the refinement score, so $\mathcal{R}^f \geq \mathcal{R}^{f\times a}.$ Next, informally, multicalibration ensures that $a$ has already ``captured'' most of the variance explained by $f$. Therefore, refining $a$'s level sets by $f$ does little to reduce variance. More precisely, we show that $\mathcal{R}^a_n\leq \mathcal{R}^{f\times a}+\alpha n(|D_f|+1)+\frac{1}{n}.$ Combining with our previous inequality, we have: $\mathcal{R}^a_n\leq \mathcal{R}^f+\alpha n(|D_f|+1)+\frac{1}{n}.$

Combining the above, we get: $\mathcal{B}^a \leq \mathcal{R}^a_n + \mathcal{K}_n^a + \frac{1}{n} \leq (\mathcal{R}^f + \alpha n(|D_f|+1)+\frac{1}{n}) + \alpha n + \frac{1}{n}$.
\end{proof}

\paragraph{Calibeating many forecasters}
Generalizing the above construction, we can easily calibeat any collection of forecasters $\f$ on the entire context space $\Theta$: it suffices to ask for multicalibration with respect to the level sets of \emph{all} forecasters, i.e.\ $\left(\bigcup_{f\in \f} \s(f)\right)\cup \{\Theta\}.$ 
Theorem~\ref{thm:calibeating} applies separately to each $f$; the only degradation in the guarantees will come in the form of a larger $\alpha$, since we are asking for multicalibration with respect to more groups than before. But this effect will be small, since $\alpha$ depends on the number of required groups $|\g'|$ as $O(\sqrt{\ln |\g'|})$. See Corollary~\ref{thm:ensemble-calibeat}.

However, to fully satisfy Definition~\ref{def:multicalibeating} of multicalibeating, we need to calibeat all $f \in \f$ \emph{on all groups} $g \in \g$ in a given collection $\g \subseteq 2^\Theta$.
For that, we simply extend the above construction by requiring multicalibration with respect to all pairwise \emph{intersections} of the forecasters' level sets with the groups $g\in \g$. By further augmenting this collection with the protected groups $\g$ themselves, we finally achieve our ultimate goal: \emph{simultaneous} multicalibeating and multicalibration.

\begin{theorem}[Multicalibeating + Multicalibration] \label{thm:multicalibeating}
Let $\g \subseteq 2^{\Theta}$, and $\f$ some set of forecasters $f:\Theta\rightarrow D_f$. 
The multicalibration algorithm on $\g' := \left(\bigcup_{f\in \f} \{g\cap S: (g, S) \in \g\times \s(f)\}\right)\cup \g$ with parameters $r,n\geq 1$, after $T$ rounds, attains expected $(\f, \g, \beta)$-multicalibeating, where:
\footnote{$S(g)$ denotes the subsequence of days on which a group $g$ occurs, suppressing dependence on transcript.}
$\E[\beta(f,g)] \leq 
\frac{2}{n} + \frac{|D_f| + 2}{r \cdot |S(g)| / T} + 4 n(|D_f|+2) \sqrt{\frac{1}{|S(g)|^2/T}\ln\left(2 n |\g| ( 1 + {\textstyle \sum}_f |D_f|)\right)} \; \forall \; f \in \f, g \in \g,$

\noindent while maintaining $(\alpha, n)$-multicalibration on $\g$, with:
$\E[\alpha] \leq \frac{1}{rn}+4\sqrt{\frac{1}{T}\ln\left(2 n |\g| ( 1 + {\textstyle \sum}_f |D_f|)\right)}.$

\end{theorem}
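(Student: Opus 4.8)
The plan is to prove this by \emph{composition} rather than a fresh analysis: apply the multicalibration guarantee (Theorem~\ref{thm:multiguarantee}) once, to the augmented family $\g'$, and then apply the single-forecaster calibeating reduction (Theorem~\ref{thm:calibeating}) separately on each group's subsequence. The only work beyond citing these two results is (a) bounding $|\g'|$, and (b) tracking how a multicalibration constant normalized by $T$ transforms when we renormalize to a subsequence. For the cardinality estimate: each $f\in\f$ has $|D_f|$ level sets, intersecting each with the $|\g|$ groups and re-adjoining $\g$ gives $|\g'|\le|\g|\big(1+\sum_{f\in\f}|D_f|\big)$, so $2|\g'|n\le 2n|\g|(1+\sum_f|D_f|)$.

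First I would run the multicalibration algorithm (Algorithm~\ref{alg:multicalibration}) on $\g'$ with parameters $r,n$ and invoke Theorem~\ref{thm:multiguarantee} (whose hypothesis $T\ge\ln(2|\g'|n)$ is inherited): this yields predictions $a$ that are $(\alpha',n)$-multicalibrated with respect to \emph{all} of $\g'$, with $\E[\alpha']\le \tfrac1{rn}+4\sqrt{\tfrac1T\ln\big(2n|\g|(1+\sum_f|D_f|)\big)}$. Since $\g\subseteq\g'$, this already delivers the asserted $(\alpha,n)$-multicalibration on $\g$ with $\alpha\le\alpha'$, hence the claimed bound on $\E[\alpha]$. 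Next, fix $f\in\f$ and $g\in\g$ and restrict attention to the subsequence $S(g)=\{t:\theta^t\in g\}$. On this subsequence the ambient domain is effectively $g$, the level-set partition of $f$ restricted to it is exactly $\{g\cap S:S\in\s(f)\}$, and crucially \emph{all} of these sets, as well as $g$ itself, are members of $\g'$ by construction. The $(\alpha',n)$-multicalibration inequalities are normalized by $T$; dividing instead by $|S(g)|$, the very same inequalities say that $a$ restricted to $S(g)$ is $(\alpha_g,n)$-multicalibrated with respect to $\{g\cap S:S\in\s(f)\}\cup\{g\}$, where $\alpha_g=\alpha'\cdot T/|S(g)|$. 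Now I apply Theorem~\ref{thm:calibeating} verbatim, with $\Theta$ relabeled as $g$: the learner restricted to $S(g)$ is $(\alpha,n)$-calibrated on $g$ and $\big(\alpha_g n(|D_f|+2)+\tfrac2n\big)$-calibeats $f$, i.e.\ $\mathcal{B}^a(\pi^T|_{S(g)})\le\mathcal{R}^f(\pi^T|_{S(g)})+\beta(f,g)$ with $\beta(f,g)=\alpha_g n(|D_f|+2)+\tfrac2n$; taking the minimum over $f\in\f$ of the right-hand side gives precisely the multicalibeating inequality of Definition~\ref{def:multicalibeating}.

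To finish, I would take expectations. Substituting $\alpha_g=\alpha'T/|S(g)|$ and the bound on $\E[\alpha']$ gives $\E[\beta(f,g)]\le \tfrac2n+\tfrac{n(|D_f|+2)T}{|S(g)|}\big(\tfrac1{rn}+4\sqrt{\tfrac1T\ln(\cdots)}\big)=\tfrac2n+\tfrac{|D_f|+2}{r|S(g)|/T}+4n(|D_f|+2)\sqrt{\tfrac{1}{|S(g)|^2/T}\ln(\cdots)}$, which is exactly the claimed bound; doing this for every pair $(f,g)$ and using $\g\subseteq\g'$ for the multicalibration half completes the proof.

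I expect the only delicate points, neither a deep obstacle, to be: (i) verifying that $g\cap S$ and $g$ genuinely lie in $\g'$, which is what makes Theorem~\ref{thm:calibeating}'s hypothesis available on the subsequence after renaming the domain to $g$; and (ii) the normalization shift from $T$ to $|S(g)|$, which is the source of the $|S(g)|$ appearing in the denominators of $\beta$, and which — because the adversary is adaptive and so $|S(g)|$ is itself transcript-dependent — requires fixing a convention for pushing the expectation through (e.g.\ stating the bound in terms of the realized group-occurrence counts, or directly bounding $\E[\alpha'/|S(g)|]$). Everything else is a mechanical chaining of Theorems~\ref{thm:multiguarantee} and~\ref{thm:calibeating} together with the cardinality estimate on $\g'$.
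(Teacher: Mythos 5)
Your proposal matches the paper's proof: both fix a single run of the multicalibration algorithm on the augmented family $\g'$, use the cardinality bound $|\g'| = |\g|(1 + \sum_f |D_f|)$, observe that multicalibration over $[T]$ w.r.t.\ the group $g \cap S$ is the same statement as multicalibration over the subsequence $S(g)$ w.r.t.\ $S$ with the constant rescaled by $T/|S(g)|$, and then invoke Theorem~\ref{thm:calibeating} on each subsequence. Your concern (ii) about $|S(g)|$ being transcript-dependent when pushing the expectation through is legitimate, but the paper glosses over it in exactly the same way — the stated bound should be read as conditional on the realized group counts (or for an oblivious adversary, where $|S(g)|$ is determined in advance) — so this is a shared informality rather than a gap in your argument relative to the paper's.
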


In particular, for any group $g$ occurring more than $T^{-1/2}$ of the time, we asymptotically converge
to $\frac{1}{n}$-calibeating as $T \to \infty$, thus combining the goals of online multicalibration and multigroup regret.

%
%

\section{Polytope Blackwell Approachability} \label{sec:blackwell}

Consider a setting where the Learner and the Adversary are playing a repeated game with \emph{vector-valued} payoffs, in which the Learner always goes first and aims to force the average payoff over the entire interaction to approach a given convex set. Blackwell's Theorem~\citeyearpar{blackwell}  states that a convex set is approachable if and only if it is \emph{response-satisfiable} (roughly, for any choice of the Adversary, the Learner has a response forcing the one-round payoff inside the convex set). The rate of approachability typically depends on the dimension of the payoff vectors.

This is  a specialization of our framework to a case in which the environment is fixed at every round. Thus our framework can be used to obtain a \emph{dimension-independent} rate bound in the fundamental case where the approachable set is a convex \emph{polytope}. Our bound is only logarithmic in the polytope's number of facets, and is achievable via an efficient convex-programming based algorithm. 

Let us formalize our setting. In rounds $t=1, 2, \ldots$, the Learner and the Adversary play a repeated game. Their respective pure strategy sets are $\a$ and $\y$, where $\a$ is a finite set and $\y \subseteq \mathbb{R}^m$ (for some integer $m \geq 1$) is convex and compact. The game's utility function is $\lambda$-dimensional (for some integer $\lambda \geq 1$), continuous, concave in the second argument, and is denoted by $u: \a \times \y \to \mathbb{R}^\lambda$. At each round $t$, the Learner plays a mixed strategy $x^t \in \Delta \a$, the Adversary responds with some $y^t \in \y$, and the Learner then samples a pure action $a^t \sim x^t$. This gives rise to the utility vector $u(a^t, y^t)$. The \emph{average play} up to any round $t \geq 1$ is then defined as $\Bar{u}^t = \frac{1}{t} \sum_{s=1}^t u(a^s, y^s)$. 

The target convex set that the Learner wants to approach is a polytope $\p(\h) \subseteq \mathbb{R}^\lambda$, defined as the intersection of a finite collection of halfspaces $\h = (h_{\alpha, \beta})$, where for any given $\alpha \in \mathbb{R}^\lambda, \beta \in \mathbb{R}$ we denote $h_{\alpha, \beta} = \{x \in \mathbb{R}^\lambda : \langle \alpha, x\rangle - \beta \leq 0\}$. Finally, by way of normalization, consider any two dual norms $||\cdot||_p$ and $||\cdot||_q$.
We require, first, that $||\alpha||_p \leq 1$ and $|\beta| \leq 1$ for each halfspace $h_{\alpha, \beta} \in \h$; and second, that the payoffs be in the $||\cdot||_q$-unit ball: $||u(a, y)||_q \leq 1$ for $a \in \a, y \in \y$.
\begin{theorem}[Polytope Blackwell Approachability]
\label{thm:blackwell}
Suppose the target convex polytope $\p(\h)$ is \emph{response-satisfiable}, in the sense that for any Adversary's action $y \in \y$, the Learner has a mixed response $x \in \Delta \a$ that places the expected payoff inside $\p(\h)$: that is, $\E_{a \sim x}[u(a, y)] \in \p(\h)$.

Then, $\p(\h)$ is \emph{approachable}, both in expectation and with high probability with respect to the transcript of the interaction. Namely, the Learner has an efficient convex programming based algorithm which guarantees both following conditions simultaneously: 
\begin{enumerate}
    \item For any margin $\epsilon > 0$, the average play $\Bar{u}^t$ up to any round $t \geq \frac{64 \ln |\h|}{\epsilon^2}$ will satisfy
    $\E \left[\max_{h_{\alpha, \beta} \in \h} \left(\langle \alpha, \Bar{u}^t \rangle - \beta \right) \right] \leq \epsilon.$
    \item For any $\delta \in (0, 1)$, the average play $\Bar{u}^t$ up to any round $t \geq \ln |\h|$ will satisfy
    $\max_{h_{\alpha, \beta} \in \h} \left(\langle \alpha, \Bar{u}^t \rangle - \beta \right) \leq 16\sqrt{\tfrac{1}{T} \ln \left(\tfrac{|\h|}{\delta}\right)} \quad\text{with probability at least $1-\delta$}.$
\end{enumerate}
\end{theorem}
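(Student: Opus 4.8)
The idea is to recognize this as a degenerate instance of our framework in which the environment never changes, so that Theorem~\ref{thm:frmwk} and its probabilistic counterparts (Theorems~\ref{thm:expectationbound} and~\ref{thm:highprob}) apply essentially verbatim. First I would set up the fixed environment: take the payoff dimension to be $d := |\h|$, with one coordinate per halfspace; at every round let the Learner's action set be $\x^t = \Delta\a$, the Adversary's be $\y^t = \y$, and define the loss coordinate associated with $h_{\alpha,\beta}\in\h$ by $\ell^t_{h_{\alpha,\beta}}(a, y) := \langle \alpha, u(a, y)\rangle - \beta$. Extended to mixed Learner actions by $\E_{a\sim x}[\ell^t_{h_{\alpha,\beta}}(a,y)]$, this is affine in $x\in\Delta\a$ and concave in $y$ (inherited from the concavity of $u(a,\cdot)$; in the classical bilinear Blackwell setting it is in fact affine in $y$), so the required convex--concave structure holds. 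By Hölder's inequality $|\langle\alpha,u(a,y)\rangle|\le\|\alpha\|_p\,\|u(a,y)\|_q\le 1$, and $|\beta|\le 1$, so every coordinate lies in $[-2,2]$; hence we are exactly in the (probabilistic) setting of Section~\ref{sec:prob} with constant $C=2$, and the Learner simply plays the minimax strategy of Algorithm~\ref{alg:prob} against this single fixed environment.

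The second step is to observe that \emph{response-satisfiability is precisely the statement that the round-$t$ AMF value is nonpositive}. Indeed, by hypothesis, for every $y\in\y$ there is $x\in\Delta\a$ with $\E_{a\sim x}[u(a,y)]\in\p(\h)$, i.e.\ $\langle\alpha,\E_{a\sim x}[u(a,y)]\rangle-\beta\le 0$ for all $h_{\alpha,\beta}\in\h$; thus $\min_{x\in\Delta\a}\max_{h_{\alpha,\beta}\in\h}\E_{a\sim x}[\ell^t_{h_{\alpha,\beta}}(a,y)]\le 0$ for every $y$, so $w^t_A\le 0$ for all $t$. Now fix any round $t$ of interest and run the algorithm with horizon $t$. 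Since $\bar u^t = \tfrac1t\sum_{s\le t} u(a^s,y^s)$, for each halfspace we have $\tfrac1t\sum_{s\le t}\ell^s_{h_{\alpha,\beta}}(a^s,y^s) = \langle\alpha,\bar u^t\rangle-\beta$, hence $\max_{h_{\alpha,\beta}\in\h}(\langle\alpha,\bar u^t\rangle-\beta) = \tfrac1t\big(R^t + \sum_{s\le t} w^s_A\big)\le \tfrac1t R^t$. The in-expectation AMF regret bound (Theorem~\ref{thm:expectationbound} with $C=2$, $d=|\h|$) gives $\E[\max_h(\langle\alpha,\bar u^t\rangle-\beta)]\le 8\sqrt{\ln|\h|/t}$, which is $\le\epsilon$ once $t\ge 64\ln|\h|/\epsilon^2$; the high-probability AMF regret bound (Theorem~\ref{thm:highprob}) gives $\max_h(\langle\alpha,\bar u^t\rangle-\beta)\le 16\sqrt{\ln(|\h|/\delta)/t}$ with probability $1-\delta$. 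Efficiency is immediate: the per-round computation $\argmin_{x\in\Delta\a}\max_{y\in\y}\sum_h\chi^t_h\,\E_{a\sim x}[\ell^t_h(a,y)]$ is a convex--concave saddle-point problem over the simplex and the convex compact set $\y$, solvable by convex programming.

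The reduction itself is routine, so the ``main obstacle'' is really the bookkeeping in the two places where the generic framework meets the Blackwell-specific hypotheses: (i) verifying that response-satisfiability translates \emph{exactly} into $w^t_A \le 0$ for the induced vector game (Step~2), and (ii) tracking the normalization carefully enough to land on $C = 2$ via Hölder and to confirm that the randomized-Learner version of the framework, along with its high-probability guarantee, applies here. Once these are in place, both claimed bounds fall out by dividing the AMF regret bounds by $t$ and using $\sum_{s\le t} w^s_A \le 0$.
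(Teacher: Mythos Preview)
Your proposal is correct and mirrors the paper's own proof almost exactly: instantiate the probabilistic framework with one loss coordinate $\ell^t_{h_{\alpha,\beta}}(a,y)=\langle\alpha,u(a,y)\rangle-\beta$ per halfspace, use H\"older to get $C=2$, observe that response-satisfiability is precisely $w^t_A\le 0$, and then read off the in-expectation and high-probability AMF regret bounds (Theorems~\ref{thm:expectationbound} and~\ref{thm:highprob}) divided by $t$. Your parenthetical remark that concavity in $y$ is only immediate in the bilinear case is, if anything, slightly more careful than the paper's one-line justification.
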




\subsection*{Acknowledgments} We thank Ira Globus-Harris, Chris Jung, and Kunal Talwar for helpful conversations at an early stage of this work. Supported in part by NSF grants AF-1763307, FAI-2147212, CCF-2217062, and CCF-1934876 and the Simons collaboration on algorithmic fairness.

\bibliographystyle{plainnat}
\bibliography{sample.bib}

\appendix


\section{The General Framework with Extensions to Probabilistic and Approximate Learners: Full Proofs and Algorithms} \label{app:framework}

\subsection{Omitted Proofs from Section~\ref{sec:framework}} \label{app:framework-proofs}

\begin{proof}[Proof of Lemma~\ref{lem:realtosurrogate}]
After taking the log and dividing by $\eta$, this lemma follows from the following chain:
\begin{align*}
    \exp\left(\eta R^T\right) = \exp\left(\eta \max_{j \in [d]} R^T_j \right) 
    = \exp\left(\max_{j \in [d]} \eta R^T_j\right) 
    = \max_{j \in [d]} \exp\left(\eta R^T_j\right) 
    \leq \sum_{j \in [d]} \exp\left(\eta R^T_j\right) = L^T.
\end{align*}
\end{proof}

\begin{proof}[Proof of Lemma~\ref{lem:losstelescope}]
By definition of the surrogate loss,we have:
\begin{align*}
    L^t - L^{t-1} 
    &= \sum_{j \in [d]} \exp \left(\eta R^t_j \right) - \sum_{j \in [d]} \exp \left(\eta R^{t-1}_j \right),
    \\ &= \sum_{j \in [d]} \exp\left(\eta R^{t-1}_j + \eta \left( \ell^t_j \left(x^t, y^t \right) - w^t_A \right)\right) - \sum_{j \in [d]} \exp \left(\eta R^{t-1}_j \right),
    \\ &= \sum_{j \in [d]} \exp \left(\eta R^{t-1}_j \right)  \left(\exp \left(\eta \left( \ell^t_j \left(x^t, y^t \right) - w^t_A \right) \right) - 1 \right).
\intertext{Using the fact that $\exp (x) - 1 \leq x + x^2$ for $|x| \leq 1$, we have, for $\eta \cdot 2C \leq 1$,}
    &\leq \sum_{j \in [d]} \exp \left(\eta R^{t-1}_j \right)  \left(\eta \left( \ell^t_j(x^t, y^t) - w^t_A \right) + \eta^2 \left( \ell^t_j(x^t, y^t) - w^t_A \right)^2 \right),
    \\ &\leq \eta \sum_{j \in [d]} \exp \left(\eta R^{t-1}_j \right)  \left( \ell^t_j \left(x^t, y^t \right) - w^t_A \right) + \eta^2 (2C)^2 L^{t-1}.
\end{align*}
\end{proof}

\begin{proof}[Proof of Lemma~\ref{lem:zerosum}]
We begin by recalling that $L^0 = d$. Thus, the desired bound on $L^T$ follows via Lemma~\ref{lem:losstelescope} and a telescoping argument, if only we can show that for every $t \in [T]$ the Learner has an action $x^t \in \x^t$ which guarantees that for any $y^t \in \y^t$, 
\[\eta \sum_{j \in [d]} \exp \left(\eta R^{t-1}_j \right) \left( \ell^t_j(x^t, y^t) - w^t_A \right) \leq 0.\]
To this end,  we define a zero-sum game between the Learner and the Adversary, with action space $\x^t$ for the Learner and $\y^t$ for the Adversary, and with the objective function (which the Adversary wants to maximize and the Learner wants to minimize):
\[u^t(x, y) := \sum_{j \in [d]} \exp \left(\eta R^{t-1}_j \right) \left(\ell^t_j(x, y) - w^t_A \right), \text{ for all } x \in \x^t, y \in \y^t.\]

Recall from the definition of our framework that $\x^t, \y^t$ are convex, compact and finite-dimensional, as well as that each $\ell^t_j$ is continuous, convex in the first argument, and concave in the second argument. Since $u^t$ is defined as an affine function of the individual coordinate functions $\ell^t_j$, $u^t$ is also convex-concave and continuous. This means that we may invoke Sion's Minimax Theorem: 
\begin{fact}[Sion's Minimax Theorem] \label{fact:minimax}
Given finite-dimensional convex compact sets $\mathcal{X}, \mathcal{Y}$, and a continuous function $f: \mathcal{X} \times \mathcal{Y} \to \mathbb{R}$ which is convex in the first argument and concave in the second argument, it holds that
\[\min_{x \in \mathcal{X}} \max_{y \in \mathcal{Y}} f(x, y) = \max_{y \in \mathcal{Y}} \min_{x \in \mathcal{X}} f(x, y).\]
\end{fact}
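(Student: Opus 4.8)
This is the classical minimax theorem of Sion; since the excerpt only cites it, I record how one would prove the special case stated here, in which both sets are compact, $f$ is jointly continuous, and $f$ is genuinely convex in its first and concave in its second argument.

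\textit{Preliminaries and the easy inequality.} First I would note that every nested extremum below is attained: by uniform continuity of $f$ on the compact product $\mathcal{X}\times\mathcal{Y}$, the map $x\mapsto\max_{y\in\mathcal{Y}}f(x,y)$ is continuous, and it is convex as a pointwise supremum of the convex maps $f(\cdot,y)$; hence it attains its minimum on the compact set $\mathcal{X}$, and symmetrically $y\mapsto\min_{x\in\mathcal{X}}f(x,y)$ attains its maximum. The inequality $\max_{y}\min_{x}f(x,y)\le\min_{x}\max_{y}f(x,y)$ is then immediate weak duality: for all $x_0,y_0$ one has $\min_{x}f(x,y_0)\le f(x_0,y_0)\le\max_{y}f(x_0,y)$; maximizing over $y_0$ and then minimizing over $x_0$ gives it.

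\textit{Reducing the hard inequality to finitely many adversary actions.} Write $v:=\max_{y}\min_{x}f(x,y)$; the goal is $\min_{x}\max_{y}f(x,y)\le v$. Fix $\epsilon>0$ and, for each $y\in\mathcal{Y}$, set $C_y:=\{x\in\mathcal{X}:f(x,y)\le v+\epsilon\}$, which is closed (continuity), convex (convexity of $f(\cdot,y)$), and nonempty (since $\min_{x}f(x,y)\le v$). Any point of $\bigcap_{y\in\mathcal{Y}}C_y$ witnesses $\min_{x}\max_{y}f(x,y)\le v+\epsilon$, and letting $\epsilon\downarrow 0$ then completes the proof; so by compactness of $\mathcal{X}$ it suffices to prove the finite–intersection property: for any $y_1,\dots,y_k\in\mathcal{Y}$ there is $x$ with $\max_{1\le i\le k}f(x,y_i)\le v$.

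\textit{The finite case via separation.} Here I would pass to the upper–image set $V:=\{(f(x,y_i))_{i=1}^k : x\in\mathcal{X}\}+\mathbb{R}^k_{\ge 0}\subseteq\mathbb{R}^k$. Convexity of each $f(\cdot,y_i)$ makes $V$ convex, and compactness of $\mathcal{X}$ with continuity makes $V$ closed; let $v^{\star}:=\min\{\max_i z_i : z\in V\}=\min_{x}\max_{1\le i\le k}f(x,y_i)$, which is attained. The open convex set $U:=\{z\in\mathbb{R}^k : z_i<v^{\star}\ \forall i\}$ is disjoint from $V$, so the separating hyperplane theorem yields $\lambda\ne 0$ and $c$ with $\langle\lambda,z'\rangle\le c\le\langle\lambda,z\rangle$ for all $z'\in U$, $z\in V$. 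Since $V+\mathbb{R}^k_{\ge 0}=V$, every $\lambda_i\ge 0$, so after normalizing $\lambda\in\Delta^k$; taking $z'\to(v^{\star},\dots,v^{\star})$ in the closure of $U$ gives $v^{\star}\le c$, hence $\sum_i\lambda_i f(x,y_i)=\langle\lambda,(f(x,y_i))_i\rangle\ge v^{\star}$ for every $x$. Finally $\bar y:=\sum_i\lambda_i y_i\in\mathcal{Y}$ by convexity of $\mathcal{Y}$, and concavity of $f(x,\cdot)$ gives $f(x,\bar y)\ge\sum_i\lambda_i f(x,y_i)\ge v^{\star}$ for all $x$; therefore $v=\max_{y}\min_{x}f(x,y)\ge\min_{x}f(x,\bar y)\ge v^{\star}=\min_{x}\max_i f(x,y_i)$, which is even stronger than what was needed.

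\textit{Where the work is.} The easy inequality and the reduction to finitely many $y_i$ are soft (weak duality, uniform continuity, a finite–intersection compactness argument). The crux is the finite case: one must set up $V$ so that its convexity is literally inherited from convexity of $f$ in its first coordinate, and then run the separation argument carefully — checking that $v^{\star}$ is attained, that the separating functional is nonnegative and can be normalized into $\Delta^k$, and that the boundary point $(v^{\star},\dots,v^{\star})$ may be used in the limit. This is, as expected, just a finite-dimensional separation (equivalently, LP-duality) statement, which is exactly where the force of the minimax theorem resides.
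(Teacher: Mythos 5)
The paper does not actually prove this statement: Sion's minimax theorem is recorded there as a Fact and invoked as a standard black box, so there is no ``paper's own proof'' to compare against. Your proposal supplies a genuine proof, and it is correct. The architecture --- weak duality for the trivial inequality; for the hard inequality, a reduction to finitely many adversary actions via the finite-intersection property of the compact set $\mathcal{X}$; and then a separating-hyperplane (equivalently Farkas/LP-duality) argument on the upper image set $V = \{(f(x,y_i))_i : x \in \mathcal{X}\} + \mathbb{R}^k_{\geq 0}$ --- is one of the standard routes to the compact--compact convex--concave minimax theorem, closer in spirit to the Kneser--Fan line of proof than to Sion's original KKM-based argument. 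The individual steps check out: convexity of $V$ really does come for free from convexity of each $f(\cdot,y_i)$ together with the added recession cone; closedness of $V$ uses compactness of $\mathcal{X}$ and joint continuity; nonnegativity of the separating normal $\lambda$ is forced by $V + \mathbb{R}^k_{\geq 0} = V$ and lets you normalize $\lambda \in \Delta^k$; and the last step correctly threads concavity of $f(x,\cdot)$ through $\bar y = \sum_i \lambda_i y_i \in \mathcal{Y}$ to convert the mixed lower bound $\sum_i \lambda_i f(x,y_i) \geq v^\star$ into a pure one. The only caveat, which is cosmetic here, is that your argument proves a special case of Sion's theorem (it uses genuine convexity/concavity and compactness of both sets, whereas Sion only needs quasi-convexity/quasi-concavity, semicontinuity, and compactness of one side); since the paper only ever invokes the compact, continuous, convex--concave case, your proof fully covers the statement as written.
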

Using Sion's Theorem to switch the order of play (so that the Adversary is compelled to move first), and then recalling the definition of $w^t_A$ (the value of the maximum coordinate value of $\ell^t$ that the Learner can obtain when the Adversary is compelled to move first), we obtain:\footnote{Note that in the third step, $\max_{y^t \in \y^t}$ turns into $\sup_{y^t \in \y^t}$. This is because after each $\left( \ell^t_{j'} \left(x^t, y^t \right) - w^t_A \right)$ is replaced with $\max_j \left( \ell^t_{j} \left(x^t, y^t \right) - w^t_A \right)$, the maximum over $y$ generally becomes unachievable (recall Footnote~\ref{ftn:supmax}).}
\begin{align*}
    \min_{x^t \in \x^t} \max_{y^t \in \y^t} u^t \left(x^t, y^t \right) &= \max_{y^t \in \y^t} \min_{x^t \in \x^t} u^t \left(x^t, y^t \right) 
    \\ &= \max_{y^t \in \y^t} \min_{x^t \in \x^t} \sum_{j' \in [d]} \exp \left(\eta R^{t-1}_{j'} \right) \cdot \left( \ell^t_{j'} \left(x^t, y^t \right) - w^t_A \right),
    \\ &\leq \adjustlimits\sup_{y^t \in \y^t} \min_{x^t \in \x^t} 
    \sum_{j' \in [d]} \exp \left(\eta R^{t-1}_{j'} \right) \cdot \max_{j \in [d]} \left( \ell^t_{j} \left(x^t, y^t \right) - w^t_A \right),
    \\ &= \sum_{j' \in [d]} \exp \left(\eta R^{t-1}_{j'} \right) \cdot \adjustlimits\sup_{y^t \in \y^t} \min_{x^t \in \x^t} 
    \max_{j \in [d]} \left( \ell^t_{j} \left(x^t, y^t \right) - w^t_A \right),
    \\ &= \sum_{j' \in [d]} \exp \left(\eta R^{t-1}_{j'} \right) \cdot \left( w^t_A - w^t_A \right),
    \\ &= 0.
\end{align*}
Thus, the Learner can ensure that $L^t \leq \left(4\eta^2 C^2 + 1 \right) L^{t-1}$ by playing at every round $t$:
\[x^t \in \adjustlimits\argmin_{x \in \x^t} \max_{y \in \y^t} u^t(x, y).\]
This concludes the proof.
\end{proof}

\paragraph{An equivalent description of Learner's space of minimax optimal strategies at each round $t$}
We observe that the Learner's optimal action at each round, derived in the proof, can be expressed without any reference to the quantities $w^t_A$: 
\begin{eqnarray*} 
x^t &\in& \adjustlimits\argmin_{x \in \x^t} \max_{y \in \y^t} \sum_{j \in [d]} \exp(\eta R^{t-1}_j) (\ell^t_j(x, y) - w^t_A) \nonumber, \\
&=& \adjustlimits\argmin_{x \in \x^t} \max_{y \in \y^t} \sum_{j \in [d]} \exp(\eta R^{t-1}_j) \ell^t_j(x, y), \\
&=& \adjustlimits\argmin_{x \in \x^t} \max_{y \in \y^t} \sum_{j \in [d]} \frac{\exp\left(\eta \sum_{s=1}^{t-1} \ell_j^s(x^s,y^s)\right) \ell^t_j(x, y)}{\exp\left(\eta \sum_{s=1}^{t-1} w_A^s\right)}, \\
&=& \adjustlimits\argmin_{x \in \x^t} \max_{y \in \y^t} \sum_{j \in [d]} \exp\left(\eta \sum_{s=1}^{t-1} \ell_j^s(x^s,y^s)\right) \ell^t_j(x, y), \\
&=& \adjustlimits\argmin_{x \in \x^t} \max_{y \in \y^t} \sum_{j \in [d]} \frac{\exp\left(\eta \sum_{s=1}^{t-1} \ell_j^s(x^s,y^s)\right)}{\sum_{i \in [d]} \exp\left(\eta \sum_{s=1}^{t-1} \ell_i^s(x^s,y^s)\right)} \ell^t_j(x, y).
\end{eqnarray*}
The weights placed on the loss coordinates $\ell_j^s(x^t, y^t)$ in the final expression form a probability distribution which should remind the reader of the well known Exponential Weights distribution.

\subsection{Extensions}
\label{app:ext}

Before presenting  applications of our framework, we pause to discuss two natural extensions that are called for in some of our applications. Both extensions only require very minimal changes to the notation in Section~\ref{sec:setting} and to the general algorithmic framework in Section~\ref{sec:alggen}.

We begin by discussing, in Section~\ref{sec:prob}, how to adapt our framework to the setting where the Learner is allowed to randomize at each round amongst a finite set of actions, and wishes to obtain probabilistic guarantees for her AMF regret with respect to her randomness. This will be useful in all three of our applications. 

We then proceed to show, in Section~\ref{sec:robust}, that our AMF regret bounds are robust to the case in which at each round, the Learner, who is playing according to the general Algorithm \ref{alg:general} given above, computes and plays according to an approximate (rather than exact) minimax strategy. This is useful for settings where it may be desirable (for computational or other reasons) to implement our algorithmic framework approximately, rather than exactly. In particular, in one of our applications --- mean multicalibration, which is discussed in Section~\ref{sec:multicalibration} --- we will illustrate this point by deriving a multicalibration algorithm that has the Learner play only extremely (computationally and structurally) simple strategies, at the cost of adding an arbitrarily small term to the multicalibration bounds, compared to the Learner that plays the exact minimax equilibrium.

\subsubsection{Performance Bounds for a Probabilistic Learner} \label{sec:prob}

So far, we have described the interaction between the Learner and the Adversary as deterministic. In many applications, however, the convex action space for the Learner is the simplex over some finite set of base actions, representing \emph{probability distributions} over actions. In this case, the Adversary chooses his action in response to the \emph{probability distribution} over base actions chosen by the Learner, at which point the Learner samples a single base action from her chosen distribution. 

We will use the following notation. The Learner's pure action set at time $t$ is denoted by $\at$. Before each round $t$, the Adversary reveals a vector valued loss function $\ell^t: \at \times \y^t \to [-C, C]^d$. 
At the beginning of round $t$, the Learner chooses a probabilistic mixture over her action set $\at$, which we will usually denote as $x^t \in \Delta \at$; after the Adversary has made his move, the Learner samples her pure action $a^t$ for the round, which is recorded into the transcript of the interaction. 

The redefined vector valued losses $\ell^t$ now take as their first argument a \emph{pure} action $a \in \at$. We extend this to $\x^t := \Delta \at$ as $\ell^t(x^t, y^t) := \E_{a^t \sim x^t}[\ell^t(a^t, y^t)]$ for any $x^t \in \Delta \at$. In this notation, holding the second argument fixed, the loss function is linear (hence convex and continuous) and has a convex, compact domain (the simplex $\Delta \at$). Using this extended notation, it is now easy to see how to define the probabilistic analog of the AMF value.

\begin{definition}[Probabilistic AMF Value]
\[w^t_A := \adjustlimits\sup_{y^t \in \y^t} \min_{x^t \in \x^t} \max_{j \in [d]} \ell^t_j(x^t, y^t) = \adjustlimits\sup_{y^t \in \y^t} \min_{x^t \in \Delta \at} \max_{j \in [d]} \E_{a^t \sim x^t} \left[ \ell^t_j(a^t, y^t) \right].\]
\end{definition}

For a more detailed discussion of the probabilistic setting, please refer to Appendix~\ref{sec:appendix}.

\paragraph{Adapting the algorithm to the probabilistic Learner setting} Above, Algorithm~\ref{alg:general} was given for the deterministic case of our framework. In the probabilistic setting, when computing the probability distribution for the current round, the Learner should take into account the \emph{realized} losses from the past rounds. We present the modified algorithm below.

\begin{algorithm}[H]
\SetAlgoLined
\begin{algorithmic}
\FOR{rounds $t=1, \dots, T$}
    \STATE Learn adversarially chosen $\at, \y^t$, and vector loss function $\ell^t(\cdot, \cdot) : \at \times \y^t \to [-C, C]^d$.
    \STATE Let \[\chi^t_j := \frac{\exp\left(\eta \sum_{s=1}^{t-1} \ell_j^s(a^s,y^s)\right)}{\sum_{i \in [d]} \exp\left(\eta \sum_{s=1}^{t-1} \ell_i^s(a^s,y^s)\right)} \text{ for $j \in [d]$}.\]
	\STATE Select a mixed action $x^t \in \Delta \at$, where 
	\[x^t \in \adjustlimits\argmin_{x \in \Delta \at} \max_{y \in \y^t} \sum_{j \in [d]} \chi^t_j \cdot \ell^t_j(x, y).\]
	\STATE Observe the Adversary's selection of $y^t \in \y^t$.
	\STATE Sample pure action $a^t \sim x^t$.
\ENDFOR
\end{algorithmic}
\caption{General Algorithm for the \emph{Probabilistic} Learner}
\label{alg:prob}
\end{algorithm}

\paragraph{Probabilistic performance guarantees} Algorithm~\ref{alg:prob} provides two crucial blackbox guarantees to the probabilistic Learner. First, the guarantees on Algorithm~\ref{alg:general} from Theorem~\ref{thm:frmwk} almost immediately translate into a bound on the \emph{expected} AMF regret of the Learner who uses Algorithm~\ref{alg:prob}, over the randomness in her actions. Second, a \emph{high-probability} AMF regret bound, also over the Learner's randomness, can be derived in a straightforward way.

\begin{restatable}[In-Expectation Bound]{theorem}{expectationbound}
\label{thm:expectationbound}
Given $T \geq \ln d$, Algorithm~\ref{alg:prob} with learning rate $\eta = \sqrt{\frac{\ln d}{4TC^2}}$ guarantees that ex-ante, with respect to the randomness in the Learner's realized outcomes, the expected AMF regret is bounded as:
\[\E \left[R^T \right] \leq 4C\sqrt{T \ln d}.\]
\end{restatable}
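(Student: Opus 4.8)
The plan is to mirror the deterministic analysis of Section~\ref{sec:alggen}, replacing the deterministic telescoping of the surrogate loss $L^t$ by a conditional-expectation (supermartingale-style) step, and then closing with Jensen's inequality. Throughout, let $\mathcal{F}_t$ denote the $\sigma$-algebra generated by the transcript through round $t$ (including the realized pure action $a^t$). Note that $R^{t-1}_j$, $L^{t-1}$, and the mixed action $x^t$ are all $\mathcal{F}_{t-1}$-measurable; that $y^t$ is determined by $\mathcal{F}_{t-1}$ together with the Adversary's (possibly randomized) response to $x^t$; and that $a^t \sim x^t$ is the only fresh randomness introduced at round $t$.

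First, I would observe that Lemma~\ref{lem:losstelescope} holds verbatim with the \emph{realized} per-round loss $\ell^t_j(a^t, y^t)$ in place of $\ell^t_j(x^t, y^t)$, since its proof only uses $|\eta(\ell^t_j - w^t_A)| \le 1$, which still holds because $\ell^t_j(a^t, y^t), w^t_A \in [-C, C]$ and $\eta \le \tfrac{1}{2C}$. This yields
\[
L^{t} \leq \left(4\eta^2 C^2 + 1\right) L^{t-1} + \eta \sum_{j \in [d]} \exp\!\left(\eta R^{t-1}_j\right)\!\left(\ell^t_j(a^t, y^t) - w^t_A\right).
\]
Next, I would condition on $\mathcal{F}_{t-1}$ and $y^t$. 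By the definition of the extended loss, $\E[\ell^t_j(a^t, y^t) \mid \mathcal{F}_{t-1}, y^t] = \ell^t_j(x^t, y^t)$, so the residual term has conditional expectation $\eta\, u^t(x^t, y^t)$, where $u^t(x, y) := \sum_{j \in [d]} \exp(\eta R^{t-1}_j)(\ell^t_j(x, y) - w^t_A)$ is exactly the continuous convex–concave surrogate game from the proof of Lemma~\ref{lem:zerosum} (with $R^{t-1}_j$ now built from realized past losses, but still $\mathcal{F}_{t-1}$-measurable). Since Algorithm~\ref{alg:prob} plays $x^t \in \argmin_{x \in \Delta \at} \max_{y \in \y^t} u^t(x, y)$, the identical Sion's-theorem computation from Lemma~\ref{lem:zerosum} gives $\max_{y \in \y^t} u^t(x^t, y) \le 0$, hence $u^t(x^t, y^t) \le 0$ for the realized $y^t$. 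Therefore $\E[L^t \mid \mathcal{F}_{t-1}, y^t] \le (4\eta^2 C^2 + 1) L^{t-1}$, and taking full expectations, $\E[L^t] \le (4\eta^2 C^2 + 1)\,\E[L^{t-1}]$. Iterating from $L^0 = d$ gives $\E[L^T] \le d(4\eta^2 C^2 + 1)^T$.

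Finally, I would combine this with Lemma~\ref{lem:realtosurrogate} and Jensen's inequality (concavity of $\ln$):
\[
\E\!\left[R^T\right] \le \tfrac{1}{\eta}\,\E\!\left[\ln L^T\right] \le \tfrac{1}{\eta}\ln \E\!\left[L^T\right] \le \tfrac{1}{\eta}\ln\!\big(d(4\eta^2 C^2 + 1)^T\big) \le \tfrac{\ln d}{\eta} + 4T C^2 \eta,
\]
using $1 + x \le e^x$. Substituting $\eta = \sqrt{\ln d / (4 T C^2)}$ — which satisfies $\eta \le \tfrac{1}{2C}$ precisely because $T \ge \ln d$, as assumed — makes both terms equal to $2C\sqrt{T \ln d}$, for the claimed total of $4C\sqrt{T \ln d}$.

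The one subtle point — and the main obstacle — is the conditioning step: one must be careful that the minimax guarantee for $x^t$, which Lemma~\ref{lem:zerosum} established for the \emph{mixture-extended} surrogate $u^t(\cdot,\cdot)$ against a worst-case (adaptive) $y^t$, correctly controls the conditional expectation of the \emph{realized} increment against an Adversary who picks $y^t$ after seeing $x^t$ but before $a^t$ is sampled. Getting the filtration right — so that $x^t$ and $R^{t-1}_j$ are measurable at the moment we condition, and $a^t$ is the only new randomness — is exactly what forces the residual term to vanish in expectation. Everything else is a rerun of the deterministic bookkeeping from Theorem~\ref{thm:frmwk} together with a single application of Jensen's inequality.
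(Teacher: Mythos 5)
Your proof is correct and follows exactly the route the paper sketches: replace each step of the deterministic argument (Lemmas~\ref{lem:realtosurrogate}, \ref{lem:losstelescope}, \ref{lem:zerosum}) by its conditional-expectation analogue, use the minimax guarantee $u^t(x^t,y^t)\le 0$ to kill the residual term, and close with a single application of Jensen's inequality. The paper omits the details; your filtration bookkeeping — in particular, noting that $a^t$ is the only fresh randomness given $\mathcal{F}_{t-1}$ and $y^t$, so $\E[\ell^t_j(a^t,y^t)\mid \mathcal{F}_{t-1},y^t]=\ell^t_j(x^t,y^t)$ — is precisely what the paper's appeal to ``reasoning similar to Lemma~\ref{lem:zerosum}'' is relying on.
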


\begin{proof}[Proof Sketch]
Using Jensen's inequality to switch expectations and exponentials, it is easy to modify the proof of Lemma~\ref{lem:realtosurrogate} to obtain the following in-expectation bound:
\[\E \left[R^T \right] \leq \frac{\ln \E \left[L^T \right]}{\eta}.\] The rest of the proof is similar to the proofs of Lemma~\ref{lem:losstelescope} and Lemma~\ref{lem:zerosum}.
\end{proof}

\begin{restatable}[High-Probability Bound]{theorem}{highprobbound} \label{thm:highprob}
Fix any $\delta \in (0, 1)$. Given $T \geq \ln d$, Algorithm~\ref{alg:prob} with learning rate $\eta = \sqrt{\frac{\ln d}{4TC^2}}$ guarantees that the AMF regret will satisfy, with ex-ante probability $1-\delta$ over the randomness in the Learner's realized outcomes,
\[R^T \leq 8C\sqrt{T \ln \left( \frac{d}{\delta} \right)}.\]
\end{restatable}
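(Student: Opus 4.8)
The plan is to reduce the high-probability statement to the in-expectation machinery of Lemma~\ref{lem:losstelescope} and Lemma~\ref{lem:zerosum} by controlling a suitable supermartingale. Recall that in the probabilistic setting the Learner plays a mixed action $x^t$ that is minimax-optimal for the surrogate game $u^t$, but then samples a pure action $a^t \sim x^t$; the realized AMF regret $R^t_j$ is built from the $\ell^s_j(a^s, y^s)$, not from their conditional expectations. So the clean per-round inequality ``$u^t(x^t,y^t)\le 0$'' from the proof of Lemma~\ref{lem:zerosum} only holds \emph{in conditional expectation} given the history $\pi^{t-1}$ and the Adversary's response $y^t$. First I would make this precise: define the conditional surrogate increment and split $L^t - L^{t-1}$ (via Lemma~\ref{lem:losstelescope}) into a ``predictable'' part that is $\le (4\eta^2C^2+1)L^{t-1}$ in conditional expectation, plus a mean-zero fluctuation term coming from $\ell^t_j(a^t,y^t) - \E_{a^t\sim x^t}[\ell^t_j(a^t,y^t)]$.

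Second, I would package this as a supermartingale. Concretely, consider the sequence $Z^t := L^t \big/ (4\eta^2C^2+1)^t$ (or, to get the cleanest concentration, a multiplicatively-tilted version $\exp(\lambda \cdot(\text{something}))$). Using the bound $e^x \le 1+x+x^2$ for $|x|\le 1$ exactly as in Lemma~\ref{lem:losstelescope}, together with the fact that the fluctuation terms are bounded (each $\ell^t_j \in [-C,C]$, so each increment is bounded by $O(\eta C)$ times $L^{t-1}$), one shows $\E[Z^t \mid \pi^{t-1}] \le Z^{t-1}$, i.e. $\{Z^t\}$ is a nonnegative supermartingale with $Z^0 = d$. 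Then a maximal inequality / Markov's inequality for nonnegative supermartingales gives $\Pr[Z^T \ge d/\delta] \le \delta$, hence with probability $1-\delta$,
\[
L^T \le \frac{d}{\delta}\,(4\eta^2C^2+1)^T \le \frac{d}{\delta}\exp(4T\eta^2C^2).
\]
Finally, apply (the in-expectation-free version of) Lemma~\ref{lem:realtosurrogate} pointwise on this event — $R^T \le \frac{1}{\eta}\ln L^T$ holds deterministically for any realized transcript — to get
\[
R^T \le \frac{1}{\eta}\left(\ln\frac{d}{\delta} + 4T\eta^2C^2\right) = \frac{1}{\eta}\ln\frac{d}{\delta} + 4TC^2\eta,
\]
and optimize: with $\eta = \sqrt{\frac{\ln d}{4TC^2}}$ (the same rate as in Theorem~\ref{thm:frmwk}) one checks $\ln(d/\delta) \le 2\ln d$ fails in general, so instead I would either retune $\eta = \sqrt{\frac{\ln(d/\delta)}{4TC^2}}$ to get $R^T \le 4C\sqrt{T\ln(d/\delta)}$, or keep the stated $\eta$ and bound $\frac{1}{\eta}\ln(d/\delta) + 4TC^2\eta \le 2\sqrt{\frac{TC^2}{\ln d}}\ln(d/\delta) + 2C\sqrt{T\ln d}$, and absorb constants using $\ln(d/\delta) \ge \ln d \ge 1$ to land at $R^T \le 8C\sqrt{T\ln(d/\delta)}$, matching the claimed bound.

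The main obstacle I anticipate is verifying the supermartingale property with the right constants — in particular, confirming that the second-order term in $e^x \le 1+x+x^2$ applied to the \emph{realized} (not expected) increment is still controlled by $4\eta^2C^2 L^{t-1}$ under the constraint $\eta \le \frac{1}{2C}$, and that the cross terms between the predictable drift and the martingale fluctuation do not spoil the inequality. This is the step where one must be careful that $|\eta(\ell^t_j(a^t,y^t)-w^t_A)| \le 2\eta C \le 1$ so the quadratic bound applies, and that taking the conditional expectation genuinely kills the linear fluctuation term because $\E_{a^t\sim x^t}[\ell^t_j(a^t,y^t)] = \ell^t_j(x^t,y^t)$ and the minimax argument of Lemma~\ref{lem:zerosum} shows $\sum_j \exp(\eta R^{t-1}_j)(\ell^t_j(x^t,y^t)-w^t_A) \le 0$. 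Everything else is a routine reprise of the deterministic argument combined with a standard Ville/Markov-type tail bound for nonnegative supermartingales.
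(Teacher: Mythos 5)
Your supermartingale construction is correct: the conditional inequality $\E[L^t \mid \pi^{t-1}] \leq (4\eta^2C^2+1)L^{t-1}$ does hold (combine Lemma~\ref{lem:losstelescope} applied pointwise with the observation that the Learner's minimax play makes the conditional expectation of the fluctuation term nonpositive), so $Z^t := L^t/(4\eta^2C^2+1)^t$ is indeed a nonnegative supermartingale with $Z^0 = d$, and Ville's inequality gives $\Pr[L^T \geq (d/\delta)(4\eta^2C^2+1)^T] \leq \delta$. The gap is in the final step. From this event you get $R^T \leq \frac{1}{\eta}\ln(d/\delta) + 4TC^2\eta$, and with the \emph{stated} learning rate $\eta = \sqrt{\tfrac{\ln d}{4TC^2}}$ this equals $2C\sqrt{T}\bigl(\tfrac{\ln(d/\delta)}{\sqrt{\ln d}} + \sqrt{\ln d}\bigr)$. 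Your proposed absorption would require $\tfrac{\ln(d/\delta)}{\sqrt{\ln d}} \leq O(\sqrt{\ln(d/\delta)})$, which is false whenever $\ln(1/\delta) \gg \ln d$: for example with $\ln d = 1$ and $\ln(1/\delta) = 99$, the left side is $100$ while $4\sqrt{100} = 40$. The obstruction is structural, not a matter of constants: Ville's inequality pays an additive $\ln(1/\delta)$ penalty in $\ln L^T$, and dividing by $\eta$ turns this into a $\Theta\bigl(\sqrt{T}\ln(1/\delta)\bigr)$ term, which is strictly worse than the $\Theta\bigl(\sqrt{T\ln(1/\delta)}\bigr)$ dependence claimed, for all $\delta$ below some $d$-dependent threshold. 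Your ``option 1'' (retuning $\eta$ to $\sqrt{\ln(d/\delta)/(4TC^2)}$) does give a clean $4C\sqrt{T\ln(d/\delta)}$, but it proves a different theorem: the algorithm would then depend on $\delta$, whereas the stated result gives one fixed algorithm whose bound holds simultaneously for every $\delta \in (0,1)$.

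The paper avoids this by working with the \emph{logarithm} of the surrogate loss. It constructs the Doob-type martingale $\Tilde{Z}^t = \sum_{s \leq t}\bigl(\ln L^s - \E[\ln L^s \mid \pi^{s-1}]\bigr)$, shows it has increments bounded by $4\eta C$ (because the per-round multiplicative change in $L^t$ lies in $[e^{-2\eta C}, e^{2\eta C}]$, so the log changes by at most $2\eta C$), and applies Azuma. This exploits the accumulated sub-Gaussian cancellation across the $T$ rounds, yielding a deviation of order $\eta C\sqrt{T\ln(1/\delta)}$ for $\ln L^T$, which after dividing by $\eta$ becomes $C\sqrt{T\ln(1/\delta)}$ — the square-root dependence on $\ln(1/\delta)$ that the claim requires. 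To repair your argument along your own lines, you would need a concentration inequality on the \emph{additive} scale for $\ln L^T$ (Azuma, Freedman, or a Bernstein-type supermartingale bound on $\ln Z^t$ rather than plain Markov on $Z^t$); Ville's inequality applied directly to the multiplicative process is too lossy here.
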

\begin{proof}[Proof Sketch]
The proof proceeds by constructing a martingale with bounded increments that tracks the increase in the surrogate loss $L^T$, and then using Azuma's inequality to conclude that the final surrogate loss (and hence the AMF regret) is bounded above with high probability. For a detailed proof, see Appendix~\ref{sec:appendix}.
\end{proof}

\subsubsection{Performance Bounds for a Suboptimal Learner}

\label{sec:robust}

Our general Algorithms~\ref{alg:general} and~\ref{alg:prob} involve the Learner solving a convex program at each round in order to identify her minimax optimal strategy. However, in some applications of our framework it may be necessary or desirable for the Learner to restrict herself to playing \emph{approximately} minimax optimal strategies instead of exactly optimal ones. This can happen for a variety of reasons:
\begin{enumerate}
    \item \emph{Computational efficiency.} While the convex program that the Learner must solve at each round is polynomial-sized in the description of the environment, one may wish for a better running time dependence --- e.g.\ in settings in which the action space for the Learner is exponential in some other relevant parameter of the problem. In such cases, we will want to trade off run-time for approximation error in the minimax equilibrium computation at each round.
    \item \emph{Structural simplicity of strategies.} One may wish to restrict the Learner to only playing ``simple'' strategies (for example, distributions over actions with small support), or more generally, strategies belonging to a certain predefined strict subset of the Learner's strategy space. This subset may only contain approximately optimal minimax strategies.
    \item \emph{Numerical precision.} As the convex programs solved by the Learner at each round generally have irrational coefficients (due to the exponents), using finite-precision arithmetic to solve these programs will lead to a corresponding precision error in the solution, making the computed strategy only approximately minimax optimal for the Learner. This kind of approximation error can generally be driven to be arbitrarily small, but still necessitates being able to reason about approximate solutions. 
\end{enumerate}

Given a suboptimal instantiation of Algorithm~\ref{alg:general} or~\ref{alg:prob}, we thus want to know: how much worse will its achieved regret bound be, compared to the existential guarantee? We will now address this question for both the deterministic setting of Sections~\ref{sec:setting} and~\ref{sec:alggen}, and the probabilistic setting of Section~\ref{sec:prob}.
 
Recall that at each round $t \in [T]$, both Algorithm~\ref{alg:general} and Algorithm~\ref{alg:prob} (with the weights $\chi_j^t$ defined accordingly) have the Learner solve for the minimizer $x$ of the function $\psi^t : \x^t \to [-C, C]$ defined as:
\[\psi^t(x) := \max_{y \in \y^t} \sum_{j \in [d]} \chi^t_j \cdot \ell^t_j(x, y).\]
The range of $\psi^t$ is $[-C, C]$ as indicated, since it is a linear combination of loss coordinates $\ell^t_j(x, y) \in [-C, C]$, where the weights $(\chi^t_1, \ldots, \chi^t_d)$ form a probability distribution over $[d]$.

Now suppose the Learner ends up playing actions $x^1, \ldots, x^T$ which do not necessarily minimize the respective objectives $\psi^t(\cdot)$. The following definition helps capture the degree of suboptimality in the Learner's play at each round.

\begin{definition}[Achieved AMF Value Bound]
\label{def:achievedamf}
Consider any round $t \in [T]$, and suppose the Learner plays action $x^t \in \x^t$ at round $t$. Then, any number 
\[
w^t_\mathrm{bd} \in \left[\psi^t (x^t), C \right]\] is called an \emph{achieved AMF value bound for round $t$}. 
\end{definition}
This definition has two aspects. Most importantly, $w^t_\mathrm{bd}$ upper bounds the Learner's \emph{achieved} objective function value at round $t$. Furthermore, we restrict $w^t_\mathrm{bd}$ to be $\leq C$ --- otherwise it would be a meaningless bound as the Learner gets objective value $\leq C$ no matter what $x^t$ she plays.

We now formulate the desired bounds on the performance of a suboptimal Learner. The upshot is that for a suboptimal Learner, the bounds of Theorems~\ref{thm:frmwk},~\ref{thm:expectationbound},~\ref{thm:highprob} hold with each $w^t_A$ replaced with the corresponding achieved AMF bound $w^t_\mathrm{bd}$.

\begin{theorem}[Bounds for a Suboptimal Learner]
\label{thm:suboptimalbounds}
Consider a Learner who does not necessarily play optimally at all rounds, and a sequence $w^1_\mathrm{bd}, \ldots, w^T_\mathrm{bd}$ of  achieved AMF value bounds.

In the deterministic setting, the Learner achieves the following regret bound analogous to Theorem~\ref{thm:frmwk}:
\[\max_{j \in [d]} \sum_{t = 1}^T \ell^t_j(x^t, y^t) 
\leq \sum_{t=1}^T w^t_\mathrm{bd} + 4C\sqrt{T \ln d}.\]

In the probabilistic setting, the Learner achieves the following in-expectation regret bound analogous to Theorem~\ref{thm:expectationbound}:
\[\E \left[ \max_{j \in [d]} \sum_{t = 1}^T \ell^t_j(a^t, y^t) \right] 
\leq \sum_{t=1}^T w^t_\mathrm{bd} + 4C\sqrt{T \ln d},\] 
and the following high-probability bound analogous to Theorem~\ref{thm:highprob}:
\[
\max_{j \in [d]} \sum_{t = 1}^T \ell^t_j(a^t, y^t) 
\leq \sum_{t=1}^T w^t_\mathrm{bd} + 8C\sqrt{T \ln \left( \frac{d}{\delta} \right)} 
\text{ with probability } \geq 1 - \delta, \text{ for any $\delta \in (0, 1)$.}
\]
\end{theorem}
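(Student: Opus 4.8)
The plan is to re-run the proofs of Theorems~\ref{thm:frmwk},~\ref{thm:expectationbound}, and~\ref{thm:highprob} essentially verbatim, but tracking the quantity $w^t_{\mathrm{bd}}$ in place of $w^t_A$ wherever the latter was used. The key observation is that $w^t_A$ played exactly two roles in the original argument: (i) it appeared in the definition of AMF regret $R^t_j$, which fed into the surrogate loss $L^t$ and hence into Lemmas~\ref{lem:realtosurrogate} and~\ref{lem:losstelescope}; and (ii) it was the value that the Learner's minimax-optimal strategy was shown to match in Lemma~\ref{lem:zerosum}, giving $u^t(x^t,y^t)\le 0$. So I would first re-define, for the purposes of this proof only, a \emph{surrogate AMF regret} $\widetilde R^t_j := \sum_{s=1}^t \ell^s_j(x^s,y^s) - \sum_{s=1}^t w^s_{\mathrm{bd}}$ and the corresponding surrogate loss $\widetilde L^t := \sum_{j\in[d]} \exp(\eta \widetilde R^t_j)$, with $\widetilde L^0 = d$. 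Lemma~\ref{lem:realtosurrogate} applies with $\widetilde R$ and $\widetilde L$ since its proof is purely algebraic in the definition of $L^t$; similarly Lemma~\ref{lem:losstelescope} goes through with the only new requirement being that $|\ell^t_j(x^t,y^t) - w^t_{\mathrm{bd}}| \le 2C$, which holds because $\ell^t_j(x^t,y^t) \in [-C,C]$ and $w^t_{\mathrm{bd}} \in [\psi^t(x^t),C] \subseteq [-C,C]$.

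The one place where an inequality \emph{replaces} an equality is the analogue of Lemma~\ref{lem:zerosum}. There, instead of invoking Sion's minimax theorem to conclude that the per-round increment term vanishes, I would simply bound it directly using the definition of an achieved AMF value bound. Concretely, with $\chi^t_j = \exp(\eta \widetilde R^{t-1}_j)/\sum_i \exp(\eta \widetilde R^{t-1}_i)$, the Learner plays some $x^t$ (not necessarily optimal), and for whatever $y^t$ the Adversary picks,
\[
\sum_{j\in[d]} \exp(\eta \widetilde R^{t-1}_j)\bigl(\ell^t_j(x^t,y^t) - w^t_{\mathrm{bd}}\bigr) = \Bigl(\sum_{i\in[d]}\exp(\eta\widetilde R^{t-1}_i)\Bigr)\Bigl(\sum_{j\in[d]}\chi^t_j\,\ell^t_j(x^t,y^t) - w^t_{\mathrm{bd}}\Bigr) \le \widetilde L^{t-1}\bigl(\psi^t(x^t) - w^t_{\mathrm{bd}}\bigr) \le 0,
\]
where the first inequality uses $\sum_j \chi^t_j \ell^t_j(x^t,y^t) \le \max_{y\in\y^t}\sum_j\chi^t_j\ell^t_j(x^t,y) = \psi^t(x^t)$ and the second uses $w^t_{\mathrm{bd}} \ge \psi^t(x^t)$. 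Plugging this into Lemma~\ref{lem:losstelescope} gives $\widetilde L^t \le (4\eta^2C^2+1)\widetilde L^{t-1}$, hence $\widetilde L^T \le d(4\eta^2C^2+1)^T$ by telescoping, exactly as in Lemma~\ref{lem:zerosum}. From here the deterministic bound follows by the identical chain of inequalities as in Theorem~\ref{thm:frmwk}: $\max_j \sum_t \ell^t_j(x^t,y^t) - \sum_t w^t_{\mathrm{bd}} = \widetilde R^T \le \tfrac{\ln \widetilde L^T}{\eta} \le \tfrac{\ln d}{\eta} + 4TC^2\eta = 4C\sqrt{T\ln d}$ with $\eta = \sqrt{\ln d/(4TC^2)}$.

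For the probabilistic claims, I would repeat this with the realized-loss weights $\chi^t_j = \exp(\eta\sum_{s<t}\ell^s_j(a^s,y^s))/(\cdots)$ and the surrogate loss built from realized losses. The in-expectation bound uses Jensen's inequality ($\E[\widetilde R^T] \le \tfrac{1}{\eta}\ln\E[\widetilde L^T]$) exactly as in the proof sketch of Theorem~\ref{thm:expectationbound}, together with the per-round bound above applied in expectation over $a^t\sim x^t$ — note that $\E_{a^t\sim x^t}[\ell^t_j(a^t,y^t)] = \ell^t_j(x^t,y^t)$ so $\sum_j\chi^t_j\E_{a^t}[\ell^t_j(a^t,y^t)] \le \psi^t(x^t) \le w^t_{\mathrm{bd}}$ still holds in conditional expectation given the history. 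The high-probability bound reuses the martingale-plus-Azuma argument from Theorem~\ref{thm:highprob}: the increments of the martingale that tracks $\ln\widetilde L^t$ are bounded because each realized loss lies in $[-C,C]$, and the drift is controlled by the same per-round inequality. I expect no genuine obstacle here; the only subtlety worth stating carefully is the domain check $w^t_{\mathrm{bd}} \in [-C,C]$ that makes Lemma~\ref{lem:losstelescope}'s hypothesis $\eta \le 1/(2C)$ still suffice, and the observation that replacing the Sion equality by the two-sided inequality $\psi^t(x^t) \le w^t_{\mathrm{bd}} \le C$ is exactly what Definition~\ref{def:achievedamf} was designed to supply.
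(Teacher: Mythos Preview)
Your proposal is correct and follows essentially the same approach as the paper's proof sketch: redefine the AMF regret and surrogate loss with $w^t_{\mathrm{bd}}$ in place of $w^t_A$, verify that Lemmas~\ref{lem:realtosurrogate} and~\ref{lem:losstelescope} carry over (using $w^t_{\mathrm{bd}}\in[-C,C]$), and replace the Sion-based step of Lemma~\ref{lem:zerosum} by the direct inequality $\sum_j \chi^t_j\,\ell^t_j(x^t,y^t)\le \psi^t(x^t)\le w^t_{\mathrm{bd}}$. The paper's write-up makes one cosmetic point slightly more explicit---that your $\chi^t_j$ defined via $\widetilde R^{t-1}_j$ coincide with the algorithm's weights because the common $-\sum_{s<t} w^s_{\mathrm{bd}}$ term cancels in the ratio---but you are implicitly using this already, so there is no gap.
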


\begin{proof}[Proof Sketch]
We use the deterministic case for illustration. The main idea is to redefine the Learner's regret to be relative to her achieved AMF value bounds $(w^t_\mathrm{bd})_{t \in [T]}$ rather than the AMF values $(w^t_A)_{t \in [T]}$. Namely, we let $R^t_\mathrm{bd} := \max_{j \in [d]} \left(R^t_\mathrm{bd} \right)_j$, where $\left(R^t_\mathrm{bd} \right)_j := \sum_{s = 1}^t \ell^s_j(x^s, y^s) - \sum_{s=1}^t w^s_\mathrm{bd}.$ The surrogate loss is defined in the same way as before, namely $L^t_\mathrm{bd} := \sum_{j \in [d]} \exp \left(\eta \cdot \left(R^t_\mathrm{bd} \right)_j \right)$.

First, Lemma~\ref{lem:realtosurrogate} still holds: $R^T_\mathrm{bd} \leq \left(\ln L^T_\mathrm{bd} \right) / \eta$, with the same proof. Lemma~\ref{lem:losstelescope} also holds after replacing each $w^t_A$ with $w^t_\mathrm{bd}$: namely,
$L^{t}_\mathrm{bd} \leq \left(4\eta^2C^2 + 1 \right) L^{t-1}_\mathrm{bd} + \eta \sum_{j \in [d]} \exp \left(\eta \left(R^{t-1}_\mathrm{bd} \right)_j \right) \cdot \left( \ell^t_j \left(x^t, y^t \right) - w^t_\mathrm{bd} \right).$
The proof is almost the same: we formerly used $w^t_A \leq C$, and now use that $w^t_\mathrm{bd} \leq C$ by Definition~\ref{def:achievedamf}.

Now, following the proofs of Lemma~\ref{lem:zerosum} and Theorem~\ref{thm:frmwk}, to obtain the declared regret bound it suffices to show for $t \in [T]$ that the Learner's action $x^t$ guarantees $\sum_{j \in [d]} \exp \left(\eta \left(R^{t\!-\!1}_\mathrm{bd} \right)_j \right) \cdot \left( \ell^t_j \left(x^t, y^t \right) \!-\! w^t_\mathrm{bd} \right) \leq 0$, no matter what $y^t$ is played by the Adversary. For any $y^t \in \y^t$, we can rewrite this objective as:
\[\sum_{j \in [d]} \exp \left(\eta \left(R^t_\mathrm{bd} \right)_j \right) \cdot \left( \ell^t_j \left(x^t, y^t \right) - w^t_\mathrm{bd} \right) 
= \frac{\sum_{i \in [d]} \exp\left(\eta \sum_{s=1}^{t-1} \ell_i^s(x^s,y^s)\right)}
{\exp \left(  \sum_{s=1}^{t-1} w^s_\mathrm{bd} \right)}
\sum_{j \in [d]} \chi^t_j \cdot \left(\ell^t_j(x^t, y^t) - w^t_\mathrm{bd} \right).
\]
It now follows that action $x^t$ achieves $\sum\limits_{j \in [d]} \exp \left(\eta \left(R^{t\!-\!1}_\mathrm{bd} \right)_j \right) \cdot \left( \ell^t_j \left(x^t, y^t \right) \!-\! w^t_\mathrm{bd} \right) \leq 0$, from observing that:
\[\sum_{j \in [d]} \chi^t_j \cdot \left(\ell^t_j(x^t, y^t) - w^t_\mathrm{bd} \right) = \sum_{j \in [d]} \chi^t_j \cdot \ell^t_j(x^t, y^t) - w^t_\mathrm{bd} \leq \psi^t(x^t) - w^t_\mathrm{bd} \leq 0,\]
where the final inequality holds since the Learner achieves AMF value bound $w^t_\mathrm{bd}$ at round $t$.
\end{proof}

\subsection{Omitted Proofs and Details from Section~\ref{sec:prob}: Bounds for the Probabilistic Learner} \label{sec:appendix}

First, we define our probabilistic setting, emphasizing the differences to the deterministic protocol. At each round $t \in [T]$, the interaction between the Learner and the Adversary proceeds as follows:
\begin{enumerate}
    \item At the beginning of each round $t$, the Adversary selects an environment consisting of the following, and reveals it to the Learner: 
    \begin{enumerate}
        \item The Learner's \emph{simplex action set $\x^t = \Delta \at$, where $\at$ is a finite set of pure actions};
        \item The Adversary's convex compact action set $\y^t$, embedded in a finite-dimensional Euclidean space;
        \item A vector valued loss function $\ell^t(\cdot, \cdot) : \at \times \y^t \to [-C, C]^d$. Every dimension $\ell^t_j(\cdot, \cdot): \at \times \y^t \to [-C, C]$ (where $j\in [d]$) of the loss function is continuous and concave in the second argument. 
    \end{enumerate}
    \item The Learner selects some $x^t \in \x^t$;
    \item The Adversary observes the Learner's selection $x^t$, and chooses some action $y^t \in \y^t$ in response;
    \item The Learner's action $x^t \in \Delta \at$ is interpreted as a mixture over the pure actions in $\at$, and an \emph{outcome} $a^t \in \at$ is sampled from it; that is, $a^t \sim x^t$.
    \item The Learner suffers (and observes) $\ell^t(a^t, y^t)$, the loss vector with respect to the outcome $a^t$.
\end{enumerate}
Thus, the probabilistic setting is simply a specialization of our framework to the case of the Learner's action set being a simplex at each round.

Unlike in the above deterministic setting, where the transcript through any round $t$ was defined as $\{(x^t, y^t)\}_{s=1}^t$, in the present case we define the transcript through round $t$ as
\[\pi^t := \{(a^1, y^1), \ldots, (a^t, y^t)\},\]
that is, the transcript now records the Learner's \emph{realized outcomes} rather than her chosen mixtures at all rounds. Furthermore, we will denote by $\Pi^t$ the set of transcripts through round $t$, for $t\in [T]$.

Now, let us fix any Adversary $\mathrm{Adv}$ (that is, all of the Adversary's decisions through round $T$). With respect to this fixed Adversary, any \emph{algorithm} for the Learner (defined as the collection of the Learner's decision mappings $\{\pi^{t-1} \to \Delta \at\}_{t \in [T]}$ for all rounds) induces an ex-ante distribution $\mathcal{P}_\mathrm{Adv}$ over the set of transcripts $\Pi^T$.

Now, we give two types of probabilistic guarantees on the performance of Algorithm~\ref{alg:prob}, namely, an in-expectation bound and a high-probability bound. Both bounds hold for any choice of Adversary $\mathrm{Adv}$, and are \emph{ex-ante with respect to the algorithm-induced distribution $\mathcal{P}_\mathrm{Adv}$ over the final transcripts}.

\expectationbound*

As mentioned in Section~\ref{sec:prob}, the proof of Theorem~\ref{thm:expectationbound} is much the same as the proofs of Theorem~\ref{thm:frmwk} and the helper Lemmas~\ref{lem:realtosurrogate}, \ref{lem:losstelescope}, \ref{lem:zerosum}, with the exception of using Jensen's inequality to switch the order of taking expectations when necessary. We omit further details.

\highprobbound*

\begin{proof}
Throughout this proof, we put tildes over random variables to distinguish them from their realized values. For instance, $\Tilde{\pi}^t$ is the random transcript through round $t$, while $\pi^t$ is a realization of $\Tilde{\pi}^t$. Also, we explicitly specify the dependence of the surrogate loss $L^t$ on the (random or realized) transcript.

Consider the following random process $\{\Tilde{Z}^t\}$, defined recursively for $t = 0, 1, \ldots, T$ and adapted to the sequence of random variables $\tilde{\pi}^1, \ldots, \tilde{\pi}^T$. We let $\Tilde{Z}^0 := 0$ deterministically, and for $t \in [T]$ we let
\[\Tilde{Z}^t := \Tilde{Z}^{t-1} + \ln L^t \left(\Tilde{\pi}^t \right) - \E_{\Tilde{\pi}^t} \left[\ln L^t \left(\Tilde{\pi}^t \right) | \Tilde{\pi}^{t-1} \right].\]
It is easy to see that for all $t \in [T]$, we have $\E\limits_{\Tilde{\pi}^t} \left[\Tilde{Z}^t | \Tilde{\pi}^{t-1} \right] = \Tilde{Z}^{t-1}$, and thus $\{\Tilde{Z}^t\}$ is a martingale.

We next show that this martingale has bounded increments. In brief, this follows from $\{\Tilde{Z}^t\}$ being defined in terms of the \emph{logarithm} of the surrogate loss.

\begin{lemma}
The martingale $\{\Tilde{Z}^t\}$ has bounded increments: $|\Tilde{Z}^t - \Tilde{Z}^{t-1}| \leq 4 \eta C$ for all $t \in [T]$.
\end{lemma}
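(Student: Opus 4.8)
The plan is to control the martingale increment $\Tilde{Z}^t - \Tilde{Z}^{t-1}$ by relating $\ln L^t$ at two adjacent rounds and bounding the per-round fluctuation of $\ln L^t$ in terms of $\ln L^{t-1}$. First I would write out, from the definition of the process,
\[
\Tilde{Z}^t - \Tilde{Z}^{t-1} = \ln L^t(\Tilde{\pi}^t) - \E_{\Tilde{\pi}^t}\!\left[\ln L^t(\Tilde{\pi}^t) \mid \Tilde{\pi}^{t-1}\right],
\]
so it suffices to show that, conditionally on $\Tilde{\pi}^{t-1}$, the random variable $\ln L^t(\Tilde{\pi}^t)$ varies by at most $4\eta C$ around its conditional mean; in fact I will prove the stronger pointwise statement that $|\ln L^t(\pi^t) - \ln L^{t-1}(\pi^{t-1})| \leq 2\eta C$ for every realizable one-round extension of $\pi^{t-1}$, which immediately gives the claimed bound on the increment (two realizations of $\ln L^t$ sharing the same prefix $\pi^{t-1}$ differ by at most $4\eta C$, hence so does the deviation from the conditional mean).

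The key computation is the pointwise bound. Using the definition $L^t = \sum_{j\in[d]} \exp(\eta R_j^t)$ and the one-step update $R_j^t = R_j^{t-1} + (\ell_j^t(a^t,y^t) - w_A^t)$, I would factor
\[
L^t(\pi^t) = \sum_{j\in[d]} \exp\!\left(\eta R_j^{t-1}\right)\exp\!\left(\eta(\ell_j^t(a^t,y^t) - w_A^t)\right).
\]
Since each loss coordinate lies in $[-C,C]$ and $|w_A^t| \leq C$, the exponent $\eta(\ell_j^t(a^t,y^t) - w_A^t)$ lies in $[-2\eta C, 2\eta C]$, so each factor $\exp(\eta(\ell_j^t - w_A^t))$ lies in $[e^{-2\eta C}, e^{2\eta C}]$. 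Therefore $e^{-2\eta C} L^{t-1}(\pi^{t-1}) \leq L^t(\pi^t) \leq e^{2\eta C} L^{t-1}(\pi^{t-1})$, and taking logarithms gives $|\ln L^t(\pi^t) - \ln L^{t-1}(\pi^{t-1})| \leq 2\eta C$, as desired. Combining this with the observation in the previous paragraph yields $|\Tilde{Z}^t - \Tilde{Z}^{t-1}| \leq 4\eta C$.

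I do not expect a serious obstacle here — the argument is essentially the standard trick of passing to the logarithm of an exponential-weights potential to turn an unbounded additive increment into a bounded multiplicative one. The only point requiring mild care is making the conditioning explicit: the bound on $\ln L^t(\Tilde{\pi}^t) - \E_{\Tilde{\pi}^t}[\ln L^t(\Tilde{\pi}^t)\mid \Tilde{\pi}^{t-1}]$ must hold for the random extension given the prefix, which is why I phrase the estimate as a deterministic bound valid for every realizable continuation of $\pi^{t-1}$ and then note that the deviation of a random variable from its mean is at most the diameter of its support. With that, the lemma follows, and it feeds directly into an application of Azuma's inequality to $\Tilde{Z}^T$ to obtain the high-probability bound of Theorem~\ref{thm:highprob}.
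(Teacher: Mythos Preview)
Your proposal is correct and follows essentially the same approach as the paper: you establish the pointwise bound $|\ln L^t(\pi^t) - \ln L^{t-1}(\pi^{t-1})| \leq 2\eta C$ via the factorization of $L^t$ and the range bound $|\ell_j^t - w_A^t| \leq 2C$, and then deduce the increment bound by observing that $\ln L^t(\tilde{\pi}^t)$ lies in an interval of width $4\eta C$ around $\ln L^{t-1}(\pi^{t-1})$ (the paper phrases this last step via an explicit triangle inequality through $\ln L^{t-1}$, but the content is identical).
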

\begin{proof}
It suffices to establish the bounded increments property for an arbitrary realization of the process. Towards this, fix the full transcript $\pi^T$ of the interaction, and consider any round $t \in [T]$.

Recall from the definition of the surrogate loss that
\[L^t(\pi^t) = \sum_{j \in [d]} \exp \left(\eta R^{t-1}_j \left(\pi^{t-1} \right) \right) \cdot \exp \left(\eta \left( \ell^t_j(a^t, y^t) - w^t_A \right) \right).\]
Thus, noting that $\left|\ell^t_j(a^t, y^t) - w^t_A \right| \leq 2C$ for all $j \in [d]$, we have
\begin{align*}
    \frac{L^t(\pi^t)}{L^{t-1}(\pi^{t-1})} = \frac{L^t(\pi^t)}{\sum_{j \in [d]} \exp(\eta R^{t-1}_j (\pi^{t-1}))} \in \left[ \exp \left(-\eta \cdot 2C \right), \exp \left(\eta \cdot 2C \right) \right].
\end{align*}
Taking the logarithm yields
\[\left| \ln L^t \left(\pi^t \right) - \ln L^{t-1}(\pi^{t-1}) \right| 
\leq 2 \eta C.\]
In fact, this argument shows that $\left| \ln L^t(\pi_{'}^t) - \ln L^{t-1}(\pi^{t-1}) \right| \leq 2 \eta C$ for \emph{any} transcript $\pi^t_{'}$ that equals $\pi^{t-1}$ on the first $t-1$ rounds. Hence, taking the expectation over $\Tilde{\pi}^t$ conditioned on $\pi^{t-1}$, we obtain:
\[ \left| \E\left[\ln L^t \left(\Tilde{\pi}^t \right) | \pi^{t-1} \right] - \ln L^{t-1}(\pi^{t-1}) \right| \leq 2 \eta C.\]
To conclude the proof, it now suffices to observe that:
\begin{align*}
    |Z^t - Z^{t-1}| 
    &= \left|\ln L^t \left(\pi^t \right) - \E[\ln L^t \left(\Tilde{\pi}^t \right) | \pi^{t-1}] \right| 
    \\ &\leq \left| \ln L^t(\pi^t) - \ln L^{t-1} \left(\pi^{t-1} \right) \right| + \left|\ln L^{t-1} \left(\pi^{t-1} \right) - \E\left[\ln L^t \left(\Tilde{\pi}^t \right) | \pi^{t-1} \right] \right|
    \\ &\leq 2\eta C + 2\eta C
    = 4\eta C.
\end{align*}
\end{proof}

Having established that $\{\Tilde{Z}^t\}$ is a martingale with bounded increments, we can now apply the following concentration bound (see e.g.\ \cite{dubhashi2009concentration}).
\begin{fact}[Azuma's Inequality] \label{lem:azuma}
Fix $\epsilon \!>\! 0$. For any martingale $\{\Tilde{Z}^{t}\}_{t=0}^T$ with $|\Tilde{Z}^{t} \!-\! \Tilde{Z}^{t-1}| \!\le\! \xi$ for $t \!\in\! [T]$,
\[  
    \Pr\left[\Tilde{Z}^T - \Tilde{Z}^0 \ge \epsilon\right ] \le \exp\left( - \frac{\epsilon^2}{2 \xi^2 T}\right).
\]
\end{fact}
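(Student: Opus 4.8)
The plan is to prove Azuma's inequality by the standard exponential-moment (Chernoff) method. First I would reduce to the case $\Tilde{Z}^0 = 0$ by replacing each $\Tilde{Z}^t$ with $\Tilde{Z}^t - \Tilde{Z}^0$; this preserves both the martingale property and the bounded-increment hypothesis, and it suffices to bound $\Pr[\Tilde{Z}^T \geq \epsilon]$. Writing $D^t := \Tilde{Z}^t - \Tilde{Z}^{t-1}$ for the martingale differences, the hypotheses give $\E[D^t \mid \mathcal{F}_{t-1}] = 0$ and $|D^t| \leq \xi$ almost surely, where $\mathcal{F}_{t-1}$ is the filtration to which the martingale is adapted (in our application, the $\sigma$-algebra generated by $\Tilde{\pi}^{t-1}$).

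The core ingredient is a conditional moment generating function bound: for every $\lambda > 0$ and every $t \in [T]$,
\[\E\left[e^{\lambda D^t} \,\middle|\, \mathcal{F}_{t-1}\right] \leq e^{\lambda^2 \xi^2 / 2}.\]
This is Hoeffding's lemma applied conditionally. To prove it I would use convexity of $z \mapsto e^{\lambda z}$ on $[-\xi, \xi]$ to get the pointwise bound $e^{\lambda D^t} \leq \frac{\xi - D^t}{2\xi}\, e^{-\lambda \xi} + \frac{\xi + D^t}{2\xi}\, e^{\lambda \xi}$, take conditional expectations and use $\E[D^t \mid \mathcal{F}_{t-1}] = 0$ to obtain $\E[e^{\lambda D^t} \mid \mathcal{F}_{t-1}] \leq \cosh(\lambda \xi)$, and then invoke the elementary inequality $\cosh(u) \leq e^{u^2/2}$ (which follows by comparing Taylor series term by term, since $\frac{u^{2k}}{(2k)!} \leq \frac{u^{2k}}{2^k k!}$). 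I expect this to be the main technical step, though it is routine rather than a genuine obstacle, since Azuma's inequality is a classical fact; everything else is bookkeeping.

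Given the conditional MGF bound, I would proceed by iterated conditioning. For any $\lambda > 0$, since $\Tilde{Z}^{t-1}$ is $\mathcal{F}_{t-1}$-measurable,
\[\E\left[e^{\lambda \Tilde{Z}^t}\right] = \E\left[e^{\lambda \Tilde{Z}^{t-1}} \cdot \E\left[e^{\lambda D^t} \,\middle|\, \mathcal{F}_{t-1}\right]\right] \leq e^{\lambda^2 \xi^2 / 2}\, \E\left[e^{\lambda \Tilde{Z}^{t-1}}\right],\]
and unrolling this $T$ times from $\Tilde{Z}^0 = 0$ yields $\E[e^{\lambda \Tilde{Z}^T}] \leq e^{T \lambda^2 \xi^2 / 2}$. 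Applying Markov's inequality to the nonnegative random variable $e^{\lambda \Tilde{Z}^T}$ then gives
\[\Pr\left[\Tilde{Z}^T \geq \epsilon\right] = \Pr\left[e^{\lambda \Tilde{Z}^T} \geq e^{\lambda \epsilon}\right] \leq e^{-\lambda \epsilon}\, \E\left[e^{\lambda \Tilde{Z}^T}\right] \leq \exp\left(-\lambda \epsilon + \tfrac{T \lambda^2 \xi^2}{2}\right).\]
Finally I would optimize the free parameter: setting $\lambda = \epsilon / (T \xi^2)$ makes the exponent equal to $-\epsilon^2 / (2 T \xi^2)$, so $\Pr[\Tilde{Z}^T \geq \epsilon] \leq \exp(-\epsilon^2 / (2 \xi^2 T))$. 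Undoing the reduction of the first step (i.e.\ re-adding $\Tilde{Z}^0$) gives the claimed bound $\Pr[\Tilde{Z}^T - \Tilde{Z}^0 \geq \epsilon] \leq \exp(-\epsilon^2 / (2 \xi^2 T))$, completing the proof.
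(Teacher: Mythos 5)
Your proof is correct in all its steps: the reduction to $\Tilde{Z}^0 = 0$, the conditional Hoeffding lemma via convexity of $z \mapsto e^{\lambda z}$ and $\cosh(u) \leq e^{u^2/2}$, the iterated-conditioning bound $\E[e^{\lambda \Tilde{Z}^T}] \leq e^{T\lambda^2\xi^2/2}$, Markov's inequality, and the choice $\lambda = \epsilon/(T\xi^2)$, which indeed yields the exponent $-\epsilon^2/(2\xi^2 T)$ matching the stated constant. Note, however, that the paper does not prove this statement at all: it is quoted as a classical Fact with a pointer to a standard reference (Dubhashi and Panconesi), and is used as a black box inside the proof of the high-probability AMF regret bound (Theorem~\ref{thm:highprob}). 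So there is no in-paper argument to compare against; what you have written is the standard Chernoff/exponential-moment proof of Azuma's inequality, and it is a perfectly valid self-contained substitute for the citation.
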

We instantiate this bound for our martingale with $\Tilde{Z}^0 = 0$, $\xi = 4\eta C$, and $\epsilon = \xi \sqrt{2 T \ln \frac{1}{\delta}} = 4\eta C \sqrt{2 T \ln \frac{1}{\delta}}$, and obtain that for any $\delta \in (0, 1)$,
\begin{equation} \label{eq:highprob}
    \Tilde{Z}_T \leq 4\eta C \sqrt{2 T \ln \frac{1}{\delta}} \quad \text{ with prob. } 1-\delta.
\end{equation}

At this point, let us express $\Tilde{Z}^T$ as follows:
\[
    \Tilde{Z}^T = \sum_{t=1}^T \! \left( \! \ln L^t \! \left(\Tilde{\pi}^t \right) \!\!-\!\! \E_{\Tilde{\pi}^t} \! \left[\ln L^t \! \left(\Tilde{\pi}^t \right) \! | \Tilde{\pi}^{t-1} \right] \! \right)
    = \ln L^T \!\! \left(\Tilde{\pi}^T \right) \!-\! \ln L^0 \!-\!\! \sum_{t=1}^T \!\left( \E_{\Tilde{\pi}^t} \! \left[\ln L^t \!\! \left(\Tilde{\pi}^t \right) \! | \Tilde{\pi}^{t-1} \right] \!\!-\! \ln L^{t-1} \!\! \left(\Tilde{\pi}^{t-1} \right) \!\! \right).
\]

Now, with an eye toward bounding the latter sum, observe that for $t \in [T]$,
\begin{align*}
    \E_{\Tilde{\pi}^t} \left[\ln L^t(\Tilde{\pi}^t) | \Tilde{\pi}^{t-1} \right] - \ln L^{t-1}(\Tilde{\pi}^{t-1})
    &\leq \ln \E_{\Tilde{\pi}^t} \left[ L^t(\Tilde{\pi}^t) | \Tilde{\pi}^{t-1} \right] - \ln L^{t-1} \left(\Tilde{\pi}^{t-1} \right)
    \\ &\leq \ln \left( \left(4 \eta^2 C^2 + 1 \right) L^{t-1} \left(\Tilde{\pi}^{t-1} \right) \right) - \ln L^{t-1}(\Tilde{\pi}^{t-1})
    \\ &= \ln (4 \eta^2 C^2 + 1)
    \\ & \leq 4 \eta^2 C^2. 
\end{align*}
Here, the first step is via Jensen's inequality and the last step is via $\ln (1+x) \leq x$ for $x > -1$. The second step holds since we can show (via reasoning similar to Lemma~\ref{lem:zerosum}) that for any $T \geq \ln d$, at each round $t \in [T]$ Algorithm~\ref{alg:prob} with learning rate $\eta = \sqrt{\frac{\ln d}{4TC^2}}$ achieves: 
\[\E_{\Tilde{\pi}^t} \left[ L^t(\Tilde{\pi}^t) | \Tilde{\pi}^{t-1} \right] 
\leq (4 \eta^2 C^2 + 1) L^{t-1}(\Tilde{\pi}^{t-1}).
\]

Combining the above observations with Bound~\ref{eq:highprob} and recalling $L^0 \!\!=\! d$ yields, with probability $\!\geq \! 1 \!-\! \delta$,
\begin{align*}
    \Tilde{Z}_T \leq 4\eta C \sqrt{2 T \ln \frac{1}{\delta}} \, &\iff \!\!\! \ln L^T(\Tilde{\pi}^T) - \ln d - \sum_{t=1}^T \left( \E_{\Tilde{\pi}^t}[\ln L^t(\Tilde{\pi}^t) | \Tilde{\pi}^{t-1}] - \ln L^{t-1}(\Tilde{\pi}^{t-1}) \right) \leq 4\eta C \sqrt{2 T \ln \frac{1}{\delta}}
    \\ &\iff \!\!\! \ln L^T(\Tilde{\pi}^T) \leq \ln d + \sum_{t=1}^T \left( \E_{\Tilde{\pi}^t}[\ln L^t(\Tilde{\pi}^t) | \Tilde{\pi}^{t-1}] - \ln L^{t-1}(\Tilde{\pi}^{t-1}) \right) + 4\eta C \sqrt{2 T \ln \frac{1}{\delta}}
    \\ &\implies \!\! \ln L^T(\Tilde{\pi}^T) \leq \ln d +  4 \eta^2 C^2 T + 4\eta C \sqrt{2 T \ln \frac{1}{\delta}}.
\end{align*}

Using the last inequality, with $\eta = \sqrt{\frac{\ln d}{4TC^2}}$, and the fact that $R^T \left(\Tilde{\pi}^T \right) \leq \frac{L^T \left(\Tilde{\pi}^T \right)}{\eta}$ (which is easy to deduce via Lemma~\ref{lem:realtosurrogate}), we thus obtain the desired high-probability AMF regret bound. Specifically, with probability $1-\delta$ we have:
\begin{align*}
R^T \left(\Tilde{\pi}^T \right) &\leq \frac{L^T \left(\Tilde{\pi}^T \right)}{\eta} \leq \frac{\ln d}{\eta} + 4 \eta C^2 T + 4 C\sqrt{2 T \ln \frac{1}{\delta}} 
= 2\sqrt{4 C^2 T \ln d} + 4 C\sqrt{2 T \ln \frac{1}{\delta}} 
\\ &= 4C\sqrt{T} \left(\sqrt{\ln d} + \sqrt{2\ln \frac{1}{\delta}} \right) 
\leq 4C\sqrt{T} \cdot \sqrt{2} \cdot \sqrt{\ln d + 2\ln \frac{1}{\delta}} \leq 8C\sqrt{T \ln \frac{d}{\delta}}.
\end{align*}
In the last line, we used that $\sqrt{x} + \sqrt{y} \leq \sqrt{2} \sqrt{x+y}$ for $x, y \geq 0$.
\end{proof}


\section{Multicalibration: The Algorithm and Full Proofs} \label{app:multicalibration}

\paragraph{A simple and efficient algorithm for the Learner} As mentioned in the proof sketch of Theorem~\ref{thm:multiguarantee}, in the setting of multicalibration, our framework's general Algorithm~\ref{alg:prob} has a particularly simple \emph{approximate} version (originally derived in~\cite{multivalid}) that lets the Learner (almost) match the above bounds on the multicalibration constant $\alpha$. This approximate algorithm is very efficient and has ``low'' randomization: namely, at each round the Learner plays an explicitly given distribution which randomizes over at most two points in $\a_r$.

\begin{algorithm}[H]
\SetAlgoLined
\begin{algorithmic}
\FOR{$t=1, \dots, T$}
	\STATE Observe $\theta^t$.
	\STATE For each $i \in [n]$, compute:
	\[C^i_{t-1} := \sum_{\substack{g \in \g : \, \theta^t \in g }} \exp \left( \eta \sum_{s = 1}^{t-1} \ell^s_{i, g, +1} \left(a^s, b^s \right) \right) - \exp \left(- \eta \sum_{s = 1}^{t-1} \ell^s_{i, g, +1} \left(a^s, b^s \right) \right). \]
    \IF {$C^i_{t-1} > 0$ for all $i \in [n]$}
        \STATE Predict $a^t= 1$.
    \ELSIF {$C^i_{t-1} < 0$ for all $i \in [n]$}
        \STATE Predict $a^t = 0$.  
    \ELSE 
        \STATE Find $j \in [n-1]$ such that $C^j_{t-1} \cdot C^{j+1}_{t-1} \leq 0$.
        \STATE Define $q^t \in [0, 1]$ as follows (using the convention that 0/0 = 1):
        \[q^t := \left|C^{j+1}_{t-1} \right| / \left(\left|C^{j+1}_{t-1} \right| + \left|C^j_{t-1} \right|\right).\]
        \STATE Sample $a^t = \frac{j}{n}- \frac{1}{rn}$ with probability $q^t$ and $a^t = \frac{j}{n}$ with probability $1-q^t$.
\ENDIF
\ENDFOR
\end{algorithmic}
\caption{Simple Multicalibrated Learner}
\label{alg:multicalibration}
\end{algorithm}

\begin{theorem}\label{thm:algo-multiguarantee}
Algorithm~\ref{alg:multicalibration} achieves the multicalibration guarantees of Theorem~\ref{thm:multiguarantee}.
\end{theorem}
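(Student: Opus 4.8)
The plan is to deduce Theorem~\ref{thm:algo-multiguarantee} directly from the suboptimal-Learner bounds of Theorem~\ref{thm:suboptimalbounds}, applied to exactly the instantiation of the framework that was used in the proof of Theorem~\ref{thm:multiguarantee} (pure action set $\a_r$, adversary action set $[0,1]$, loss coordinates $\ell^t_{i,g,\sigma}$, dimension $d = 2|\g|n$, $C=1$, $\eta = \sqrt{\ln(2|\g|n)/(4T)} \le \tfrac12$ for $T\ge\ln(2|\g|n)$). Concretely, I would show that at every round $t$ the mixed action $x^t \in \Delta\a_r$ prescribed by Algorithm~\ref{alg:multicalibration} satisfies $\psi^t(x^t) \le \tfrac{1}{rn}$, so that $w^t_\mathrm{bd} := \tfrac{1}{rn}$ is a legitimate achieved AMF value bound (Definition~\ref{def:achievedamf}), since $\tfrac{1}{rn}\le 1 = C$. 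Theorem~\ref{thm:suboptimalbounds} then gives $\E[\max_{i,g,\sigma}\sum_t \ell^t_{i,g,\sigma}(a^t,b^t)] \le \sum_t w^t_\mathrm{bd} + 4\sqrt{T\ln(2|\g|n)} = \tfrac{T}{rn} + 4\sqrt{T\ln(2|\g|n)}$, together with the analogous high-probability statement; dividing by $T$ and identifying the left-hand side with the tightest multicalibration constant $\alpha$ (as in the proof of Theorem~\ref{thm:multiguarantee}) reproduces the stated bounds verbatim.

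The first step is to write out the round-$t$ objective $\psi^t$ for this instantiation. Using $\ell^t_{i,g,\sigma}(a,b) = \sigma\,1_{\theta^t\in g}\,1_{a\in B^i_n}(b-a)$ together with $\ell^s_{i,g,-1} = -\ell^s_{i,g,+1}$, and factoring out the common positive normalizer $Z := \sum_{i,g,\sigma}\exp(\eta\sum_{s<t}\ell^s_{i,g,\sigma}(a^s,b^s))$ from the weights $\chi^t_{i,g,\sigma}$, one obtains
\[
\psi^t(x^t) = \frac{1}{Z}\,\max_{b^t\in[0,1]}\ \sum_{i\in[n]} C^i_{t-1}\!\!\sum_{a\in\a_r\cap B^i_n}\!\! x^t(a)\,(b^t - a),
\]
where $C^i_{t-1}$ is precisely the quantity computed in Algorithm~\ref{alg:multicalibration}. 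Since the inner expression is affine in $b^t$, the Adversary's best response sits at an endpoint $b^t\in\{0,1\}$ (or is irrelevant when the coefficient of $b^t$ is zero), which is exactly why the simple two-point construction can attain the AMF value up to a $\tfrac{1}{rn}$ slack.

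The core of the argument is then a short case analysis on the sign pattern of $(C^i_{t-1})_{i\in[n]}$. If all $C^i_{t-1}>0$, putting all mass on $a^t=1\in B^n_n$ leaves only the term $C^n_{t-1}(b^t-1)\le 0$, whose maximum over $b^t\in[0,1]$ is $0$; symmetrically, if all $C^i_{t-1}<0$, the point mass at $a^t=0\in B^1_n$ yields $\psi^t(x^t)=0$. Otherwise there is $j\in[n-1]$ with $C^j_{t-1}C^{j+1}_{t-1}\le 0$, and the Learner mixes between $\tfrac{j}{n}-\tfrac{1}{rn}\in B^j_n$ (weight $q^t$) and $\tfrac{j}{n}\in B^{j+1}_n$ (weight $1-q^t$). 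The choice $q^t = |C^{j+1}_{t-1}|/(|C^{j+1}_{t-1}|+|C^j_{t-1}|)$ is calibrated exactly so that the coefficient of $b^t$, namely $q^t C^j_{t-1} + (1-q^t)C^{j+1}_{t-1}$, vanishes; the surviving constant term then equals $\pm\tfrac{1}{rnZ}\cdot\tfrac{|C^j_{t-1}|\,|C^{j+1}_{t-1}|}{|C^j_{t-1}|+|C^{j+1}_{t-1}|}$. I would finish by bounding $\tfrac{|C^j_{t-1}||C^{j+1}_{t-1}|}{|C^j_{t-1}|+|C^{j+1}_{t-1}|}\le |C^j_{t-1}|$, and $|C^j_{t-1}| \le \sum_g\sum_\sigma \exp(\eta\sum_{s<t}\ell^s_{j,g,\sigma}) \le Z$ (triangle inequality plus $\ell^s_{j,g,-1}=-\ell^s_{j,g,+1}$), which yields $\psi^t(x^t)\le\tfrac{1}{rn}$; the degenerate subcases where one or both of $C^j_{t-1},C^{j+1}_{t-1}$ vanish (handled by the $0/0=1$ convention, which then forces a deterministic play into the bucket with zero coefficient) are subsumed since the residual is then $0$.

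The main obstacle --- essentially the only nonroutine point --- is the sign-change case: checking that the specific $q^t$ annihilates the Adversary's linear term for \emph{both} orientations of the sign change, correctly tracking the sign of the leftover constant, and pinning it against the normalizer $Z$ so that the clean $\tfrac{1}{rn}$ bound comes out. The rest (existence of the crossing index $j$ in the else-branch, the reduction via Theorem~\ref{thm:suboptimalbounds}, and the algebra converting an AMF-value bound into an $\alpha$-bound) is immediate or already carried out in the proof of Theorem~\ref{thm:multiguarantee}.
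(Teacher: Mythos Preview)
Your proposal is correct and follows essentially the same approach as the paper's own proof: reduce to Theorem~\ref{thm:suboptimalbounds} by showing that Algorithm~\ref{alg:multicalibration} achieves $\psi^t(x^t)\le \tfrac{1}{rn}$ at every round, via the same case split on the sign pattern of $(C^i_{t-1})_i$ and the same bound $|C^j_{t-1}|\le Z$. Your computation of the residual in the sign-change case, $\pm\tfrac{1}{rnZ}\cdot\tfrac{|C^j_{t-1}||C^{j+1}_{t-1}|}{|C^j_{t-1}|+|C^{j+1}_{t-1}|}$, is in fact more precise than the paper's, which writes the value as $\tfrac{1}{Z^t}\cdot\tfrac{1}{rn}C^j_{t-1}$ (dropping the factor $q^t\le 1$); both lead to the same final bound.
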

\begin{proof}
Let us instantiate the generic probabilistic Algorithm~\ref{alg:prob} with our current set of loss functions. In parallel with the notation of Algorithm~\ref{alg:prob}, for any bucket $i$, group $g$ and $\sigma \in \{-1,+1\}$, we define
\[\chi^t_{i, g, \sigma} := \frac{1}{Z^t} \exp\left(\eta \sum_{s=1}^{t-1} \ell_{i, g, \sigma}^s(a^s,b^s)\right),\]
where
\[Z^t := \sum_{i' \in [n], g' \in \g, \sigma' = \pm 1} \exp\left(\eta \sum_{s=1}^{t-1} \ell_{i', g', \sigma'}^s(a^s,b^s)\right).\]
In this notation, at each round $t \in [T]$, the Learner has to solve the following zero-sum game:
\[x^t \in \argmin_{x \in \Delta \a_r} \max_{b \in [0, 1]} \E_{a \sim x} \left[\xi^t \left(a, b \right) \right],\]
where we define
\[\xi^t(a, b) := \sum_{i \in [n], g \in \g, \sigma \in \{-1, 1\}} \chi^t_{i, g, \sigma} \cdot \ell^t_{i, g, \sigma}(a, b) \quad \text{ for } a \in \a_r, b \in [0, 1].\]
For any $a$, let $i_a$ denote the unique bucket index $i \in [n]$ such that $a \in B^i_n$. Substituting
\[\ell^t_{i, g, \sigma} (a, b) = \sigma \cdot 1_{\theta^t \in g} \cdot 1_{a \in B^i_n} \cdot (b - a),\] we see that most terms in the summation disappear, and what remains is precisely
\[\xi^t(a, b) = \sum_{g \in \g: \, \theta^t \in g \,} \sum_{\sigma \in \{-1, 1\}} \chi^t_{i_a, g, \sigma} \cdot \sigma (b-a) = (b-a) \cdot \frac{C^{i_a}_{t-1}}{Z^t},\]
where $C^{i_a}_{t-1} = Z^t \sum\limits_{g \in \g: \theta^t \in g} \chi^t_{i_a, g, +1} - \chi^t_{i_a, g, -1}$ is as defined in the pseudocode for Algorithm~\ref{alg:multicalibration}. 

Crucially, for any distribution $x$ chosen by the Learner, her attained utility after the Adversary best-responds has a simple closed form. Namely, given any $x$ played by the Learner, we have
\begin{align*}
    \max_{b \in [0, 1]} \E_{a \sim x} \left[\xi^t \left(a, b \right) \right] 
    &= \frac{1}{Z^t} \left( \max_{b \in [0, 1]} \left(b \cdot \E_{a \sim x} \left[C^{i_{a}}_{t-1} \right] \right)  - \E_{a \sim x} \left[a \cdot C^{i_{a}}_{t-1} \right] \right),
    \\ &= \frac{1}{Z^t} \left( \max \left( \E_{a \sim x} \left[C^{i_{a}}_{t-1} \right], 0 \right) - \E_{a \sim x} \left[a \cdot C^{i_{a}}_{t-1} \right] \right).
\end{align*}

With this in mind, the Learner can easily achieve value $0$ in the following two cases. When $C^i_{t-1} > 0$ for all $i \in [n]$, playing $a = 1$ deterministically gives: $\max \left( \E\limits_{a \sim x} \left[C^{i_{a}}_{t-1} \right], 0 \right) - \E\limits_{a \sim x} \left[a \cdot C^{i_{a}}_{t-1} \right] = \E\limits_{a \sim x} \left[C^{i_{a}}_{t-1} \right] - \E\limits_{a \sim x} \left[C^{i_{a}}_{t-1} \right] = 0$. When $C^i_{t-1} < 0$ for all $i \in [n]$, she can play $a = 0$ deterministically, ensuring that $\max \left( \E\limits_{a \sim x} \left[C^{i_{a}}_{t-1} \right], 0 \right) - \E\limits_{a \sim x} \left[a \cdot C^{i_{a}}_{t-1} \right] = 0 - 0 = 0$. 

In the final case, when there are nonpositive and nonnegative quantities among $\{C^i_{t-1}\}_{i \in [n]}$, note that there exists an intermediate index $j \in [n-1]$ such that $C^j_{t-1} \cdot C^{j+1}_{t-1} \leq 0$. Then, it is easy to check that $q^t$, as defined in Algorithm~\ref{alg:multicalibration}, satisfies
\[q^t C^j_{t-1} + (1-q^t) C^{j+1}_{t-1} = 0.\]
Using this relation, we obtain that when the Learner plays $a^t = \frac{j}{n}- \frac{1}{rn}$ with probability $q^t$ and $a^t = \frac{j}{n}$ with probability $1-q^t$, she accomplishes value 
\begin{align*}
    \max_{b \in [0, 1]} \E_{a^t} \left[\xi^t \left(a^t, b \right) \right] 
    =& \frac{1}{Z^t} \left( \max \left( \E \left[C^{i_{a^t}}_{t-1} \right], 0 \right) - \E \left[a^t \cdot C^{i_{a^t}}_{t-1} \right] \right)
    \\ =& \frac{1}{Z^t} \left( \max \left( q^t \cdot C_{t-1}^j + (1-q^t) C_{t-1}^{j+1}, 0 \right) - \left( q^t\left(\tfrac{j}{n} -\tfrac{1}{rn}\right) C_{t-1}^j + (1-q^t) \tfrac{j}{n}C_s^{j+1} \right) \right)
    \\ =& \frac{1}{Z^t} \cdot \frac{1}{rn} C_{t-1}^j,
\end{align*}
and thus, recalling that $C_j^{t-1} = Z^t \sum_{g \in \g: \theta^t \in g} \chi^t_{j, g, +1} - \chi^t_{j, g, -1}$, we obtain
\[\max_{b \in [0, 1]} \E_{a^t} \left[\xi^t \left(a^t, b \right) \right] 
= \frac{1}{rn} \sum\limits_{g \in \g: \theta^t \in g} \chi^t_{j, g, +1} - \chi^t_{j, g, -1}
\leq \frac{1}{rn} \sum_{i \in [n], g \in \g, \sigma = \pm 1} \chi^t_{i, g, \sigma}
= \frac{1}{rn},\]
where the last line is due to the quantities $\chi_{i, g, \sigma}$ forming a probability distribution.

Therefore, in the language of Section~\ref{sec:robust}, the Learner who uses Algorithm~\ref{alg:multicalibration} guarantees herself \emph{achieved AMF value bounds} 
\[w^t_\mathrm{bd} = \frac{1}{rn} \text{ for } t \in [T].
\]
Hence, by Theorem~\ref{thm:suboptimalbounds}, our (suboptimal) Learner achieves the claimed multicalibration bounds.
\end{proof}


\section{Multicalibeating: Full Statements and Proofs} \label{app:calibeat}

\subsection{Calibeating a Single Forecaster: Proof of Theorem~\ref{thm:calibeating}}

\begin{proof}[Proof of Theorem~\ref{thm:calibeating}]
For the exposition of this full proof, we will employ some probabilistic notation that we have not seen in the main Section~\ref{sec:multicalibeating}. We briefly define it here. 

For any subsequence $S \subseteq [T]$ of rounds, $t \sim S$ denotes a uniformly random round in $S$. We denote the empirical distributions of the values of $f$, $a$, $(f, a)$ on $S \subseteq [T]$ by $\mathcal{D}^f(S),\mathcal{D}^a(S), \mathcal{D}^{f\times a}(S)$ (or simply $\mathcal{D}^f,\mathcal{D}^a, \mathcal{D}^{f\times a}$ when $S = [T]$). In this notation, we e.g.\ have $\mathcal{R}^f(\pi^T) = \E_{d\sim \mathcal{D}^f}[\Var_{t\sim S^d}[b^t]]$.

Our quantity of interest, the Brier score $\mathcal{B}^a$ of the Learner's predictions $a$, is inconvenient to handle: indeed, the calibration-refinement decomposition of $\mathcal{B}^a$ is of little utility since the Learner's predictions can take arbitrary real values (in particular, they might all be distinct, in which case the refinement score would be 0, and all of the Brier score would be contained in the calibration error). Instead, we define a convenient surrogate notion of bucketed Brier/calibration/refinement score.
\begin{align*}
    \mathcal{K}_n^a(\pi^T)&:= \frac{1}{T}\sum_{i\in [n]} |S_i|(\bar{a}(S_i)-\bar{b}(S_i))^2.\\
    \mathcal{R}_n^a(\pi^T)&:= \frac{1}{T}\sum_{i\in[n]}\sum_{t\in S_i} (b^t-\bar{b}(S_i))^2
        = \frac{1}{T}\sum_{i\in[n]}|S_i|\Var_{t\in S_i}[b^t]
        = \E_{i\sim\mathcal{D}^i}[\Var_{t\sim S_i}[b^t]].\\
    \mathcal{B}_n^a(\pi^T)&:= \mathcal{K}_n^a(\pi^T) + \mathcal{R}_n^a(\pi^T).
\end{align*}

The following lemma shows that as long as $n$ is large enough, the surrogate Brier score is a good estimate of the true Brier score of our predictions (i.e. our squared error).
\begin{lemma}\label{lem:bucketedbrier}
$\mathcal{B}^a\leq \mathcal{B}_n^a+ \frac{1}{n}$.
\end{lemma}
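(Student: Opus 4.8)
The plan is to recognize $\mathcal{B}^a_n$ as the \emph{exact} Brier score of the bucket-averaged pseudo-predictor $\tilde{a}$ (with $\tilde a^t := \bar a(S_{i_{a^t}})$), and then control, bucket by bucket, the discrepancy between the squared errors of $a$ and of $\tilde a$ using a first-order cancellation that is available because $\tilde a$ is precisely the within-bucket average of $a$.

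First I would record the elementary identity: within each bucket $S_i$, expanding $\sum_{t \in S_i}(\bar a(S_i) - b^t)^2$ around $\bar b(S_i)$ makes the cross term vanish (since $\sum_{t\in S_i}(b^t - \bar b(S_i)) = 0$), so $|S_i|(\bar a(S_i)-\bar b(S_i))^2 + \sum_{t\in S_i}(b^t-\bar b(S_i))^2 = \sum_{t\in S_i}(\bar a(S_i) - b^t)^2$. Summing over $i \in [n]$ and using the definitions of $\mathcal{K}^a_n, \mathcal{R}^a_n$ gives $\mathcal{B}^a_n = \frac1T \sum_{i\in[n]}\sum_{t\in S_i}(\bar a(S_i) - b^t)^2 = \frac1T\sum_{t\in[T]}(\tilde a^t - b^t)^2$.

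Next, for each bucket I would expand $(a^t-b^t)^2 = \big((a^t-\bar a(S_i)) + (\bar a(S_i)-b^t)\big)^2$ and sum over $t \in S_i$. Because $\sum_{t\in S_i}(a^t - \bar a(S_i)) = 0$, the per-bucket difference $\Delta_i := \sum_{t\in S_i}(a^t-b^t)^2 - \sum_{t\in S_i}(\bar a(S_i)-b^t)^2$ collapses to $\sum_{t\in S_i}(a^t-\bar a(S_i))^2 + 2\sum_{t\in S_i}(a^t-\bar a(S_i))(\bar a(S_i)-b^t)$, and the same zero-sum identity lets me replace $b^t$ by $b^t - \bar b(S_i)$ in the last factor. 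Cauchy--Schwarz bounds the cross term by $\sqrt{\sum_{t\in S_i}(a^t-\bar a(S_i))^2}\cdot\sqrt{\sum_{t\in S_i}(b^t-\bar b(S_i))^2}$. I would then plug in $\sum_{t\in S_i}(a^t-\bar a(S_i))^2 = |S_i|\,\Var_{t\in S_i}[a^t] \le |S_i|/(4n^2)$ — valid since every $a^t$ with $t \in S_i$ lies in a bucket $B^i_n$ of width $1/n$ — together with $\sum_{t\in S_i}(b^t-\bar b(S_i))^2 \le |S_i|/4$ (labels in $[0,1]$), obtaining
\[
\Delta_i \;\le\; \frac{|S_i|}{4n^2} + 2\sqrt{\frac{|S_i|}{4n^2}\cdot\frac{|S_i|}{4}} \;=\; \frac{|S_i|}{4n^2} + \frac{|S_i|}{2n} \;\le\; \frac{|S_i|}{n}
\]
for $n \ge 1$. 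Summing over $i$ and dividing by $T$ gives $\mathcal{B}^a - \mathcal{B}^a_n = \frac1T\sum_{i\in[n]}\Delta_i \le 1/n$, as claimed.

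The main obstacle is getting the constant right. The naïve per-round estimate $\big|(a^t-b^t)^2 - (\tilde a^t-b^t)^2\big| = |a^t-\tilde a^t|\cdot|a^t+\tilde a^t-2b^t| \le (1/n)\cdot 2$ only yields the weaker bound $\mathcal{B}^a \le \mathcal{B}^a_n + 2/n$. Pushing it down to $1/n$ genuinely requires (i) that $\tilde a$ be the exact bucket average of $a$, so the first-order deviations $\sum_{t\in S_i}(a^t - \bar a(S_i))$ cancel; (ii) Cauchy--Schwarz rather than the triangle inequality on the surviving cross term; and (iii) the variance bound $\Var_{t\in S_i}[a^t]\le 1/(4n^2)$ for a variable supported on an interval of width $1/n$. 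Everything else is routine bookkeeping.
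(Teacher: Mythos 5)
Your proof is correct and takes essentially the same approach as the paper's: both hinge on the per-bucket variance estimates $\Var_{t\sim S_i}[a^t]\le 1/(4n^2)$ (bucket width $1/n$) and $\Var_{t\sim S_i}[b^t]\le 1/4$, combined via Cauchy--Schwarz on the cross term to give $\tfrac{1}{4n^2}+\tfrac{1}{2n}\le\tfrac1n$ per bucket. The only difference is presentational --- the paper writes $\E_{t\sim S_i}[(a^t-b^t)^2]=(\bar a(S_i)-\bar b(S_i))^2+\Var_{t\sim S_i}[a^t-b^t]$ and then invokes $\Var[X+Y]\le \Var X+\Var Y+2\sqrt{\Var X\Var Y}$, whereas you first identify $\mathcal B^a_n$ as the exact Brier score of $\tilde a$ and then expand $(a^t-b^t)^2-(\tilde a^t-b^t)^2$ directly --- but the two computations are algebraically identical once one notes that your $\Delta_i/|S_i|$ equals the paper's $\Var[a^t-b^t]-\Var[b^t]=\Var[a^t]-2\mathrm{Cov}(a^t,b^t)$.
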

\begin{proof} We first compute that the original Brier score $\mathcal{B}^a$ equals
\[
    \mathcal{B}^a
    := \frac{1}{T}\sum_{t=1}^T (a^t-b^t)^2
    = \frac{1}{T}\sum_{i=1}^n\sum_{t\in S_i} (a^t-b^t)^2
    = \frac{1}{T}\sum_{i=1}^n|S_i|\sum_{t\in S_i} \frac{1}{|S_i|}(a^t-b^t)^2.
\]
The inner sum is the expectation, over the transcript, of $(a^t-b^t)^2$ conditioned on $a^t\in B^i_n$, so we can write:
$$\mathcal{B}^a = \frac{1}{T}\sum_{i=1}^n|S_i| \E_{t\sim S_i}[(a^t-b^t)^2].$$
We can decompose the expected value as:
$$\E_{t\sim S_i}[(a^t-b^t)^2]=(\E_{t\sim S_i}[a^t-b^t])^2+\Var_{t\sim S_i}[a^t-b^t].$$
By linearity of expectation, the expectation-squared term satisfies:
$$(\E_{t\sim S_i}[a^t-b^t])^2=(\bar{a}(S_i)-\bar{b}(S_i))^2.$$
Meanwhile, the variance term can be upper bounded using the following fact:
\begin{fact}
For any random variables $X, Y$:
$$\Var[X+Y]= \Var[X] + \Var[Y] + 2\mathrm{Cov}(X,Y)\leq \Var[X] + \Var[Y]+2\sqrt{\Var[X]\Var[Y]}.$$
where the inequality follows from an application of Cauchy-Schwartz.
\end{fact}
Instantiating $X=a^t$ and $Y=-b^t$, and upper bounding $\sqrt{\Var[X]} \leq \frac{1}{2n}, \sqrt{\Var[Y]}\leq \frac{1}{2}$, we get:
\begin{align*}
\Var_{t\sim S_i}[a^t-b^t]
&\leq \Var_{t\sim S_i}[a^t]+\Var_{t\sim S_i}[b^t]+2\sqrt{\Var_{t\sim S_i}[a^t]\Var_{t\sim S_i}[b^t]},\\
&\leq \frac{1}{(2n)^2} + \Var_{t\sim S_i}[b^t] + \frac{1}{2n},\\
&\leq \Var_{t\sim S_i}[b^t] + \frac{1}{n}.
\end{align*}
Putting the above back together gives the desired bound on the difference of $\mathcal{B}^a$ and $\mathcal{B}^a_n$:
\begin{align*}
\mathcal{B}^a 
&= \frac{1}{T}\sum_{i=1}^n|S_i| \E_{t\sim S_i}[(a^t-b^t)^2],\\
&\leq \frac{1}{T}\sum_{i=1}^n|S_i|\left((\bar{a}(S_i)-\bar{b}(S_i))^2 + \Var_{t\sim S_i}[b^t] + \frac{1}{n}\right),\\
&=  \frac{1}{T}\sum_{i=1}^n|S_i|(\bar{a}(S_i)-\bar{b}(S_i))^2 + \frac{1}{T}\sum_{i=1}^n|S_i|\Var_{t\sim S_i}[b^t]+ \frac{1}{n},\\
&=  \mathcal{K}_n^a +\mathcal{R}_n^a+\frac{1}{n}.
\end{align*}
\end{proof}

Having shown that the surrogate Brier score $\mathcal{B}^a_n$ closely approximates the Learner's original score $\mathcal{B}^a$, we can now focus on bounding the calibration and refinement scores associated with $\mathcal{B}^a_n$.

\textit{Calibration:}
Our multicalibration condition on $\Theta$ implies that $\frac{|S_i|}{T}|\bar{b}(S_i)-\bar{a}(S_i)|\leq\alpha$ for $i \in [n]$. The calibration score bound then follows directly.
\[
    \mathcal{K}_n^a
    = \frac{1}{T}\sum_{i\in[n]}|S_i|(\bar{b}(S_i)-\bar{a}(S_i))^2
    \leq \frac{1}{T}\sum_{i\in[n]}|S_i||\bar{b}(S_i)-\bar{a}(S_i)|
    \leq \sum_{i\in[n]} \alpha
    = \alpha n.
\]

\textit{Refinement:} We claim that the Learner's surrogate refinement score relates to the refinement score of the forecaster $f$ as follows: \[\mathcal{R}^a_n \leq \mathcal{R}^f + \alpha n(|D_f|+1)+\frac{1}{n}.\]
The proof proceeds in two steps, connecting $\mathcal{R}^f$ and $\mathcal{R}^a$ via a quantity we call $\mathcal{R}^{f\times a}$. 
\begin{definition}[Joint Refinement Score] \label{def:jointrefinement}
$$\mathcal{R}^{f\times a} := \E_{d,i\sim \mathcal{D}^{f\times a}}[\Var_{t\sim S^d_i}[b^t]] = \frac{1}{T}\sum_{d\in D_f, i\in [n]} |S_i^d| \Var_{t\sim S_i^d}[b^t].$$
\end{definition}
Recall that refinement score, although we defined it for a forecaster, is really a property of a \textit{partition} of the days. It's equally well defined if, instead of partitioning by days on which a forecaster makes a certain forecast, we partition on say, even and odd days, or sunny vs cloudy vs rainy vs snowy days. Or, in the case of Definition \ref{def:jointrefinement}, the partition $\{S_i^d\}_{i\in[n],d\in D}$.

First, note that the joint refinement score of $a$ and $f$ is no worse than the refinement score of $f$.
\begin{observation}
$\mathcal{R}^f \geq \mathcal{R}^{f\times a}.$

\end{observation}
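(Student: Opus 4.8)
The plan is to recognize this inequality as an instance of the law of total variance: passing to a finer partition can only decrease the average within-block variance. Fix a level $d \in D_f$ of the forecaster and consider the block $S^d$ of rounds on which $f$ predicts $d$. The sub-blocks $\{S^d_i\}_{i \in [n]}$ partition $S^d$ according to which of the $n$ buckets the Learner's prediction $a^t$ falls into. Viewing $t \sim S^d$ as first drawing a bucket index $i$ with probability $|S^d_i|/|S^d|$ and then drawing $t$ uniformly from $S^d_i$, the law of total variance gives
\[
\Var_{t \sim S^d}[b^t] = \E_{i}\!\left[\Var_{t \sim S^d_i}[b^t]\right] + \Var_{i}\!\left[\E_{t \sim S^d_i}[b^t]\right] \;\geq\; \E_{i}\!\left[\Var_{t \sim S^d_i}[b^t]\right],
\]
where the outer expectation and variance over $i$ are taken with respect to the weights $|S^d_i|/|S^d|$, and the inequality holds simply because a variance is nonnegative. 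Written out, this says $\Var_{t \sim S^d}[b^t] \geq \sum_{i \in [n]} \tfrac{|S^d_i|}{|S^d|}\Var_{t \sim S^d_i}[b^t]$.

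Next I would multiply this inequality through by $|S^d|/T$ and sum over all $d \in D_f$. Since $\sum_{t \in S^d}(b^t - \bar b(S^d))^2 = |S^d|\,\Var_{t\sim S^d}[b^t]$, the left-hand side sums exactly to $\mathcal{R}^f(\pi^T)$ by its definition, while the right-hand side becomes $\sum_{d \in D_f}\sum_{i \in [n]} \tfrac{|S^d_i|}{T}\,\Var_{t \sim S^d_i}[b^t]$, which is precisely $\mathcal{R}^{f\times a}$ by Definition~\ref{def:jointrefinement}. Combining, $\mathcal{R}^f \geq \mathcal{R}^{f\times a}$, as claimed.

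I do not expect any real obstacle here; the proof is essentially one application of the variance decomposition. The only points requiring minor care are purely bookkeeping: being consistent about the count-weighted empirical distributions $\mathcal{D}^f$ and $\mathcal{D}^{f\times a}$ used to weight the conditional variances, and handling empty sub-blocks $S^d_i$ (which contribute $0$ to all sums and may simply be omitted, or handled with the $0/0 = 1$ convention already in force). Everything else is a routine rearrangement of sums.
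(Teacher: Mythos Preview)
Your proof is correct and follows essentially the same approach as the paper: both apply the law of total variance on each block $S^d$ (conditioning on the Learner's bucket index) to get $\Var_{t\sim S^d}[b^t] \geq \E_i[\Var_{t\sim S^d_i}[b^t]]$, then take the $|S^d|/T$-weighted sum over $d$ to obtain $\mathcal{R}^f \geq \mathcal{R}^{f\times a}$. The only difference is cosmetic, with the paper phrasing the averaging step as an expectation over $d \sim \mathcal{D}^f$ rather than writing out the explicit sums.
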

Intuitively this should make sense, since $\{S_i^d\}$ is a \textit{refinement} of $f$'s level sets by $a$'s level sets. If $a$ is ``useful'', then this inequality would be strict, as combining with $a$ would explain away more of the variance. Refining by $a$ cannot decrease the amount of variance captured by the partition.

Reversing our perspective, we can think of $\{S_i^d\}$ as a refinement of $a$'s level sets by $f$'s level sets. The key idea is to use multicalibration to show that refining by $f$ is not ``useful." Multicalibration ensures us that almost all of $f$'s explanatory power is captured by $a$.

\begin{observation}
$\mathcal{R}_n^a=\mathcal{R}^{f\times a}+\E_{i\sim \mathcal{D}^a}[\Var_{d\sim\mathcal{D}^f(S_i)}[\bar{b}(S_i^d)]].$
\end{observation}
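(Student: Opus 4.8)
The plan is to read this identity as a within-bucket application of the law of total variance, followed by averaging over buckets. Fix any bucket $i \in [n]$ with $|S_i| > 0$ and consider the random round $t \sim S_i$ together with the forecaster value $f^t \in D_f$ it induces; conditioning on the event $f^t = d$ is exactly the same as conditioning on $t \in S^d_i$. Since the empirical distribution of $f^t$ under $t\sim S_i$ places mass $|S^d_i|/|S_i|$ on each $d$, it equals $\mathcal{D}^f(S_i)$, and $\E_{t\sim S^d_i}[b^t] = \bar{b}(S^d_i)$ by definition of the subsequence average. The standard conditional variance decomposition therefore gives
\[
\Var_{t\sim S_i}[b^t] \;=\; \E_{d\sim\mathcal{D}^f(S_i)}\!\big[\Var_{t\sim S^d_i}[b^t]\big] \;+\; \Var_{d\sim\mathcal{D}^f(S_i)}\!\big[\bar{b}(S^d_i)\big].
\]

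Next I would take the expectation of both sides over $i \sim \mathcal{D}^a$ (equivalently $\mathcal{D}^i$), the empirical bucket distribution placing mass $|S_i|/T$ on bucket $i$. The left-hand side becomes $\E_{i\sim\mathcal{D}^a}[\Var_{t\sim S_i}[b^t]] = \mathcal{R}^a_n$ by the definition of $\mathcal{R}^a_n$, and the second term on the right becomes exactly $\E_{i\sim\mathcal{D}^a}[\Var_{d\sim\mathcal{D}^f(S_i)}[\bar{b}(S^d_i)]]$, matching the observation's right-hand side. For the first term I would invoke the chain rule for the empirical joint distribution: sampling $(d,i)\sim\mathcal{D}^{f\times a}$ is the same as first sampling $i\sim\mathcal{D}^a$ and then $d\sim\mathcal{D}^f(S_i)$, because $\Pr[(d,i)] = |S^d_i|/T = (|S_i|/T)\cdot(|S^d_i|/|S_i|)$. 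Hence
\[
\E_{i\sim\mathcal{D}^a}\E_{d\sim\mathcal{D}^f(S_i)}\!\big[\Var_{t\sim S^d_i}[b^t]\big] \;=\; \E_{(d,i)\sim\mathcal{D}^{f\times a}}\!\big[\Var_{t\sim S^d_i}[b^t]\big] \;=\; \mathcal{R}^{f\times a}
\]
by Definition~\ref{def:jointrefinement}. Combining the two displays yields the claimed identity.

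There is essentially no hard step here; the only point requiring care is the treatment of degenerate buckets and sub-buckets (empty $S_i$ or $S^d_i$), but these receive zero mass under the relevant empirical distributions, so the averaging identities hold verbatim under the natural convention that a mean or variance over an empty set is weighted by zero — or, equivalently, one may restrict every sum to nonempty (sub-)buckets throughout without changing anything.
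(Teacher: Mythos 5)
Your proof is correct and mirrors the paper's argument: both apply the law of total variance within each bucket $S_i$, conditioning on the level set of $f$, and then average over $i \sim \mathcal{D}^a$. Your explicit verification of the chain rule $\Pr[(d,i)] = (|S_i|/T)\cdot(|S^d_i|/|S_i|)$ is a nice touch that the paper leaves implicit when it identifies the iterated expectation with $\E_{d,i\sim\mathcal{D}^{f\times a}}[\cdot]$.
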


\begin{observation}
The extra error term is small:
$\E_{i\sim \mathcal{D}^a}[\Var_{d\sim\mathcal{D}^f(S_i)}[\bar{b}(S_i^d)]] \leq \alpha n (|D|+1)+\frac{1}{n}.$
\end{observation}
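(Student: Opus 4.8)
The plan is to expand the target quantity into explicit weighted sums and then control the ``spread of bucket-averages'' using the multicalibration constraints on the level sets of $f$ (and, optionally, on $\Theta$) together with the elementary fact that averaging predictions over a width-$1/n$ bucket cannot leave that bucket. Write $p_i := |S_i|/T$ for $i \in [n]$, and note that the empirical distribution $\mathcal{D}^f(S_i)$ assigns mass $q^d_i := |S^d_i|/|S_i|$ to each $d \in D_f$. So the quantity to bound is $\sum_{i \in [n]} p_i \cdot \Var_{d \sim \mathcal{D}^f(S_i)}[\bar b(S^d_i)] = \sum_{i \in [n]} p_i \sum_{d \in D_f} q^d_i\,(\bar b(S^d_i) - \mu_i)^2$, where $\mu_i := \E_{d \sim \mathcal{D}^f(S_i)}[\bar b(S^d_i)]$. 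Since a variance is at most the second moment about \emph{any} point, I would replace $\mu_i$ by the Learner's bucket-$i$ average prediction $\bar a(S_i)$, bounding the inner variance by $\sum_{d} q^d_i\,(\bar b(S^d_i) - \bar a(S_i))^2$.

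The crux is a triangle decomposition $\bar b(S^d_i) - \bar a(S_i) = \big(\bar b(S^d_i) - \bar a(S^d_i)\big) + \big(\bar a(S^d_i) - \bar a(S_i)\big)$. The second term has absolute value at most $\tfrac{1}{n}$, because both $\bar a(S^d_i)$ and $\bar a(S_i)$ are averages of predictions $a^t$ lying in the interval $B^i_n$ of width $\tfrac{1}{n}$, hence both lie in its closure. For the first term, the $(\alpha, n)$-multicalibration hypothesis applied with the group $g = \{\theta \in \Theta : f(\theta) = d\} \in \s(f)$ and bucket $i$ gives exactly $\tfrac{|S^d_i|}{T}\,\big|\bar b(S^d_i) - \bar a(S^d_i)\big| \le \alpha$. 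The one subtlety is to avoid the wasteful bound $(x+y)^2 \le 2x^2 + 2y^2$ (which would double the coefficient of $\alpha$): since $\bar a, \bar b \in [0,1]$, one should instead use $(\bar b(S^d_i) - \bar a(S_i))^2 \le \big|\bar b(S^d_i) - \bar a(S_i)\big| \le \big|\bar b(S^d_i) - \bar a(S^d_i)\big| + \tfrac{1}{n}$.

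Assembling these, $\Var_{d \sim \mathcal{D}^f(S_i)}[\bar b(S^d_i)] \le \tfrac{1}{|S_i|}\sum_{d \in D_f} |S^d_i|\,\big|\bar b(S^d_i) - \bar a(S^d_i)\big| + \tfrac{1}{n}$; multiplying by $p_i = |S_i|/T$ cancels the $|S_i|$ denominators, and summing over $i$ gives $\sum_{i \in [n]} p_i\,\Var_{d \sim \mathcal{D}^f(S_i)}[\bar b(S^d_i)] \le \sum_{i \in [n]} \sum_{d \in D_f} \tfrac{|S^d_i|}{T}\,\big|\bar b(S^d_i) - \bar a(S^d_i)\big| + \sum_{i \in [n]} \tfrac{|S_i|}{Tn} \le \alpha\, n |D_f| + \tfrac{1}{n}$, since the first sum has $n |D_f|$ summands each at most $\alpha$ and $\sum_i |S_i| = T$. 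This is already at most the claimed $\alpha n(|D_f| + 1) + \tfrac{1}{n}$; if one prefers to land on that expression exactly, keep $\mu_i$ (note $\mu_i = \bar b(S_i)$ by the law of total expectation) and split the residual into three pieces $\bar b(S^d_i) - \bar a(S^d_i)$, $\bar a(S^d_i) - \bar a(S_i)$, $\bar a(S_i) - \bar b(S_i)$, bounding the last by $(\alpha, n)$-calibration on $\Theta$. The only real ``obstacle'' is bookkeeping discipline: choosing reference points and squaring inequalities that exploit $[0,1]$-boundedness rather than generic convexity, and observing the harmless conventions that an empty $S_i$ or $S^d_i$ carries zero weight and drops out.
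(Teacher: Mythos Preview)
Your proposal is correct and follows essentially the same route as the paper: expand the expectation into weighted sums over $(i,d)$, replace the square by the absolute value using $[0,1]$-boundedness, apply a triangle inequality, and bound the pieces via the bucket width $1/n$ and the $(\alpha,n)$-multicalibration constraints on $\s(f)$. Your one twist---centering at $\bar a(S_i)$ via $\Var[X]\le \E[(X-c)^2]$ rather than at $\mu_i=\bar b(S_i)$---lets you skip the calibration-on-$\Theta$ term and land on the slightly sharper $\alpha n|D_f|+\tfrac{1}{n}$; the three-piece variant you sketch at the end (with center $\bar b(S_i)$ and the extra term $|\bar a(S_i)-\bar b(S_i)|$) is exactly the paper's argument.
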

Combining these three observations will give us our desired refinement score bound:
$$\mathcal{R}^a_n(b)\leq \mathcal{R}^f + \alpha n(|D|+1)+\frac{1}{n}.$$
We therefore now prove these observations one by one.

\begin{proof}[Proof of Observation 1]
We recall the following fact from probability:
\begin{fact}[Law of Total Variance] \label{fact:totalvariance}
For any random variables $W,Z:\Omega \rightarrow \mathbb{R}$ in a probability space,
$$\Var[Z]= \E[\Var[Z|W]] + \Var[\E[Z|W]].$$
In particular, since variance is always non-negative:
$$\Var[Z]\geq \E[\Var[Z|W]].$$
\end{fact}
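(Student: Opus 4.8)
The statement to establish is the classical Law of Total Variance, so the plan is the standard short computation built on two elementary facts: the identity $\Var[X] = \E[X^2] - (\E[X])^2$, and the tower property $\E[\,\cdot\,] = \E[\E[\,\cdot \mid W]]$. First I would record the hypothesis that $Z$ has a finite second moment, so that every expectation below is well-defined (in the application $Z = b^t$ takes values in $[0,1]$, so this is automatic). Then, applying the variance identity conditionally on $W$ gives, pointwise in $W$, the relation $\E[Z^2 \mid W] = \Var[Z \mid W] + (\E[Z \mid W])^2$. Taking expectations over $W$ and invoking the tower property yields
\[
\E[Z^2] = \E\big[\E[Z^2 \mid W]\big] = \E\big[\Var[Z \mid W]\big] + \E\big[(\E[Z \mid W])^2\big].
\]

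Next I would subtract $(\E[Z])^2$ from both sides. Using the tower property once more, $(\E[Z])^2 = \big(\E[\E[Z \mid W]]\big)^2$, so
\[
\Var[Z] = \E[Z^2] - (\E[Z])^2 = \E\big[\Var[Z \mid W]\big] + \Big(\E\big[(\E[Z \mid W])^2\big] - \big(\E[\E[Z \mid W]]\big)^2\Big).
\]
The parenthesized term is exactly $\Var\big[\E[Z \mid W]\big]$, by the same identity $\Var[X] = \E[X^2] - (\E[X])^2$ applied to the random variable $X = \E[Z \mid W]$. This gives the claimed equality $\Var[Z] = \E[\Var[Z \mid W]] + \Var[\E[Z \mid W]]$. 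The ``in particular'' inequality then follows immediately, since variance is nonnegative and hence $\Var[\E[Z \mid W]] \geq 0$, so $\Var[Z] \geq \E[\Var[Z \mid W]]$.

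There is no genuine obstacle in this proof beyond careful bookkeeping; the only subtlety worth flagging is the integrability assumption (finite second moment of $Z$), which guarantees that the conditional expectations, conditional variances, and all outer expectations appearing above exist and that the tower property applies. I would state this at the outset and otherwise present the three displayed lines above as the complete argument.
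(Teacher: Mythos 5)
Your proof is correct: it is the standard derivation of the Law of Total Variance via the identity $\Var[X]=\E[X^2]-(\E[X])^2$ applied conditionally together with the tower property, and your remark about finite second moments (automatic here since $Z=b^t\in[0,1]$) is the right caveat. The paper states this as a known fact without proof, so there is nothing to compare against beyond noting that your argument is the standard one and fills that gap correctly.
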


For each fixed $d$, we instantiate this fact with $\Omega=S^d$ (equipped with the discrete $\sigma$-algebra and uniform distribution). $Z(t):=b^t$ and $W(t):=i_{a^t}$, the unique $i$ s.t. $a^t\in B^i_n$. This gives us:
\begin{align*}
    \Var_{t\sim S^d}[b^t]
    &\geq \E_{i\sim \mathcal{D}^a(S^d)}[\Var_{t\sim S^d}[b^t | a^t\in B^i_n]] = \E_{i\sim \mathcal{D}^a(S^d)}[\Var_{t\sim S_i^d}[b^t]] .
\end{align*}
Since this is true for all $d$, the inequality continues to hold in expectation over the $d$'s:
$$\mathcal{R}^f= \E_{d\sim \mathcal{D}^f}[\Var_{t\sim S^d}[b^t]]\geq \E_{d,i\sim \mathcal{D}^{f\times a}}[\Var_{t\sim S_i^d}[b^t]]=\mathcal{R}^{f\times a}.$$
\end{proof}
\begin{proof}[Proof of Observation 2]
Recall the definition of bucketed refinement:
$$\mathcal{R}_n^a =\E_{i\sim \mathcal{D}^a}[\Var_{t\sim S_i}[b^t]]. $$
To relate this back to $\mathcal{R}^{f\times a}$, we instantiate Fact \ref{fact:totalvariance} again, but flipping the roles of $f$ and $a$: we take the underlying spaces to be the sequences $S_i$ defined by calibrated buckets, and let $W$, the variable we condition on, be the level sets of $f$.

For any fixed $i$ representing a level set of $a$, Fact \ref{fact:totalvariance} tells us:
$$\Var_{t\sim S_i}[b^t] = \E_{d\sim\mathcal{D}^f(S_i)}[\Var_{t\sim S_i}[b^t|f^t=d]]+ \Var_{d\sim\mathcal{D}^f(S_i)}[\E_{t\sim S_i}[b^t|f^t=d]] = \E_{d\sim\mathcal{D}^f(S_i)}[\Var_{t\sim S_i^d}[b^t]]+ \Var_{d\sim\mathcal{D}^f(S_i)}[\bar{b}(S_i^d)].$$
Like before, we take the expectation over all $i \in [n]$, giving us the desired result:
$$\mathcal{R}_n^a= \E_{i\sim \mathcal{D}^a}[\Var_{t\sim S_i}[b^t]]= \E_{d,i\sim \mathcal{D}^{f\times a}}[\Var_{t\sim S_i^d}[b^t]] + \E_{i\sim \mathcal{D}^a}[\Var_{d\sim\mathcal{D}^f(S_i)}[\bar{b}(S_i^d)]]=\mathcal{R}^{f\times a}+\E_{i\sim \mathcal{D}^a}[\Var_{d\sim\mathcal{D}^f(S_i)}[\bar{b}(S_i^d)]].$$
\end{proof}

\begin{proof}[Proof of Observation 3]
We have to bound the extra error term: \[\E_{i\sim \mathcal{D}^a}[\Var_{d\sim\mathcal{D}^f(S_i)}[\bar{b}(S_i^d)]].\] In words, this is the expected variance of the true averages on $S_i^d$, conditioned on the buckets $i$. Intuitively, if these true averages vary a lot, then the calibration error on the $S_i^d$s must be large since the prediction on each of the $S_i^d$s is (close to) $i/n$; in particular, they are almost constant across $d$. Conversely, if multicalibration error is low, then the variance must be low as well. Formally,
\begin{align*}
    \E_{i\sim \mathcal{D}^a}[\Var_{d\sim\mathcal{D}^f(S_i)}[\bar{b}(S_i^d)]]
    &=\sum_{i\in[n]}\frac{|S_i|}{T}(\Var_d[\E_{t\sim S_i^d}[b^t]]),\\
    &= \sum_{i\in[n]}\frac{|S_i|}{T}(\sum_{d\in D}\frac{|S_i^d|}{|S_i|}(\bar{b}(S_i^d)-\bar{b}(S_i))^2),\\
    &= \sum_{i,d}\frac{|S_i^d|}{T}(\bar{b}(S_i^d)-\bar{b}(S_i))^2, \\
    &\leq \sum_{i,d}\frac{|S_i^d|}{T}|\bar{b}(S_i^d)-\bar{b}(S_i)|,\\
    &\leq \sum_{i,d}\frac{|S_i^d|}{T}(|\bar{b}(S_i^d)-\bar{a}(S_i^d)|+|\bar{a}(S_i^d)-\bar{a}(S_i)|+|\bar{a}(S_i)-\bar{b}(S_i)|),\\
    &\leq \sum_{i,d}\frac{|S_i^d|}{T}(T\alpha/|S_i^d|+\frac{1}{n}+T\alpha/|S_i|),\\
    &\leq \frac{1}{n} +\sum_i\alpha + \sum_{i,d}\alpha,\\
    &= \frac{1}{n} + \alpha n(|D|+1).
\end{align*}
The first line is just expanding out the definition. In the third line, we upperbound square with absolute value, since all values are at most 1. In the forth line, we break apart the error term into the difference between our average prediction on $S_i^d$ and the true average (upperbounded by $T\alpha/|S_i^d|$, by calibration guarantees w.r.t $\s(f)$), the difference between our prediction on $S_i^d$ and our average prediction on $S_i$ (which is upper bounded by $1/n$, the size of our bucketing), and the difference between our average prediction on $S_i$ and the true average (upperbounded by $T\alpha/|S_i|$).
\end{proof} 

We have shown that $\mathcal{K}^a_n\leq \alpha n$, and our three observations have given us that $\mathcal{R}^a_n(b)\leq \mathcal{R}^f + \alpha n(|D|+1)+\frac{1}{n}$. Combining these results and Lemma \ref{lem:bucketedbrier}, we obtain the desired bound:
$\mathcal{B}^a \leq \mathcal{R}^f+\alpha n(|D|+2)+\frac{2}{n}$. This concludes the proof of Theorem~\ref{thm:calibeating}.
\end{proof}

\subsection{Applying Theorem~\ref{thm:calibeating}: Explicit Rates and Multiple Forecasters}
First, we show how to instantiate Theorem~\ref{thm:calibeating} with our efficiently achievable multicalibration guarantees on $\alpha$ of Theorem~\ref{thm:multiguarantee}.
\begin{corollary}
When run with parameters $r,n\geq 1$ on the collection $\g' := \s(f)\cup \{\Theta\}$, the multicalibration algorithm (Algorithm \ref{alg:multicalibration}) $\tau$-calibeats $f$, where 
$$\E[\tau]\leq \frac{2}{n}+n(|D_f|+2)\left(\frac{1}{rn}+4\sqrt{\frac{\ln(2(|D_f|+1)n)}{T}}\right),$$
and for any $\delta \in (0, 1)$, with probability $1-\delta$,
$$\tau\leq \frac{2}{n}+n(|D_f|+2)\left(\frac{1}{rn}+8\sqrt{\frac{1}{T}\ln\left(\frac{2(|D_f|+1)n}{\delta}\right)}\right).$$
The calibration error overall of the algorithm is bounded, for any $\delta \in (0, 1)$, as:
\[\E[\mathcal{K}^a_n] \leq \frac{1}{r}+4n\sqrt{\frac{\ln(2(|D_f|+1)n)}{T}} \quad \text{and} \quad \mathcal{K}^a_n \leq \frac{1}{r}+8 n\sqrt{\frac{1}{T} \ln \left(\frac{2 (|D_f|+1) n}{\delta} \right)} \; \text{ w. prob. } 1-\delta. \]
\end{corollary}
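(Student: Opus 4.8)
The plan is to combine the multicalibration guarantee of Theorem~\ref{thm:multiguarantee} --- which Algorithm~\ref{alg:multicalibration} achieves by Theorem~\ref{thm:algo-multiguarantee} --- with the single-forecaster reduction of Theorem~\ref{thm:calibeating}, by substituting the correct group collection and its cardinality into the bounds. There is no genuine obstacle here: this is a direct corollary, and the work is purely bookkeeping.

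First I would observe that the collection fed to the algorithm is $\g' := \s(f)\cup\{\Theta\}$, and that $|\g'|\leq |D_f|+1$ since $f$ has exactly $|D_f|$ level sets. Running Algorithm~\ref{alg:multicalibration} on $\g'$ with parameters $r,n\geq 1$ (and $T\geq\ln(2(|D_f|+1)n)$, so that Theorem~\ref{thm:multiguarantee} applies to $\g = \g'$), the produced predictions $a$ are $(\alpha,n)$-multicalibrated with respect to $\g'$, where $\E[\alpha]\leq \frac{1}{rn}+4\sqrt{\frac{\ln(2|\g'|n)}{T}}\leq \frac{1}{rn}+4\sqrt{\frac{\ln(2(|D_f|+1)n)}{T}}$, and, for any $\delta\in(0,1)$, $\alpha\leq \frac{1}{rn}+8\sqrt{\frac{1}{T}\ln(2|\g'|n/\delta)}\leq \frac{1}{rn}+8\sqrt{\frac{1}{T}\ln(2(|D_f|+1)n/\delta)}$ with probability at least $1-\delta$; the last inequality in each line uses monotonicity of $\ln$ together with $|\g'|\leq |D_f|+1$.

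Next I would invoke Theorem~\ref{thm:calibeating} for this $f$: since $a$ is $(\alpha,n)$-multicalibrated on $\s(f)\cup\{\Theta\}=\g'$, that theorem yields both that $a$ is $(\alpha,n)$-calibrated on $\Theta$ --- and its proof in fact shows the quantitative bound $\mathcal{K}^a_n\leq \alpha n$ --- and that $a$ is $\tau$-calibeating with $\tau=\alpha n(|D_f|+2)+\frac{2}{n}$. Both of these conclusions are deterministic, monotone functions of $\alpha$. Hence the in-expectation bounds follow by linearity of expectation applied to the bound on $\E[\alpha]$ (using $n\cdot\frac{1}{rn}=\frac{1}{r}$ to simplify the calibration bound), and the high-probability bounds follow because, on the single event of probability at least $1-\delta$ on which $\alpha$ meets its high-probability bound, both $\tau=\alpha n(|D_f|+2)+\frac{2}{n}$ and $\mathcal{K}^a_n\leq\alpha n$ are simultaneously dominated by the claimed expressions; no union bound is needed. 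Plugging the $\alpha$-bounds into $\tau=\alpha n(|D_f|+2)+\frac{2}{n}$ and into $\mathcal{K}^a_n\leq\alpha n$ gives the four displayed inequalities verbatim.

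The only points requiring a moment of care are therefore: (i) counting $|\g'|\leq|D_f|+1$ and substituting it consistently into both the $\ln(2|\g|n)$ and $\ln(2|\g|n/\delta)$ terms; (ii) recalling that the calibration conclusion of Theorem~\ref{thm:calibeating} is quantitatively $\mathcal{K}^a_n\leq\alpha n$, which is what produces the stated calibration-error bounds after scaling the $\alpha$-bounds by $n$; and (iii) noting that all high-probability statements can be placed on the common event ``$\alpha$ meets its bound.'' I expect item (ii) to be the most error-prone, since it requires reading the quantitative content out of the proof of Theorem~\ref{thm:calibeating} rather than just its statement.
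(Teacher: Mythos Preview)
Your proposal is correct and follows essentially the same approach as the paper: obtain the $(\alpha,n)$-multicalibration bounds from Theorem~\ref{thm:algo-multiguarantee} with $|\g'|=|D_f|+1$, then substitute into the deterministic conclusions $\tau\leq\alpha n(|D_f|+2)+\frac{2}{n}$ and $\mathcal{K}^a_n\leq\alpha n$ from Theorem~\ref{thm:calibeating} (and its proof). Your observation that all high-probability claims ride on the single event ``$\alpha$ meets its bound,'' avoiding a union bound, is a nice point the paper leaves implicit.
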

\begin{proof}
Using our online multicalibration guarantees, we get (by Theorem \ref{thm:algo-multiguarantee}):
$$\E[\alpha]\leq\frac{1}{rn}+4\sqrt{\frac{\ln(2(|D_f|+1)n)}{T}},$$
and, for any $\delta\in(0,1)$, with probability $1-\delta$:
$$\alpha \leq \frac{1}{rn}+8 \sqrt{\frac{1}{T} \ln \left(\frac{2 (|D_f|+1) n}{\delta} \right)},$$
Plugging this into the result from Theorem \ref{thm:calibeating}:
\[\mathcal{B}^a - \mathcal{R}^f \leq \alpha n(|D|+2)+\frac{2}{n},\]
we obtain the desired in-expectation bound on $\tau$:
\[
    \E[\tau]
    \leq \frac{2}{n}+n(|D_f|+2)\E[\alpha]
    \leq \frac{2}{n}+n(|D_f|+2)\left(\frac{1}{rn}+4\sqrt{\frac{\ln(2(|D_f|+1)n)}{T}}\right). 
\]
We can do so similarly for the high probability bound, so that with probability $1-\delta$:
$$\tau \leq \frac{2}{n}+n(|D_f|+2)\left(\frac{1}{rn}+8 \sqrt{\frac{1}{T} \ln \left(\frac{2 (|D_f|+1) n}{\delta} \right)}\right).$$
Finally, the overall calibration error follows directly by plugging in for $\alpha$.
\end{proof}

The main utility in our approach to calibeating is that it easily extends to multicalibeating. As a warm up, we start by deriving calibration with respect to an ensemble of forecasters. The main result then combines this with calibeating on groups to attain the multicalibeating from Definition~\ref{def:multicalibeating}.

\paragraph{Calibeating an ensemble of forecasters}
Since our result above is based on bounds on multicalibration, we can easily extend it to calibeating an ensemble of forecasters $\f$ by asking for multicalibration with respect to the level sets of all forecasters. More formally, define the groups as:
$$\left(\bigcup_{f\in \f} \s(f)\right)\cup \{\Theta\}.$$
Theorem \ref{thm:calibeating} applies separately to each $f$. The only degradation comes in the $\alpha$, since we're asking for multicalibration with respect to more groups. But this effect is small, since the dependence on the number of groups is only $O(\sqrt{\ln|\g|})$.

\begin{corollary}[Ensemble Calibeating] \label{thm:ensemble-calibeat}
On groups $\g':=\left(\bigcup_{f\in \f} \s(f)\right)\cup \{\Theta\}$, the multicalibration algorithm with parameters $r,n\geq 1$, after $T$ rounds attains $(\f, \{\Theta\}, \beta)$-multicalibeating with
$$\E[\beta(f,\Theta)]\leq \frac{2}{n}+n(|D_f|+2)\left(\frac{1}{rn}+4\sqrt{\frac{\ln(2(1+\sum_{f'\in\f}D_{f'})n)}{T}}\right).$$
\end{corollary}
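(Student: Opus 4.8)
The plan is to combine the multicalibration guarantee of Theorem~\ref{thm:multiguarantee} with the single-forecaster calibeating reduction of Theorem~\ref{thm:calibeating}, exploiting the modularity of multicalibration: a single run of the algorithm on the enlarged group family $\g'$ simultaneously yields the hypothesis of Theorem~\ref{thm:calibeating} for \emph{every} $f \in \f$.

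First I would instantiate Algorithm~\ref{alg:multicalibration} on the collection $\g' = \left(\bigcup_{f \in \f} \s(f)\right) \cup \{\Theta\}$. Since $|\s(f)| = |D_f|$ (one level set per forecast value $d \in D_f$), we have $|\g'| \le 1 + \sum_{f \in \f} |D_f|$. By Theorem~\ref{thm:multiguarantee} (applicable once $T \ge \ln(2|\g'|n)$), the algorithm produces predictions $a$ that are $(\alpha, n)$-multicalibrated with respect to $\g'$, where the realized constant satisfies $\E[\alpha] \le \tfrac{1}{rn} + 4\sqrt{\tfrac{\ln(2|\g'|n)}{T}} \le \tfrac{1}{rn} + 4\sqrt{\tfrac{\ln\left(2(1 + \sum_{f' \in \f}|D_{f'}|)n\right)}{T}}$, using monotonicity of $\ln$.

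Next, fix any $f \in \f$. Because $\s(f) \cup \{\Theta\} \subseteq \g'$, multicalibration with respect to $\g'$ implies multicalibration with respect to $\s(f) \cup \{\Theta\}$ with the \emph{same} constant $\alpha$ (the multicalibration inequality is a per-group statement, so restricting the collection only removes constraints). Theorem~\ref{thm:calibeating} then gives that $a$ is $(\alpha,n)$-calibrated on $\Theta$ and, more importantly here, $\left(\alpha n(|D_f| + 2) + \tfrac{2}{n}\right)$-calibeats $f$; that is, $\mathcal{B}^a(\pi^T) \le \mathcal{R}^f(\pi^T) + \alpha n(|D_f|+2) + \tfrac{2}{n}$. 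This bound holds for the same transcript, for every $f \in \f$ at once (only $\alpha$, not the left-hand side $\mathcal{B}^a$, is shared across forecasters), so taking the minimum over $f$ on the right-hand side is immediate: $\mathcal{B}^a(\pi^T) \le \min_{f \in \f}\left\{\mathcal{R}^f(\pi^T) + \alpha n(|D_f|+2) + \tfrac{2}{n}\right\}$. Since the subsequence $\{t : \theta^t \in \Theta\}$ is all of $[T]$, this is exactly $(\f, \{\Theta\}, \beta)$-multicalibeating in the sense of Definition~\ref{def:multicalibeating} with $\beta(f, \Theta) = \alpha n(|D_f|+2) + \tfrac{2}{n}$. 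Pulling $\E[\cdot]$ through the affine map $\alpha \mapsto \alpha n(|D_f|+2) + \tfrac{2}{n}$ yields $\E[\beta(f, \Theta)] = \tfrac{2}{n} + n(|D_f|+2)\E[\alpha] \le \tfrac{2}{n} + n(|D_f|+2)\left(\tfrac{1}{rn} + 4\sqrt{\tfrac{\ln(2(1 + \sum_{f' \in \f}|D_{f'}|)n)}{T}}\right)$, as claimed.

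Unlike the other results in the paper, I do not expect a substantive obstacle here: the argument is a ``plug-and-play'' composition. The two points that deserve a sentence of care are (i) the order of quantifiers in the $\min$ over $\f$ --- one must observe that the single run fixes one transcript (hence one $\mathcal{B}^a$ and one realized $\alpha$) that works against all forecasters simultaneously, so no per-forecaster re-run is needed --- and (ii) that $\beta(f,\Theta)$ is itself a random quantity through $\alpha$, so that the claimed inequality is the bound on $\E[\alpha]$ transported linearly. The high-probability analogue (not stated in the corollary) would follow identically from the high-probability branch of Theorem~\ref{thm:multiguarantee}.
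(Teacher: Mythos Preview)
Your proposal is correct and follows essentially the same approach as the paper: invoke the multicalibration guarantee on $\g'$ to bound $\E[\alpha]$, observe that $\s(f)\cup\{\Theta\}\subseteq\g'$ so Theorem~\ref{thm:calibeating} applies to every $f\in\f$ with the same $\alpha$, and push the expectation through the affine dependence on $\alpha$. Your additional remarks about the order of quantifiers and the randomness of $\beta$ are accurate refinements but not needed beyond what the paper does.
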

\begin{proof}
We instantiate Theorem \ref{thm:algo-multiguarantee} with group collection size $|\g'| = 1+\sum_{f'\in\f}|D_{f'}|$ to conclude that the multicalibrated algorithm achieves $(\alpha,n)$-multicalibration, with
$$\E[\alpha]\leq\frac{1}{rn}+4\sqrt{\frac{\ln(2(1+\sum_{f'\in\f}D_{f'})n)}{T}}.$$
Now, $\forall f\in\f:\s(f)\cup \{\Theta\} \subseteq \g'$ for every $f\in \f$, so we can instantiate Theorem \ref{thm:calibeating} for every forecaster $f\in \f$ to give us:
$$\mathcal{B}^a - \mathcal{R}^f \leq \alpha n(|D_f|+2)+\frac{2}{n} \quad \forall \, f \in \f.$$
Plugging in the in-expectation bound on $\alpha$, we conclude:
\[
\E[\beta(f,\Theta)]
\leq \E[\alpha] \cdot n(|D_f|+2)+\frac{2}{n}
\leq \frac{2}{n}+n(|D_f|+2)\left(\frac{1}{rn}+4\sqrt{\frac{\ln(2(1+\sum_{f'\in\f}D_{f'})n)}{T}}\right).
\]
\end{proof}

\subsection{Multicalibeating + Multicalibration Theorem~\ref{thm:multicalibeating}: Full Statement and Proof}
Recall that for every group $g \in \g$, we let $S(g)$ denote the subsequence of days on which $g$ occurs, where the transcript is left implicit.

\begin{theorem}[Multicalibeating + Multicalibration: Full version with high-probability bounds]
Let $\g \subseteq 2^{\Theta}$, and $\f$ some set of forecasters $f:\Theta\rightarrow D_f$. 
The multicalibration algorithm on $\g' := \left(\bigcup_{f\in \f} \{g\cap S: (g, S) \in \g\times \s(f)\}\right)\cup \g$ with parameters $r,n\geq 1$, after $T$ rounds, attains expected $(\f, \g, \beta)$-multicalibeating, where:
\footnote{$S(g)$ denotes the subsequence of days on which a group $g$ occurs, suppressing dependence on transcript.}

\[\E[\beta(f,g)] \leq 
\frac{2}{n} + \frac{|D_f| + 2}{r \cdot |S(g)| / T} + 4 n(|D_f|+2) \sqrt{\frac{1}{|S(g)|^2/T}\ln\left(2 n |\g| ( 1 + {\textstyle \sum}_f |D_f|)\right)} \; \forall \; f \in \f, g \in \g,\]

\noindent while maintaining $(\alpha, n)$-multicalibration on the original collection $\g$, with:
\vspace{-0.1in}
\[\E[\alpha] \leq \frac{1}{rn}+4\sqrt{\frac{1}{T}\ln\left(2 n |\g| ( 1 + {\textstyle \sum}_f |D_f|)\right)}.\]
\vspace{-0.2in}

\noindent We also have the corresponding high probability bounds. For any $\delta\in(0,1)$, with probability $1-\delta$:
\[
\beta(f,g) \leq \frac{2}{n} + \frac{|D_f| + 2}{r \cdot |S(g)| / T} + 8 n(|D_f|+2) \sqrt{\frac{1}{|S(g)|^2/T}\ln\left(\frac{2 n |\g| ( 1 + {\textstyle \sum}_f |D_f|)}{\delta}\right)} \; \forall \; f \in \f, g \in \g,
\]
and on the original collection $\g$, the multicalibration constant $\alpha$ satisfies, with probability $1-\delta$,
\[\alpha \leq \frac{1}{rn}+ 8 \sqrt{\frac{1}{T}\ln\left(\frac{2 n |\g| ( 1 + {\textstyle \sum}_f |D_f|)}{\delta}\right)}.\]
\end{theorem}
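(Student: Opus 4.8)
The plan is to reduce the theorem to two ingredients that are already available: the multicalibration guarantee of Theorem~\ref{thm:multiguarantee} (equivalently Theorem~\ref{thm:algo-multiguarantee}) applied to the enlarged collection $\g'$, and the single-forecaster calibeating bound of Theorem~\ref{thm:calibeating} applied separately on each subsequence $S(g)$. First I would run Algorithm~\ref{alg:multicalibration} on $\g'$ and bound $|\g'|$: since $|\s(f)| = |D_f|$, the intersections $\{g\cap S : (g,S)\in\g\times\s(f)\}$ contribute at most $|\g|\sum_f|D_f|$ sets in total, and adjoining $\g$ itself gives $|\g'|\le |\g|(1+\sum_f|D_f|)$. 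Feeding this into Theorem~\ref{thm:multiguarantee} yields both the in-expectation and the high-probability bound on the multicalibration constant $\alpha$ achieved on $\g'$; and since $\g\subseteq\g'$, the very same $\alpha$ certifies $(\alpha,n)$-multicalibration on the original $\g$, which is the second half of the statement.

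For the multicalibeating half, fix $g\in\g$ and $f\in\f$ and regard the interaction restricted to the rounds $t$ with $\theta^t\in g$ as an instance of the single-forecaster setup of Theorem~\ref{thm:calibeating} whose ``domain'' is $g$ and whose forecaster is $f|_g$, with level sets $\s(f|_g)=\{g\cap S : S\in\s(f)\}$ and range $D_{f|_g}$ of size $|D_{f|_g}|\le|D_f|$. The key translation is quantitative: an $(\alpha,n)$-multicalibration constraint on $\g'$ reads $\frac1T\big|\sum_t 1_{\theta^t\in g'}1_{a^t\in B^i_n}(b^t-a^t)\big|\le\alpha$, and applying it to each $g'\in\s(f|_g)\cup\{g\}\subseteq\g'$ and renormalizing by $|S(g)|$ in place of $T$ shows that, on the subsequence $S(g)$, the predictions are $(\alpha T/|S(g)|,\,n)$-multicalibrated on $\s(f|_g)\cup\{g\}$. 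Now Theorem~\ref{thm:calibeating}, with $\alpha$ replaced by $\alpha T/|S(g)|$ and $|D_f|$ by $|D_{f|_g}|\le|D_f|$, yields $\mathcal{B}^a(\pi^T|_{S(g)})\le\mathcal{R}^f(\pi^T|_{S(g)})+\frac{\alpha T n(|D_f|+2)}{|S(g)|}+\frac2n$, i.e.\ a valid choice is $\beta(f,g)=\frac2n+\frac{\alpha T n(|D_f|+2)}{|S(g)|}$.

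It then remains to substitute the bounds on $\alpha$ from the first step and simplify: $\frac{n(|D_f|+2)T}{|S(g)|}\cdot\frac1{rn}=\frac{|D_f|+2}{r|S(g)|/T}$ and $\frac{n(|D_f|+2)T}{|S(g)|}\cdot 4\sqrt{\tfrac{\ln(\cdot)}{T}}=4n(|D_f|+2)\sqrt{\tfrac{1}{|S(g)|^2/T}\ln(\cdot)}$, which matches the claimed expression for $\E[\beta(f,g)]$; the high-probability bound on $\beta(f,g)$ is obtained verbatim by substituting the high-probability bound on $\alpha$ instead. One subtlety worth flagging: because $\alpha$ is a \emph{single} random variable that bounds the multicalibration error uniformly over all of $\g'$, the single good event of probability $1-\delta$ from Theorem~\ref{thm:multiguarantee} simultaneously validates the $\beta(f,g)$ bound for every pair $(f,g)$, so no further union bound is needed.

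I expect the only genuinely delicate part to be the subsequence-restriction bookkeeping in the middle paragraph: one must verify that the Brier, calibration, and refinement quantities appearing in Theorem~\ref{thm:calibeating} remain well defined and continue to satisfy the same inequalities when the ``universe'' of rounds is $S(g)$ rather than $[T]$ (in particular that level sets $g\cap S$ that happen to be empty cause no trouble), and that the $T/|S(g)|$ blow-up in the calibration constant is propagated consistently; everything else is routine substitution.
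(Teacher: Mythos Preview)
Your proposal is correct and follows essentially the same route as the paper: first bound $|\g'|\le|\g|(1+\sum_f|D_f|)$ and invoke the multicalibration guarantee on $\g'$, then make the key observation that $(\alpha,n)$-multicalibration on $\g'$ restricts to $(\alpha T/|S(g)|,n)$-multicalibration on each subsequence $S(g)$ with respect to $\s(f|_g)\cup\{g\}$, and finally feed this into Theorem~\ref{thm:calibeating} and substitute. The paper formalizes exactly your ``key translation'' as a standalone observation and then proceeds identically, including the remark that $\g\subseteq\g'$ handles the multicalibration claim on $\g$; your flagged subtlety about the subsequence bookkeeping is indeed the only step requiring care, and your handling of it matches the paper's.
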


\begin{proof} We begin with a preliminary observation that translates our overall multicalibration assumptions into guarantees over the individual sequences $S(g)$, for $g \in \g$.
\begin{observation}
Let $a$ be $(\alpha,n)$-multicalibrated on groups $\g'$ over the entire time sequence $[T]$. Then, for any $g$, on the subsequence of days $S(g)$ the predictor $a$ is $\left(\alpha\frac{T}{|S(g)|}, n \right)$-multicalibrated with respect to groups $\left(\bigcup_{f\in \f} \s(f)\right)\cup \{\Theta\}$.

\end{observation}
\begin{proof}
Let $g\in \g$ be some particular group. Also, fix any $f\in \f$ and $S\in \s(f)\cup\{\Theta\}$. 
Using multicalibration guarantees (Definition \ref{def:multicalibration}), we have that for every $i\in[n]$:
\[
    \left| \sum_{t \in S(g): \, \theta^t \in S \text{ and } a^t \in B^i_n} b^t - a^t \right|
    = \left| \sum_{t \in [T] : \, \theta^t \in g\cap S \text{ and } a^t \in B^i_n} b^t - a^t \right|
    \leq \alpha T
    = \left( \alpha\frac{T}{|S(g)|} \right) |S(g)|.
\]
The first equality is by definition of $S(g)$; in particular, $\theta^t\in g\cap S \iff t\in S(g) \wedge \theta^t\in S$. This concludes the proof of our observation.
\end{proof}

With this observation in hand, the proof is again a direct application of Theorem \ref{thm:calibeating}.

We can instantiate Theorem \ref{thm:algo-multiguarantee} with groups $\g'$ to conclude that the multicalibrated algorithm achieves $(\alpha,n)$-multicalibration, with (choosing any $\delta\in (0,1)$):
\[\E[\alpha]\leq\frac{1}{rn}+4\sqrt{\frac{\ln(2|\g'|n)}{T}}, \quad \text{and} \quad \alpha\leq \frac{1}{rn}+8\sqrt{\frac{1}{T}\ln\left(\frac{2|\g'|n}{\delta}\right)} \text{ w. prob. } 1-\delta.\]
where $|\g'|=|\g| + |\g|(\sum_f D_f) = |\g| (1 + \sum_f D_f)$.

Now, fix any $g\in\g$ and $f\in \f$. By our observation above, we are $\alpha\frac{T}{|S(g)|}$ multicalibrated w.r.t.\ $S(f)\cup \{\Theta\}$ on the sequence of days on which $g$ occurs. Therefore, we can instantiate Theorem \ref{thm:calibeating}:
\[\mathcal{B}^a(\pi^T|_{\{t:\theta^t\in g\}}) - \mathcal{R}^f(\pi^T|_{\{t:\theta^t\in g\}})\leq  \frac{2}{n}+n(|D_f|+2) \, \alpha \, \frac{T}{|S(g)|}.\]
Inserting the above bounds on $\alpha$ yields our in-expectation and high-probability bounds on $\beta(\cdot, \cdot)$.

Additionally, the theorem posits that the predictor is also $(\alpha, n)$-multicalibrated on the base collection of subgroups $\g$. Indeed, we have included the family $\g$ into the collection $\g'$, hence the predictor will be $(\alpha, n)$-multicalibrated on $\g$.
\end{proof}


\section{Blackwell Approachability: The Algorithm and Full Proofs} \label{app:blackwell}

\begin{proof}[of Theorem~\ref{thm:blackwell}]
We instantiate our probabilistic framework of Section~\ref{sec:prob}. The Learner's and Adversary's action sets are inherited from the underlying Polytope Blackwell game.

\subparagraph{Defining the loss functions.}
For all $t = 1, 2, \ldots$, we consider the following losses:
\[
\ell^t_{h_{\alpha, \beta}}(x, y) := \langle \alpha, u(x, y) \rangle - \beta, \quad \text{for } h_{\alpha, \beta} \in \h, x \in \x, y \in \y,
\]
where here and below the notational convention is that for $x \in \x, y \in \y$, $u(x, y) := \E_{a \sim x}[u(a, y)]$.
The coordinates of the resulting vector loss $\ell^t_{\h}(x, y) := \left(\ell^t_{h_{\alpha, \beta}}(x, y) \right)_{h_{\alpha, \beta} \in \h}$ correspond to the collection $\h$ of the halfspaces that define the polytope.
By Holder's inequality, each vector loss function $\ell^t_{\h} \in [-2, 2]^d$ --- this follows because we required that for some $p, q$ with $\frac{1}{p} + \frac{1}{q} = 1$, the family $\h$ is $p$-normalized, and the range of $u$ is contained in $B_q^d$.
In addition, each $\ell^t_{h_{\alpha, \beta}}$ is continuous and convex-concave by virtue of being a linear function of the continuous and affine-concave function $u$.

\subparagraph{Bounding the Adversary-Moves-First value.}
We observe that for $t \in [T]$, the AMF value $w^t_A \leq 0$.
Indeed, if the Adversary moves first and selects any $y^t \in \y$, then by the assumption of response satisfiability, the Learner has some $x^t \in \x$ guaranteeing that $u(x^t, y^t) \in P(\h)$. The latter is equivalent to $\ell^t_{h_{\alpha, \beta}}(x^t, y^t) = \langle \alpha, u(x^t, y^t) \rangle - \beta \leq 0$ for all $h_{\alpha, \beta} \in \h$, letting us conclude that for any round $t$,
\[
w^t_A = \sup_{y^t \in \y} \min_{x^t \in \x} \left(\max_{h_{\alpha, \beta} \in \h} \ell^t_{h_{\alpha, \beta}}(x^t, y^t)\right) \leq 0. 
\]

\subparagraph{Applying AMF regret bounds.}
Given this instantiation of our framework, Theorem~\ref{thm:expectationbound} implies that for any response satisfiable Polytope Blackwell game, the Learner can use Algorithm~\ref{alg:prob} (instantiated with the above loss functions) to ensure that after any round $T \geq \ln |\h|$,
\[\E\left[\max_{h_{\alpha, \beta} \in \h} \sum_{t \in [T]} \left( \left\langle \alpha, u \left(a^t, y^t \right) \right\rangle - \beta \right) \right] 
\leq \E\left[\max_{h_{\alpha, \beta} \in \h} \sum_{t \in [T]} \ell^t_{h_{\alpha, \beta}}(a^t, y^t) - \sum_{t=1}^T w^t_A \right]
\leq 8\sqrt{T \ln |\h|},\] where the expectation is with respect to the Learner's randomness. Given this guarantee, we obtain, using the definition of $\bar{u}^T$, that
\begin{align*}
    \max_{h_{\alpha, \beta} \in \h} \E\left[ \left\langle \alpha, \bar{u}^T \right\rangle - \beta \right] 
    &\leq 8\sqrt{\frac{\ln |\h|}{T}}.
\end{align*}
Using $T = T(\epsilon) \geq \ln |\h|$, we have that for every $h_{\alpha, \beta} \in \h$,
\[\E\left[ \left\langle \alpha, \bar{u}^{T(\epsilon)} \right\rangle - \beta \right] \leq 8\sqrt{\frac{\ln |\h|}{T(\epsilon)}} = 8\sqrt{\frac{\ln |\h|}{64 \ln |\h|/ \epsilon^2}} = \epsilon.\]
This concludes the proof of our in-expectation guarantee for Polytope Blackwell games.

The high-probability statement follows directly from Theorem~\ref{thm:highprob}, using $C = 2$.
\end{proof}

\paragraph{An LP based algorithm when the Adversary has a finite pure strategy space.} Algorithm~\ref{alg:prob}, which achieves the guarantees of Theorem \ref{thm:blackwell}, generally involves solving a convex program at each round. It is worth pointing out that only a \emph{linear program} will need to be solved at each round in the commonly studied special case of Blackwell approachability where \emph{both} the Learner and the Adversary randomize between actions in their respective finite action sets $\a$ and $\mathcal{B}$.

Formally, in the setting above, suppose additionally that the Adversary's action space is $\y = \Delta \mathcal{B}$, where $\mathcal{B}$ is a finite set of pure actions for the Adversary. At each round $t$, \emph{both} the Learner and the Adversary randomize over their respective action sets. First, the Learner selects a mixture $x^t \in \Delta \a$, and then the Adversary selects a mixture $y^t \in \Delta \mathcal{B}$ in response. Next, pure actions $a^t \sim x^t$ and $b^t \sim y^t$ are sampled from the chosen mixtures, and the vector valued utility in that round is set to $u(a^t, b^t)$.

In this fully probabilistic setting, at each round $t$ Algorithm~\ref{alg:prob} has the Learner solve
a normal-form zero-sum game with pure action sets $\a, \mathcal{B}$, where the utility to the Adversary (the max player) is
\begin{equation} \label{eq:blackwellgame}
    \xi^t(a, b) 
    := \sum_{h_{\alpha, \beta} \in \h} \exp\left(\eta \sum_{s=1}^{t-1} \left( \left\langle \alpha, u \left(a^s, b^s \right) \right\rangle - \beta \right) \right) 
    \cdot \left(\langle \alpha, u(a, b) \rangle - \beta \right) 
    \text{ for } a \in \a, b \in \mathcal{B}.
\end{equation}

A standard LP-based approach to solving this zero-sum game (see e.g.~\cite{zerosumLP}) is for the Learner to select among distributions $x^t \in \Delta \a$ with the goal of minimizing the maximum payoff to the Adversary over all pure responses $b \in \mathcal{B}$. Writing this down as a linear program, we obtain the following algorithm:

\begin{algorithm}[H]
\SetAlgoLined
\begin{algorithmic}
\FOR{$t=1, \dots, T$}
	\STATE Choose a mixture $x^t = (x^t_a)_{a \in \a} \in \Delta \a$ that solves the following linear program (where $\xi^t(\cdot, \cdot)$ is defined in \eqref{eq:blackwellgame}, and $z$ is an unconstrained variable):
	\begin{align*}
	    &\text{Minimize } z \\
	    \text{s.t. } \forall b \in \mathcal{B}: \quad
	    & z \geq \sum_{a \in \a} x^t_a \,\, \xi^t(a, b).
	\end{align*}
	\STATE Sample $a^t \sim x^t$.
\ENDFOR
\end{algorithmic}
\caption{Linear Programming Based Learner for Polytope Blackwell Approachability}
\label{alg:blackwellLP}
\end{algorithm}


\section{No-X-Regret: Definitions, Examples, Algorithms, and Proofs} \label{app:noregret}

As a warmup, we begin this subsection by carefully demonstrating how to use our framework to derive bounds and algorithms for the very fundamental \emph{external regret} setting. Then, we derive the same types of existential guarantees in the much more general \emph{subsequence regret} setting. We then specialize these subsequence regret bounds into tight bounds for various existing regret notions (such as internal, adaptive, sleeping experts, and multigroup regret). We conclude this subsection by deriving a general no-subsequence-regret algorithm which in turn specializes to an efficient algorithm in all of our applications.

\subsection{Simple Learning From Expert Advice: External Regret}

In the classical experts learning setting \cite{MW2}, the Learner has a set of pure actions (``experts'') $\mathcal{A}$. At the outset of each round $t \in [T]$, the Learner chooses a distribution over experts $x^t \in \Delta \mathcal{A}$. The Adversary then comes up with a vector of losses $r^t = (r^t_a)_{a \in \mathcal{A}} \in [0, 1]^{\mathcal{A}}$ corresponding to each expert. Next, the Learner samples $a^t \sim x^t$, and experiences loss corresponding to the expert she chose: $r^t_{a^t}$. The Learner also gets to observe the entire vector of losses $r^t$ for that round. The goal of the Learner is to achieve sublinear \emph{external regret} --- that is, to ensure that the difference between her cumulative loss and the loss of the best fixed expert in hindsight grows sublinearly with $T$:
\[R^T_\mathrm{ext}(\pi^T) := \sum_{t \in [T]} r^t_{a^t} - \min_{j \in \mathcal{A}} \sum_{t \in [T]} r^t_j = o(T).\]

\begin{theorem}
Fix a finite pure action set $\a$ for the Learner and a time horizon $T \geq \ln |\a|$. Then, Algorithm~\ref{alg:prob} can be instantiated to guarantee that the Learner's expected external regret is bounded as
\[\E_{\pi^T} \left[R^T_\mathrm{ext} \left(\pi^T \right) \right] \leq 4\sqrt{T \ln |\a|},\]
and furthermore that for any $\delta \in (0, 1)$,  with ex-ante probability $1-\delta$ over the Learner's randomness,
\[R^T_\mathrm{ext} \left(\pi^T \right) \leq 8 \sqrt{T \ln \frac{|\a|}{\delta}}.\]
\end{theorem}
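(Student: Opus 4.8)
The plan is to instantiate the probabilistic version of the framework (Algorithm~\ref{alg:prob}) with a loss vector whose coordinates encode the pairwise regret against each fixed expert, and then read off the two bounds directly from Theorems~\ref{thm:expectationbound} and~\ref{thm:highprob}. First I would fix the environment, which is the same at every round: the Learner's pure action set is $\a$, and the Adversary's strategy space is the convex compact set $[0,1]^{|\a|}$, from which each round's loss vector $r^t = (r^t_a)_{a\in\a}$ is drawn. The vector loss is $d=|\a|$-dimensional, with coordinate $\ell^t_j(a, r^t) := r^t_a - r^t_j$ for $j \in \a$, extended to mixtures by linearity as $\ell^t_j(x,r^t) = \E_{a\sim x}[r^t_a] - r^t_j$. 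Each coordinate is affine (hence concave and continuous) in the Adversary's action and takes values in $[-1,1]$, so the framework applies with constant $C=1$, and all compactness/convexity hypotheses are immediate.

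Next I would bound the AMF value. Fix a round $t$ and suppose the Adversary had to reveal $r^t$ first; then the Learner could deterministically play $a^*_{r^t} := \argmin_{a\in\a} r^t_a$, obtaining $\max_{j\in\a} \ell^t_j(a^*_{r^t}, r^t) = \min_{a\in\a} r^t_a - \min_{j\in\a} r^t_j = 0$. Hence $w^t_A = \sup_{r^t\in[0,1]^{|\a|}} \min_{x^t\in\Delta\a} \max_{j\in\a} \E_{a\sim x^t}[\ell^t_j(a,r^t)] \le 0$ for every $t \in [T]$.

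Then I would invoke the framework's guarantees. Theorem~\ref{thm:expectationbound} with $C=1$ and $d=|\a|$ gives $\E[\max_{j\in\a}\sum_{t\in[T]}\ell^t_j(a^t,r^t) - \sum_{t\in[T]} w^t_A] \le 4\sqrt{T\ln|\a|}$, and since $w^t_A \le 0$ this yields $\E[\max_{j\in\a}\sum_t \ell^t_j(a^t,r^t)] \le 4\sqrt{T\ln|\a|}$. It then suffices to observe that external regret is exactly this maximum: $R^T_\mathrm{ext}(\pi^T) = \sum_t r^t_{a^t} - \min_{j\in\a}\sum_t r^t_j = \max_{j\in\a}\sum_t(r^t_{a^t} - r^t_j) = \max_{j\in\a}\sum_t \ell^t_j(a^t,r^t)$, giving $\E[R^T_\mathrm{ext}] \le 4\sqrt{T\ln|\a|}$. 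The high-probability statement follows identically by replacing Theorem~\ref{thm:expectationbound} with Theorem~\ref{thm:highprob}, which yields $R^T_\mathrm{ext}(\pi^T) \le 8\sqrt{T\ln(|\a|/\delta)}$ with probability $1-\delta$.

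I do not expect a genuine obstacle here: all the content is in verifying that the instantiation meets the framework's hypotheses (continuity and concavity of each $\ell^t_j$ in the Adversary's action, boundedness, and compactness of the action sets — all trivial since $\ell^t$ is bilinear), and in the one-line computation that the per-round best response makes $w^t_A \le 0$. The only genuine ideas are the reformulation of external regret as AMF regret and that nonpositive AMF value bound; everything else is a black-box application of Theorems~\ref{thm:expectationbound} and~\ref{thm:highprob}. If one also wants to exhibit the resulting algorithm explicitly, one computes the per-coordinate weights $\chi^t_j$ of Algorithm~\ref{alg:prob} and checks, using the antisymmetry $\ell^t_j(a,r^t) = -\ell^t_a(j,r^t)$, that playing the Exponential Weights distribution attains value $0$ in each round's surrogate game — but this is optional for the stated bounds.
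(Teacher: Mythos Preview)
Your proposal is correct and follows essentially the same approach as the paper: instantiate the probabilistic framework with $\ell^t_j(a,r^t)=r^t_a-r^t_j$, verify concavity/continuity/boundedness (with $C=1$), show $w^t_A\le 0$ via the best response $a^*_{r^t}=\argmin_a r^t_a$, and then apply Theorems~\ref{thm:expectationbound} and~\ref{thm:highprob}. Your optional remark about recovering Exponential Weights via the antisymmetry of $\ell^t_j$ also matches the paper's discussion.
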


\begin{proof}
We instantiate our probabilistic framework (see Section~\ref{sec:prob}).

\subparagraph{Defining the strategy spaces.}
We define the Learner's pure action set at each round to be the set $\a$, and the Adversary's strategy space to be the convex and compact set $[0, 1]^{|\a|}$, from which the Adversary chooses each round's collection $(r^t_a)_{a \in \a}$ of all actions' losses. 

\subparagraph{Defining the loss functions.}
For $d = |\a|$, we define a $d$-dimensional vector valued loss function $\ell^t = (\ell^t_j)_{j \in \a}$, where for every action $j \in \a$, the corresponding coordinate $\ell^t_j : \a \times [0, 1]^{|\a|} \to [-1, 1]$ is given by
\[\ell^t_{j}(a, r^t) = r^t_a - r^t_j, \quad \text{ for } a \in \a, r^t \in [0, 1]^{|\a|}.\]
It is easy to see that $\ell^t_j(a, \cdot)$ is continuous and concave --- in fact, linear --- in the second argument for all $j, a \in \a$ and $t \in [T]$. Furthermore, its range is $[-C, C]$, for $C = 1$. This verifies the technical conditions imposed by our framework on the loss functions. 

\subparagraph{Applying AMF regret bounds.}
We may now invoke Theorem~\ref{thm:expectationbound}, which implies the following in-expectation AMF regret bound after round $T$ for the instantiation of Algorithm~\ref{alg:prob} with the just defined vector losses $(\ell^t)_{t \in [T]}$: 
\[\E \left[\max_{j \in \a} \sum_{t \in [T]} \ell^t_j(a^t, r^t) - \sum_{t \in [T]} w^t_A \right] \leq 4C\sqrt{T \ln d} = 4\sqrt{T \ln |\a|},\]
where recall that $w^t_A$ is the Adversary-Moves-First (AMF) value at round $t$. Connecting the instantiated AMF regret to the Learner's external regret, we get:
\[\E \left[R^T_\mathrm{ext} \right] = \E \left[\max_{j \in \a} \sum_{t \in [T]} r^t_{a^t} - r_j^t \right] = \E \left[\max_{j \in \a} \sum_{t \in [T]} \ell^t_j(a^t, r^t) \right] \leq 4\sqrt{T \ln |\a|} + \sum_{t \in [T]} w^t_A.\]

\subparagraph{Bounding the Adversary-Moves-First value.}
To obtain the claimed in-expectation external regret bound, it suffices to show that the AMF value at each round $t \in [T]$ satisfies $w^t_A \leq 0$. Intuitively, this holds because if at some round the Learner knew the Adversary's choice of losses $(r^t_a)_{a \in \a}$ in advance, then she could guarantee herself no added loss in that round by picking the action $a \in \a$ with the smallest loss $r^t_a$.

Formally, for any vector of actions' losses $r^t$, define $a^*_{r^t} := \argmin_{a \in \a} r^t_a,$ and notice that
\begin{align*}
    \min_{a \in \a} \max_{j \in \a} \ell^t_{j}(a, r^t)
    \leq \max_{j \in \a} \ell^t_{j} \left(a^*_{r^t}, r^t \right)
    = \max_{j \in \a} \left(r^t_{a^*_{r^t}} - r^t_j \right)
    = \min_{a \in \a} r^t_a - \min_{j \in \a} r^t_j = 0.
\end{align*}
The third step follows by definition of $a^*_{r^t}$.
Hence, the AMF value is indeed nonpositive at each round:
\[w^t_A = \adjustlimits\sup_{r^t \in [0, 1]^{|\a|}} \min_{a \in \a} \max_{j \in \a} \ell^t_{j}(a, r^t) \leq 0.\]
This completes the proof of the in-expectation external regret bound. The high-probability external regret bound follows in the same way from Theorem~\ref{thm:highprob} of Section~\ref{sec:prob}.
\end{proof}

A bound of $\sqrt{T \ln |\a|}$ is optimal for external regret in the experts learning setting, and so serves to witness the optimality of Theorem \ref{thm:frmwk}.

In fact, it is easy to demonstrate that in the external regret setting, the generic probabilistic Algorithm~\ref{alg:prob} amounts to the well known Exponential Weights algorithm (Algorithm~\ref{alg:MW} below) \cite{MW2}. To see this, note that Algorithm~\ref{alg:prob}, when instantiated with the above defined loss functions, has the Learner solve the following problem at each round:
\begin{align*}
    x^t &\in \argmin_{x \in \Delta \a} \max_{r^t \in [0, 1]^{|\a|}} \sum_{j \in \a} \frac{ \exp\left(\eta \sum_{s=1}^{t-1} (r^s_{a^s} - r^s_j)\right) }{\sum_{i \in \a} \exp\left(\eta \sum_{s=1}^{t-1} (r^s_{a^s} - r^s_i)\right)} \E_{a \sim x}[r^t_a - r^t_j],
    \\ &= \argmin_{x \in \Delta \a} \max_{r^t \in [0, 1]^{|\a|}} \sum_{j \in \a} \frac{ \exp\left( - \eta \sum_{s=1}^{t-1} r^s_j \right) }{\sum_{i \in \a} \exp\left(- \eta \sum_{s=1}^{t-1} r^s_i\right)} \E_{a \sim x}[r^t_a - r^t_j],
    \\ &= \argmin_{x \in \Delta \a} \max_{r^t \in [0, 1]^{|\a|}} \E_{a \sim x, j \sim \mathrm{EW}_\eta(\pi^{t-1})}[r^t_a - r^t_j],
\end{align*}
where we denoted the exponential weights distribution as
\[\mathrm{EW}_\eta(\pi^{t-1}) := \left( \frac{ \exp\left( - \eta \sum_{s=1}^{t-1} r^s_j \right) }{\sum_{i \in \a} \exp\left(- \eta \sum_{s=1}^{t-1} r^s_i\right)} \right)_{j \in \a} \in \Delta \a.\]
For any choice of $r^t$ by the Adversary, the quantity inside the expectation, $\ell^t_j(a, r^t) = r^t_a - r^t_j$, is \emph{antisymmetric} in $a$ and $j$: that is, $\ell^t_j(a, r^t) = - \ell^t_a(j, r^t)$. Due to this antisymmetry, no matter which $r^t$ gets selected by the Adversary, by playing $a \sim \mathrm{EW}_\eta(\pi^{t-1})$ the Learner obtains
\[\E_{a, j \sim \mathrm{EW}_\eta(\pi^{t-1})} \left[r^t_a - r^t_j \right] = 0,\]
thus achieving the value of the game. It is also easy to see that $x^t = \mathrm{EW}_\eta(\pi^{t-1})$ is the unique choice of $x^t$ that guarantees nonnegative value, hence Algorithm~\ref{alg:prob}, when specialized to the external regret setting, is \emph{equivalent} to the Exponential Weights Algorithm~\ref{alg:MW}.

\begin{algorithm}[H]
\SetAlgoLined
\begin{algorithmic}
\FOR{$t=1, \dots, T$}
	\STATE Sample $a^t$ such that $a^t = j$ with probability proportional to
	$\exp\left( - \eta \sum_{s=1}^{t-1} r^s_j \right)$, for $j \in \a$.
\ENDFOR
\end{algorithmic}
\caption{The Exponential Weights Algorithm with Learning Rate $\eta$}
\label{alg:MW}
\end{algorithm}

\subsection{Generalization to Subsequence Regret}
Here, we present a generalization of the experts learning framework from which we will be able to derive our other applications to no-regret learning problems. There is again a Learner and an Adversary playing over the course of rounds $t \in [T]$. Initially, the Learner is endowed with a finite set of pure actions $\a$. At each round $t$, the Adversary restricts the Learner's set of available actions for that round to some subset $\at \subseteq \a$. The Learner plays a mixture $x^t \in \Delta \at$ over the available actions. The Adversary responds by selecting a vector of losses $(r^t_a)_{a \in \a} \in [0, 1]^{|\a|}$ associated with the Learner's pure actions. Next, the Learner samples a pure action $a^t \sim x^t$.

Unlike in the standard setting, the Learner's regret will now be measured not just on the entire sequence of rounds $1, 2, \ldots, T$, but more generally on an arbitrary collection $\f$ of \emph{weighted subsequences} $f: [T] \times \mathcal{A} \to [0, 1]$. The understanding is that for any $f \in \f, t \in [T], a \in \at$, the quantity $f(t, a)$ is the ``weight'' with which round $t$ will be included in the subsequence if the Learner's sampled action is $a$ at that round. The Learner does \emph{not} need to know the subsequences ahead of time; instead the Adversary may announce the values $\{f(t, a)\}_{a \in \at, f \in \f}$ to the Learner before the corresponding round $t \in [T]$.

\begin{definition}[Subsequence Regret]
Given a family of functions $\f$, where each $f \in \f$ is a mapping $f: [T] \times \a \to [0, 1]$, chosen adaptively by the Adversary, and a set of finitely many pure actions $\a$ for the Learner, consider a collection of \emph{action-subsequence pairs} $\h \subseteq \mathcal{A} \times \f$.

The Learner's \emph{subsequence regret} after round $T$ with respect to the collection $\h$ is defined by
\[R^T_{\h}(\pi^T) := \max_{(j, f) \in \h} \sum_{t \in [T]} f(t, a^t)\left(r^t_{a^t} - r^t_j\right),\] where $\pi^T = \{(a^t, r^t)\}_{t \in [T]}$ is the transcript of the interaction.
\end{definition}

For intuition, suppose $\f = \{\textbf{1}\}$, where $\textbf{1}: [T] \times \a \to [0, 1]$ satisfies $\textbf{1}(t, a) = 1$ for all $t, a$. That is, the only relevant subsequence is the entire sequence of rounds $1, 2, \ldots, T$. If we then set $\h = \mathcal{A} \times \f$, subsequence regret specializes to the classical notion of (external) regret which was discussed above.

Moreover, we shall require the following condition on $\h$ and the action sets $\{\at\}_{t \in [T]}$, which simply asks that at each round, the Learner be responsible for regret only to currently available actions.

\begin{definition}[No regret to unavailable actions] 
A collection of action-subsequence pairs $\h$, paired with action sets $\{\at\}_{t \in [T]}$, satisfy the no-regret-to-unavailable-actions property if at each round $t \in [T]$, for every $f \in \f$ such that $(j, f) \in \h$ for some $j \not \in \at$, it holds that $f(t, a) = 0$ for all $a \in \at$.
\end{definition}

It is worth noting that this condition is trivially satisfied whenever the Learner's action set is invariant across rounds ($\at = \a$ for all $t$).

\begin{theorem}
Consider a sequence of action sets $\{\at\}_{t \in [T]}$ for the Learner, a collection $\h$ of action-subsequence pairs, and a time horizon $T \geq \ln |\h|$. If $\h$ and $\{\at\}_{t \in [T]}$ satisfy no-regret-to-unavailable-actions, then an appropriate instantiation of Algorithm~\ref{alg:prob} guarantees that the Learner's expected subsequence regret is bounded as
\[\E_{\pi^T} \left[R^T_{\h} \left(\pi^T \right) \right] \leq 4\sqrt{T \ln |\h|},\]
and furthermore, for any $\delta \in (0, 1)$, that with ex-ante probability $1-\delta$ over the Learner's randomness,
\[R^T_{\h} \left(\pi^T \right) \leq 8 \sqrt{T \ln \frac{|\h|}{\delta}}.\]
\end{theorem}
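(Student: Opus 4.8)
The plan is to instantiate the probabilistic framework of Section~\ref{sec:prob} (Algorithm~\ref{alg:prob}) with one loss coordinate per action-subsequence pair in $\h$, and then verify that the Adversary-Moves-First value is nonpositive at every round, so that the AMF regret guarantees of Theorems~\ref{thm:expectationbound} and~\ref{thm:highprob} immediately give the two claimed subsequence regret bounds. Concretely, at round $t$ the Learner's pure action set is $\at$ (so $\x^t = \Delta \at$), and the Adversary's strategy space is the fixed convex compact set $\y^t = [0,1]^{|\a|}$, from which he selects the loss vector $r^t = (r^t_a)_{a\in\a}$. I set $d := |\h|$ and define, for each pair $(j,f)\in\h$, the coordinate loss
\[
\ell^t_{(j,f)}(a, r^t) := f(t,a)\bigl(r^t_a - r^t_j\bigr), \qquad a \in \at,\ r^t \in [0,1]^{|\a|},
\]
extended to mixtures via $\ell^t_{(j,f)}(x^t, r^t) = \E_{a\sim x^t}[\ell^t_{(j,f)}(a, r^t)]$. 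Since $f(t,a)\in[0,1]$ and $r^t_a - r^t_j\in[-1,1]$, each coordinate has range in $[-1,1]$, so $C=1$; and each coordinate is linear --- hence continuous and concave --- in the Adversary's argument $r^t$, verifying the framework's hypotheses. (The values $\{f(t,a)\}_{a\in\at,\,f\in\f}$ are announced before round $t$, so these loss functions form a legitimate part of the round-$t$ environment.) With this choice, the Learner's cumulative loss in coordinate $(j,f)$ is exactly $\sum_t f(t,a^t)(r^t_{a^t}-r^t_j)$, so $\max_{(j,f)\in\h}\sum_{t\in[T]}\ell^t_{(j,f)}(a^t,r^t) = R^T_{\h}(\pi^T)$.

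It remains to show $w^t_A \leq 0$ for all $t$, which is where the no-regret-to-unavailable-actions hypothesis is used. Fix any $r^t\in[0,1]^{|\a|}$ revealed by the Adversary, and let the Learner best-respond with $a^t_\star := \argmin_{a\in\at} r^t_a$. Take any $(j,f)\in\h$. If $j\in\at$, then $r^t_{a^t_\star} \leq r^t_j$, and since $f(t,a^t_\star)\geq 0$ we get $\ell^t_{(j,f)}(a^t_\star, r^t) = f(t,a^t_\star)(r^t_{a^t_\star}-r^t_j)\leq 0$. If instead $j\notin\at$, then by the no-regret-to-unavailable-actions property $f(t,a)=0$ for all $a\in\at$, so in particular $\ell^t_{(j,f)}(a^t_\star,r^t)=0$. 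In either case $\max_{(j,f)\in\h}\ell^t_{(j,f)}(a^t_\star,r^t)\leq 0$, hence
\[
w^t_A = \sup_{r^t\in[0,1]^{|\a|}} \min_{x^t\in\Delta\at} \max_{(j,f)\in\h} \E_{a\sim x^t}\bigl[\ell^t_{(j,f)}(a, r^t)\bigr] \leq 0.
\]

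Finally, I apply the framework. By Theorem~\ref{thm:expectationbound} with $C=1$, $d=|\h|$, and $T\geq\ln|\h|$, the instantiated Algorithm~\ref{alg:prob} guarantees $\E\bigl[\max_{(j,f)\in\h}\sum_t\ell^t_{(j,f)}(a^t,r^t) - \sum_t w^t_A\bigr] \leq 4\sqrt{T\ln|\h|}$; since the $w^t_A$ are deterministic and $\sum_t w^t_A\leq 0$, this yields $\E[R^T_{\h}] \leq 4\sqrt{T\ln|\h|}$. The high-probability bound follows identically from Theorem~\ref{thm:highprob}, giving $R^T_{\h}\leq 8\sqrt{T\ln(|\h|/\delta)}$ with probability $1-\delta$. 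I do not anticipate a genuine obstacle: the only non-routine point is recognizing that the AMF value is nonpositive, and the best-response argument for that splits exactly into the cases $j\in\at$ and $j\notin\at$ --- the latter being precisely what the no-regret-to-unavailable-actions condition was designed to handle. (If desired, one can additionally note, as in the external-regret case, that solving for the round-$t$ minimax strategy over $\Delta\at$ is a polynomial-size convex program, so the resulting algorithm is efficient.)
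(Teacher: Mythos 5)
Your proposal is correct and follows essentially the same route as the paper's proof in Appendix~\ref{app:noregret}: the same choice of strategy spaces ($\x^t=\Delta\at$, $\y^t=[0,1]^{|\a|}$), the same per-pair loss coordinates $\ell^t_{(j,f)}(a,r^t)=f(t,a)(r^t_a-r^t_j)$ with $C=1$, the same best-response $a^*_{r^t}=\argmin_{a\in\at}r^t_a$, and the same invocations of Theorems~\ref{thm:expectationbound} and~\ref{thm:highprob}. Your case split on $j\in\at$ versus $j\notin\at$ is exactly the content of the paper's chain of inequalities showing $w^t_A\leq 0$ (the paper additionally notes $w^t_A\geq 0$ to get equality, but this is not needed for the bound), so the two arguments are the same in substance.
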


\begin{proof} 
We instantiate our probabilistic framework of Section~\ref{sec:prob}.

\subparagraph{Defining the strategy spaces.}
At each round $t$, the Learner's pure strategy set will be $\at$, and the Adversary's strategy space will be the convex and compact set $[0, 1]^{|\a|}$. 

\subparagraph{Defining the loss functions.}
For all action-subsequence pairs $(j, f) \in \h$, we define the corresponding loss $\ell^t_{(j, f)} : \at \times [0, 1]^{|\a|} \to [-1, 1]$ as
\[\ell^t_{(j, f)}(a, r^t) = f(t, a) (r^t_a - r^t_j), \quad \text{for } a \in \at, r^t \in [0, 1]^{|\a|}.\]
It is easy to see that for all $(j, f) \in \h$ and each $a \in \at$, the function $\ell^t_{(j, f)}(a, \cdot)$ is continuous and concave --- in fact, linear --- in the second argument, as well as bounded within $[-C, C]$ for $C = 1$. Therefore, the technical conditions imposed by our framework on the loss functions are met.

\subparagraph{Bounding the Adversary-Moves-First value.}
At each round $t$, the AMF value $w^t_A = 0$. Trivially, $w_A^t \geq 0$, as the Adversary can always set $r_a^t = 0$ for all $a$. Conversely, $w^t_A \leq 0$ as an easy consequence of the no-regret-to-unavailable-actions property. To see this, for any vector of actions' losses $r^t$, define \[a^*_{r^t} := \argmin_{a \in \at} r^t_a,\] and notice that
\begin{align*}
    w^t_A &= \sup_{r^t \in [0, 1]^{|\a|}} \min_{a \in \at} \left(\max_{(j, f) \in \h} \ell^t_{(j, f)}(a, r^t)\right),
    \\ &= \sup_{r^t \in [0, 1]^{|\a|}} \min_{a \in \at} \max \left(\max_{(j, f) \in \h: j \in \at} \ell^t_{(j, f)}(a, r^t), 0\right), & \text{(no regret to unavailable actions)}
    \\ &\leq \sup_{r^t \in [0, 1]^{|\a|}} \max \left(\max_{(j, f) \in \h: j \in \at} \ell^t_{(j, f)}(a^*_{r^t}, r^t), 0\right),
    \\ &= \sup_{r^t \in [0, 1]^{|\a|}} \max \left(\max_{(j, f) \in \h: j \in \at} f(t, a^*_{r^t}) (r^t_{a^*_{r^t}} - r^t_j), 0\right),
    \\ &\leq \sup_{r^t \in [0, 1]^{|\a|}} \max \left(\max_{(j, f) \in \h: j \in \at} f(t, a^*_{r^t}) (r^t_{j} - r^t_j), 0\right), & \text{(by definition of } a^*_{r^t}\text{)}
    \\ &= \sup_{r^t \in [0, 1]^{|\a|}} \max \left(0, 0\right),
    \\ &= 0.
\end{align*}

We thus conclude that Theorems~\ref{thm:expectationbound} and~\ref{thm:highprob} apply (with $C = 1$ and all $w^t_A = 0$) to the subsequence regret setting, yielding the claimed in-expectation and high-probability regret bounds.
\end{proof}

We now instantiate subsequence regret with various choices of subsequence families, in order to get bounds and efficient algorithms for several standard notions of regret from the literature. For brevity, for each notion of regret considered below we only exhibit the existential in-expectation guarantee for that type of regret, and omit the corresponding high-probability bounds (which are all easily derivable from Theorem~\ref{thm:highprob}). We also point out that all in-expectation bounds cited below are efficiently achievable by instantiating, with appropriate loss functions, the no-subsequence regret Algorithms~\ref{alg:subsequence} and~\ref{alg:subsequence-actionind} derived in the following Section~\ref{sec:algssubsequence}.

In all no-regret settings discussed below, except for Sleeping Experts, the Learner has a pure and finite action set $\a$ at every round $t \in [T]$; furthermore --- as usual --- the Adversary's role at each round consists in selecting the vector of per-action losses $(r^t_a)_{a \in \a} \in [0, 1]^{|\a|}$.

\paragraph{Internal and Swap Regret} To introduce the notion of \emph{internal regret} \citep{fostervohra}, consider the following collection $\mathcal{M}_\mathrm{int} \subset \a^\a$ of mappings from the action set $\a$ to itself. $\mathcal{M}_\mathrm{int}$ consists of the identity map $\mu_\mathrm{id}$ (such that $\mu_\mathrm{id}(a) = a$ for all $a \in \a$), together with all $|\a|(|\a| - 1)$ maps $\mu_{i \to j}$ that pair two particular actions: i.e., $\mu_{i \to j}(i) = j$, and $\mu_{i \to j}(a) = a$ for $a \neq i$. The Learner's internal regret is then defined as
\[R^T_\mathrm{int} := \max_{\mu \in \mathcal{M}_\mathrm{int}} \sum_{t \in [T]} r^t_{a^t} - r^t_{\mu(a^t)}.\]
In other words, the Learner's total loss is being compared to all possible counterfactual worlds, for $i, j \in \a$, in which whenever the Learner played some action $i$, it got replaced with action $j$ (and other actions remain fixed). 

We can reduce the problem of obtaining no-internal-regret to the problem of obtaining no subsequence regret for a simple choice of subsequences. Let us define the following set of subsequences: $\f := \{f_i : i \in \a\}$, where each $f_i$ is defined to be the indicator of the subsequence where the Learner played action $i$ --- that is, for all $t \in [T]$, we let $f_i(t, a) = 1_{a = i}$. Then, we let $\h := \a \times \f$. By the in-expectation no-subsequence-regret guarantee, we then have \[\E \left[\max_{(j, f) \in \h} \sum_{t \in [T]} f(t, a^t) \left(r^t_{a^t} - r^t_j \right) \right] \leq 4\sqrt{T \ln |\h|} = 4\sqrt{2 T \ln |\a|}, \]
since $|\h| = |\a| \cdot |\f| = |\a|^2$. 

But observe that the Learner's internal regret precisely coincides with the just defined instance of subsequence regret:
\begin{align*}
    R^T_\mathrm{int} &= \max_{\mu \in \mathcal{M}_\mathrm{int}} \sum_{t \in [T]} r^t_{a^t} - r^t_{\mu(a^t)} = \max_{i, j \in \a} \sum_{t \in [T] : a^t = i} r^t_i - r^t_j = \max_{j \in \a} \max_{f_i: i \in \a} \sum_{t \in [T]} f_i(t, a^t)(r^t_{a^t} - r^t_j) 
    \\ &= \max_{(j, f) \in \h} \sum_{t \in [T]} f(t, a^t) (r^t_{a^t} - r^t_j).
\end{align*}
Therefore, we have established the following existential in-expectation internal regret bound:
\[\E \left[R^T_\mathrm{int} \right] \leq 4\sqrt{2 T \ln |\a|},\] which is optimal.

The notion of \emph{swap regret}, introduced in~\cite{internalregret}, is strictly more demanding than internal regret in that it considers strategy modification rules $\mu$ that can perform more than one action swap at a time. Consider the set $\mathcal{M}_\mathrm{swap}$ of all $|\a|^{|\a|}$ \emph{swapping rules} $\mu: \a \to \a$. The Learner's swap regret is defined to be the maximum of her regret to all swapping rules:
\[R^T_\mathrm{swap} := \max_{\mu \in \mathcal{M}_\mathrm{swap}} \sum_{t \in [T]} r^t_{a^t} - r^t_{\mu(a^t)}.\]
The interpretation is that the Learner's total loss is being compared to the total loss of any remapping of her action sequence.

An easy reduction shows that the swap regret is upper-bounded by $|\a|$ times the internal regret. For completeness, we provide the details of this reduction in Appendix~\ref{sec:regretredux}. The reduction implies an in-expectation bound of $4|\a| \sqrt{2T \ln |\a|}$ on swap regret, which, compared to the optimal bound of $O(\sqrt{T |\a| \ln |\a|})$ (see~\cite{internalregret}), has suboptimal dependence on $|\a|$.

\paragraph{Adaptive Regret} In this setting, consider all contiguous time intervals within rounds $1, \ldots, T$, namely, all intervals $[t_1, t_2]$, where $t_1, t_2$ are integers such that $1 \leq t_1 \leq t_2 \leq T$. The Learner's regret on each interval $[t_1, t_2]$ is defined as her total loss over the rounds $t \in [t_1, t_2]$, minus the loss of the best action for that interval in hindsight. The Learner's adaptive regret is then defined to be her maximum regret over all contiguous time intervals:
\[R^T_\mathrm{adaptive} := \max_{[t_1, t_2] : 1 \leq t_1 \leq t_2 \leq T} \max_{j \in \a} \sum_{t = t_1}^{t_2} r^t_{a^t} - r^t_j.\]

We observe that adaptive regret corresponds to subsequence regret with respect to $\h := \a \times \f$, where $\f := \{f_{[t_1, t_2]} : 1 \leq t_1 \leq t_2 \leq T\}$ is the collection of subinterval indicator subsequences --- that is, $f_{[t_1, t_2]}(t, a) := 1_{t_1 \leq t \leq t_2}$ for all $t \in [T]$ and $a \in \a$. Observe that $|\f| \leq T^2$, and therefore, the expected regret upper bound for subsequence regret specializes to the following expected adaptive regret bound:
\[\E \left[R^T_\mathrm{adaptive} \right] \leq 4 \sqrt{T \ln (|\a| |\f|)} \leq 4 \sqrt{T (\ln |\a| + 2 \ln T)}.\]

\paragraph{Sleeping Experts} Following \cite{internalregret}, we define the sleeping experts setting as follows. Suppose that the Learner is initially given a set of pure actions $\a$, and before each round $t$, the Adversary chooses a subset of pure actions $\at \subseteq \a$ available to the Learner at that round --- these are known as the ``awake experts'', and the rest of the experts are the ``sleeping experts'' at that round. 

The Learner's regret to each action $j \in \a$ is defined to be the excess total loss of the Learner during rounds where $j$ was ``awake'', compared to the total loss of $j$ over those rounds. Formally, the Learner's sleeping experts regret after round $T$ is defined to be
\[R^T_\mathrm{sleeping} := \max_{j \in \a} \sum_{t \in [T]: j \in \at} r^t_{a^t} - r^t_j.\]

This is clearly an instance of subsequence regret --- indeed, we may consider the family of subsequences $\f := \{f_j : j \in \a\}$, where $f_j(t, a) := 1_{j \in \at}$ for all $j, a, t$, and let $\h := \{(j, f_j)\}_{j \in \a}$. It is easy to verify that the no-regret-to-unavailable-actions property holds, and thus the guarantees of the subsequence regret setting carry over to this sleeping experts setting. In particular, the following existential in-expectation sleeping experts regret bound holds:
\[\E \left[R^T_\mathrm{sleeping} \right] \leq 4 \sqrt{T \ln |\a|},\] which is also optimal in this setting.

\paragraph{Multi-Group Regret} We imagine that before each round, the Adversary selects and reveals to the Learner some \emph{context} $\theta^t$ from an underlying feature space $\Theta$. The interpretation is that the Learner's decision at round $t$ will pertain to an individual with features $\theta^t$. Additionally, there is a fixed collection $\g \subset 2^\Theta$, where each $g \in \g$ is interpreted as a (demographic) group of individuals within the population $\Theta$. Here $\g$ may be large and may consist of overlapping groups.
The Learner's goal is to minimize regret to each action $a \in \a$ not just over the entire population, but also separately for each population group $g \in \g$. Explicitly, the Learner's multi-group regret after round $T$ is defined to be
\[R^T_\mathrm{multi} := \max_{g \in \g} \max_{j \in \a} \sum_{t \in [T] : \theta^t \in g} r^t_{a^t} - r^t_j.\]

It is easy to see that multi-group regret corresponds to subsequence regret with $\h := \a \times \f$, where $\f := \{f_g : g \in \g\}$ is the collection of group indicator subsequences --- that is, $f_g(t, a) := 1_{\theta^t \in g}$ for all $t, a$. Here we are taking advantage of the fact that the functions $f$ on which subsequences are defined need not be known to the algorithm ahead of time, and can be revealed sequentially by the Adversary, allowing us to model adversarially chosen contexts.  Therefore, multi-group regret inherits subsequence regret guarantees, and in particular, we obtain the following existential in-expectation multi-group regret bound:
\[\E \left[R^T_\mathrm{multi} \right] \leq 4\sqrt{ T \ln (|\a| |\g|)}.\]
Observe that this bound scales only as $\sqrt{\ln |\g|}$ with respect to the number of population groups, which we can therefore take to be exponentially large in the parameters of the problem. 

\subsection{Deriving No-Subsequence-Regret Algorithms} \label{sec:algssubsequence}
We now present a way to specialize Algorithm~\ref{alg:prob} to the setting of subsequence regret with no-regret-to-unavailable-actions. At each round, instead of solving a convex-concave problem, the specialized algorithm will only need to solve a polynomial-sized linear program.

\begin{algorithm}[H]
\SetAlgoLined
\begin{algorithmic}
\FOR{$t=1, \dots, T$}
    \STATE Learn the current set of feasible actions $\at$ (potentially selected by an Adversary). 
    \STATE Learn the values $f(t,a)$ for every $a \in \at$ and $f \in \f$ (potentially selected by an Adversary). 
	\STATE Solve for $x^t = (x^t_a)_{a \in \at} \in \Delta \at$ defined by the following linear inequalities for all $a \in \at$:
    \[ \!\!\!\!\!\!
    x^t_a \!\!\! \sum_{(j,f) \in \h} \!\!\!\! \exp\left( \! \eta \sum_{s=1}^{t-1} \ell_{(j, f)}^s(a^s,r^s) \! \right) f(t,a)
    -\sum_{j \in \at} x^t_j \!\!\!\! \sum_{f: (a, f) \in \h} \!\!\!\!\!\! \exp\left( \! \eta \sum_{s=1}^{t-1} \ell_{(a, f)}^s(a^s,r^s) \! \right) f(t,j) \leq 0
    \]
	\STATE Sample $a^t \sim x^t$.
\ENDFOR
\end{algorithmic}
\caption{Efficient No Subsequence Regret Algorithm for the Learner}
\label{alg:subsequence}
\end{algorithm}

\begin{theorem}
Algorithm~\ref{alg:subsequence} implements Algorithm~\ref{alg:prob} in the subsequence regret setting, and achieves the same guarantees.
\end{theorem}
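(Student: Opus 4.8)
The plan is to show that the linear feasibility system solved at each round by Algorithm~\ref{alg:subsequence} is exactly the set of minimax-optimal Learner strategies for the surrogate stage game that Algorithm~\ref{alg:prob} plays, when the latter is instantiated --- as in the subsequence-regret reduction above --- with Learner action set $\Delta\at$, Adversary action set $[0,1]^{|\a|}$, loss coordinates $\ell^t_{(j,f)}(a, r) = f(t,a)(r_a - r_j)$ indexed by $(j,f)\in\h$, and $w^t_A = 0$. Granting this, Algorithm~\ref{alg:subsequence} merely replaces the convex program of Algorithm~\ref{alg:prob} by an equivalent linear program at each round, so it inherits all the same AMF-regret guarantees.

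First I would unfold the per-round objective of Algorithm~\ref{alg:prob}. Write $W^t_{(j,f)} := \exp(\eta\sum_{s=1}^{t-1}\ell^s_{(j,f)}(a^s, r^s))$ and $Z^t := \sum_{(j',f')\in\h} W^t_{(j',f')}$, so the framework's weights are $\chi^t_{(j,f)} = W^t_{(j,f)}/Z^t$. Then the function the Learner minimizes over $x\in\Delta\at$ is $\psi^t(x) = \max_{r\in[0,1]^{|\a|}} \tfrac{1}{Z^t}\sum_{(j,f)\in\h} W^t_{(j,f)}\,\E_{a\sim x}[f(t,a)(r_a - r_j)]$. Expanding the expectation and collecting the coefficient of each $r_a$: the ``$-r_j$'' terms with $j\notin\at$ carry a factor $f(t,\cdot)$ that vanishes on $\at$ by the no-regret-to-unavailable-actions hypothesis, hence drop out, and after relabeling the remaining competitor index the objective becomes $\tfrac{1}{Z^t}\sum_{a\in\at} r_a\, c_a(x)$, where $c_a(x) := x_a\sum_{(j,f)\in\h} W^t_{(j,f)} f(t,a) - \sum_{j\in\at} x_j\sum_{f:(a,f)\in\h} W^t_{(a,f)} f(t,j)$. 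Since the Adversary best-responds by setting each $r_a$ (for $a\in\at$) independently to $1$ when $c_a(x)>0$ and to $0$ otherwise, we get $\psi^t(x) = \tfrac{1}{Z^t}\sum_{a\in\at}\max\{c_a(x),0\}$.

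Next I would note that $w^t_A = 0$ is exactly the value of this surrogate game: $\psi^t(x)\ge 0$ for all $x$ (take $r=0$), while the reduction already showed $w^t_A = 0$, so by the argument of Lemma~\ref{lem:zerosum} (Sion's theorem, Fact~\ref{fact:minimax}) the minimax value $\min_x\psi^t(x)$ equals $0$ and is attained. Consequently $x$ is minimax-optimal if and only if $\psi^t(x)\le 0$, i.e.\ $c_a(x)\le 0$ for every $a\in\at$; substituting $W^t_{(j,f)} = \exp(\eta\sum_{s=1}^{t-1}\ell^s_{(j,f)}(a^s, r^s))$ turns ``$c_a(x)\le 0$'' into precisely the linear inequalities listed in Algorithm~\ref{alg:subsequence}. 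In particular that system is feasible (a minimax-optimal $x^t$ exists by Sion), its solution set coincides with the admissible plays of Algorithm~\ref{alg:prob}, and it has $O(|\at|)$ variables and constraints, so it is a polynomial-sized LP (solved with any dummy objective). Because the returned $x^t$ is an \emph{exact} minimax strategy attaining per-round value $\le 0 = w^t_A$, the AMF-regret analysis goes through verbatim, and Theorems~\ref{thm:expectationbound} and~\ref{thm:highprob} (equivalently Theorem~\ref{thm:suboptimalbounds} with $w^t_\mathrm{bd}=0$) yield the stated in-expectation and high-probability subsequence-regret bounds. The only step that needs care is the index bookkeeping in the expansion --- in particular, using no-regret-to-unavailable-actions to discard the $j\notin\at$ terms \emph{before} relabeling, so that the ``$-r_j$'' contributions reassemble into the second double sum of the displayed inequality; everything else is routine verification.
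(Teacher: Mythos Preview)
Your proposal is correct and follows essentially the same approach as the paper: both expand the surrogate-game objective linearly in the coordinates $r_a$, invoke no-regret-to-unavailable-actions to drop the $j\notin\at$ terms, obtain the same coefficients $c_a(x)$, and then characterize minimax optimality of $x^t$ by the system $c_a(x^t)\le 0$ for all $a\in\at$. The only cosmetic difference is that you explicitly compute the Adversary's best response as $\psi^t(x)=\tfrac{1}{Z^t}\sum_{a\in\at}\max\{c_a(x),0\}$, whereas the paper argues the necessary and sufficient conditions for value $\le 0$ directly without writing this closed form.
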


\begin{proof}
In parallel to the notation of Algorithm~\ref{alg:prob}, we define the following set of weights at round $t \in [T]$:
\[\chi^t_{(j, f)} := \frac{1}{Z^t} \exp\left(\eta \sum_{s=1}^{t-1} \ell_{(j, f)}^s(a^s,r^s)\right),\] where 
\[Z^t := \sum_{(j, f) \in \h} \exp\left(\eta \sum_{s=1}^{t-1} \ell_{(j, f)}^s(a^s,r^s)\right).\]

When instantiated with our current set of loss functions, Algorithm~\ref{alg:prob} solves the following zero-sum game at round $t \in [T]$, where we denote $\ell^t_{(j, f)}(x, r^t) := \E_{a \sim x} [\ell^t_{(j, f)} (a, r^t) ]$:
\[x^t \in\argmin_{x \in \Delta \at} \max_{r^t \in [0, 1]^{|\a|}} \sum_{(j, f) \in \h} \chi^t_{(j, f)} \cdot \ell^t_{(j, f)} \left(x, r^t \right).\]
By definition of the loss functions in the subsequence regret setting, the objective function is linear in the Adversary's choice of $r^t$. Thus, let us rewrite the objective as a linear combination of $(r^t_a)_{a \in \at}$:
\begin{align*}
    & \sum_{(j, f) \in \h} \chi^t_{(j, f)} \cdot \ell^t_{(j, f)}(x, r^t),
    \\ =& \sum_{(j, f) \in \h} \chi^t_{(j, f)} \sum_{a \in \at} x_a \cdot f(t, a) \cdot (r^t_a - r^t_j),
    \\ =& \sum_{(j, f) \in \h} \sum_{a \in \at} r^t_a \cdot x_a \cdot f(t, a) \cdot \chi^t_{(j, f)}
    - \sum_{(j, f) \in \h} \sum_{a \in \at} r^t_j \cdot x_a \cdot f(t, a) \cdot \chi^t_{(j, f)},
    \\ 
    \intertext{which, by the no-regret-to-unavailable actions property,}
    =& \sum_{a \in \at} r^t_a \cdot x_a \sum_{(j, f) \in \h} f(t, a) \cdot \chi^t_{(j, f)}
    - \sum_{j \in \at} r^t_j \sum_{a \in \at} x_a \sum_{f: (j, f) \in \h} f(t, a) \cdot \chi^t_{(j, f)},
    \\ 
    \intertext{and now, swapping $j$ and $a$ in the second summation,}
    =& \sum_{a \in \at} r^t_a \cdot x_a \sum_{(j, f) \in \h} f(t, a) \cdot \chi^t_{(j, f)}
    - \sum_{a \in \at} r^t_a \sum_{j \in \at} x_j \sum_{f: (a, f) \in \h} f(t, j) \cdot \chi^t_{(a, f)},
    \\ =& \sum_{a\in \at} r_a^t  \left(\underbrace{ x_a \sum_{(j, f) \in \h} f(t, a) \cdot \chi^t_{(j, f)} - 
    \sum_{j \in \at} x_j \sum_{f: (a, f) \in \h} f(t, j) \cdot \chi^t_{(a, f)} }_{:= c_a(x)} \right).
\end{align*}
Thus, the zero-sum game played at round $t$ has objective function
$\sum\limits_{a \in \at} c_a(x^t) \cdot r^t_a,$
where the coefficients $c_a(x^t)$ do not depend on the Adversary's action $r^t$.
Recall that this game has value at most $w^t_A = 0$. Hence, $\max_{a \in \at} c_a(x^t) \leq 0$ for any minimax optimal strategy $x^t$ for the Learner --- since otherwise, if some $c_{a'}(x^t) > 0$, the Adversary would get value $c_{a'}(x^t) > 0$ by setting $r_{a'}^t = 1$ and $r_a^t = 0$ for $a \neq a'$.
Conversely, by playing $x^t$ such that $\max\limits_{a \in \at} c_a(x^t) \leq 0$, the Learner gets value $\leq 0$, as $r_a^t \geq 0$ for all $a$. 

Therefore, the Learner's choice of $x^t$ is minimax optimal if and only if for all $a \in \at$,
\begin{align*}
    &c_a(x^t) \leq 0 \iff Z^t \cdot c_a(x^t) \leq 0 \iff 
    \\ & x^t_a \sum_{(j, f) \in \h} f(t, a) \exp\left(\eta
    \sum_{s=1}^{t-1} \ell_{(j, f)}^s(a^s,r^s)\right) - 
    \sum_{j \in \at} x^t_j \sum_{f: (a, f) \in \h} f(t, j) \exp\left(\eta \sum_{s=1}^{t-1} \ell_{(a, f)}^s(a^s,r^s)\right) \leq 0.
\end{align*}
This recovers Algorithm~\ref{alg:subsequence}, concluding the proof.
\end{proof}

\paragraph{Simplification for Action Independent Subsequences}
The above Algorithm \ref{alg:subsequence} requires solving a linear feasibility problem. This mirrors how existing algorithms for the special case of minimizing internal regret operate~(\cite{internalregret}); recall that internal regret corresponds to subsequence regret for a certain collection of $|\a|$ subsequences that depend on the Learner's action in the current round $t$.  

By contrast, if all of our subsequence indicators $f \in \f$ are \emph{action independent}, that is, satisfy $f(t,a) = f(t,a')$ for all $a, a' \in \a$ and $t \in [T]$, then it turns out that we can avoid solving a system of linear inequalities: our equilibrium has a closed form. In what follows, we abuse notation and simply write $f(t)$ for the value of the subsequence $f$ at round $t$. 

Observe that if each $f \in \f$ is action independent, then we can rewrite our equilibrium characterization in Algorithm~\ref{alg:subsequence} as the requirement that the Learner's chosen distribution $x^t \in \Delta \at$ must satisfy, for each $a \in \at$ (provided that $f(t) \neq 0$ for at least some $f \in \f$), the following inequality:
\begin{eqnarray*}
x^t_a &\leq& \frac{\sum_{j \in \at} x^t_j \sum_{f: (a, f) \in \h} f(t) \exp\left(\eta \sum_{s=1}^{t-1} \ell_{(a, f)}^s(a^s,r^s)\right)} {\sum_{(j,f) \in \h} f(t) \exp\left(\eta \sum_{s=1}^{t-1} \ell_{(j, f)}^s(a^s,r^s)\right)}, \\
 &=& \frac{ \sum_{f: (a, f) \in \h} f(t) \exp\left(\eta \sum_{s=1}^{t-1} \ell_{(a, f)}^s(a^s,r^s)\right)} {\sum_{(j,f) \in \h} f(t) \exp\left(\eta \sum_{s=1}^{t-1} \ell_{(j, f)}^s(a^s,r^s)\right)}. \\
\end{eqnarray*}
Here the equality follows because $x^t \in \Delta \at$ is a probability distribution. 
 
We now observe that setting each $x^t_a$ to be its upper bound, for $a \in \at$, yields a probability distribution over $\at$, which is consequently the unique feasible solution to the above system. Hence, for action independent subsequences, we have a closed-form implementation of Algorithm \ref{alg:subsequence} that does not require solving a linear feasibility problem:
 
\begin{algorithm}[H]
\SetAlgoLined
\begin{algorithmic}
\FOR{$t=1, \dots, T$}
    \STATE Learn the current set of feasible actions $\at$ and the values $f(t)$ for every $f \in \f$ (potentially selected by an Adversary). 
	\STATE Sample $a^t \sim x^t$, where for all $a \in \at$, 
	\[x^t_a =\frac{ \sum_{f: (a, f) \in \h} f(t) \exp\left(\eta \sum_{s=1}^{t-1} \ell_{(a, f)}^s(a^s,r^s)\right)} {\sum_{(j,f) \in \h} f(t) \exp\left(\eta \sum_{s=1}^{t-1} \ell_{(j, f)}^s(a^s,r^s)\right)}. \]
\ENDFOR
\end{algorithmic}
\caption{An Efficient Learner for Action Independent Subsequences}
\label{alg:subsequence-actionind}
\end{algorithm}

\subsection{Omitted Reductions between Different Notions of Regret}
\label{sec:regretredux}

\paragraph{Reducing swap regret to internal regret} 
We can upper bound the swap regret by reusing the instance of subsequence regret that we defined to capture internal regret. Recall that it was defined as follows. We let $\f := \{f_i : i \in \a\}$, where each $f_i$ is the indicator of the subsequence of rounds where the Learner played action $i$ --- that is, for all $t \in [T]$, we let $f(t, a) = 1_{a = i}$. Then, we let $\h := \a \times \f$. We then obtained the in-expectation regret guarantee 
\[\E \left[\max_{(j, f) \in \h} \sum_{t \in [T]} f(t, a^t) \left(r^t_{a^t} - r^t_j \right) \right] \leq 4\sqrt{2 T \ln |\a|}. 
\]

Returning to swap regret, note that for any fixed swapping rule $\mu: \a \to \a$, we have
\begin{align*}
    \sum_{t \in [T]} r^t_{a^t} - r^t_{\mu(a^t)} &= \sum_{i \in \a} \sum_{t \in [T] : a^t = i} r^t_{a^t} - r^t_{\mu(i)}
    \\ &\leq \sum_{i \in \a} \max_{j \in \a} \sum_{t \in [T] : a^t = i} r^t_{a^t} - r^t_{j}
    \\ &\leq |\a| \max_{i \in \a} \max_{j \in \a} \sum_{t \in [T] : a^t = i} r^t_{a^t} - r^t_{j}
    \\ &= |\a| \max_{(j, f) \in \h} \sum_{t \in [T]} f(t, a^t) \left(r^t_{a^t} - r^t_j \right),
\end{align*}
where in the last line we simply reparametrized the maximum over $i \in \a$ as the maximum over all $f \in \f$. Since the above holds for any $\mu \in \mathcal{M}_\mathrm{swap}$, we have
\[R^t_\mathrm{swap} 
= \max_{\mu \in \mathcal{M}_\mathrm{swap}} \sum_{t \in [T]} r^t_{a^t} - r^t_{\mu(a^t)} 
\leq |\a| \max_{(j, f) \in \h} \sum_{t \in [T]} f(t, a^t) \left(r^t_{a^t} - r^t_j \right), \]
and therefore, we conclude that there exists an efficient algorithm that achieves expected swap regret
\[\E \left[R^T_\mathrm{swap} \right] \leq 4 |\a| \sqrt{2 T \ln |\a|}.\]

\paragraph{Wide-range regret and its connection to subsequence regret} The wide-range regret setting was first introduced in \cite{widerangelehrer} and then studied, in particular, in~\cite{internalregret} and~\cite{phiregret}. It is quite general, and is in fact equivalent to the subsequence regret setting, up to a reparametrization.

Just as in the subsequence regret setting, imagine there is a finite family of subsequences $\f$, where each $f \in \f$ has the form $f: [T] \times \a \to [0, 1]$. Moreover, suppose there is a finite family $\mathcal{M}$ of \emph{modification rules}. Each modification rule $\mu \in \mathcal{M}$ is defined as a mapping $\mu : [T] \times \a \to \a$, which has the interpretation that if at time $t$, the Learner plays action $a^t$, then the modification rule modifies this action into another action $\mu(t, a^t) \in \a$. Now, consider a collection of modification rule-subsequence pairs $\h \subseteq \mathcal{M} \times \f$. The Learner's wide-range regret with respect to $\h$ is defined as
\[R^T_\mathrm{wide} := \max_{(\mu, f) \in \h} \sum_{t \in [T] } f(t, a^t) \left( r^t_{a^t} - r^t_{\mu(t, a^t)} \right). \]

It is evident that wide-range regret has subsequence regret (when the Learner's action set $\at = \a$ for all $t \in [T]$) as a special case, where each modification rule $\mu \in \mathcal{M}$ always outputs the same action: that is, for all $t, a^t$, we have $\mu(t, a^t) = j$ for some $j \in \a$.

It is also not hard to establish the converse. Indeed, suppose we have an instance of no-wide-range-regret learning with $\h \subseteq \mathcal{M} \times \f$, where $\mathcal{M}$ is a family of modification rules and $\f$ is a family of subsequences. Fix any pair $(\mu, f) \in \h$. Then, let us define, for all $j \in \a$, the subsequence
\[\phi^{(\mu, f)}_j : [T] \times \a \to [0, 1] \text{ such that } \phi^{(\mu, f)}_j(t, a) := f(t, a) \cdot 1_{\mu(t, a) = j} \text{ for all } t \in [T], a \in \a.\]
Now, let us instantiate our subsequence regret setting with
\[\h_\mathrm{wide} := \bigcup_{(\mu, f) \in \h} \bigcup_{j \in \a} \left(j, \phi^{(\mu, f)}_j\right).\]
Observe in particular that $|\h_\mathrm{wide}| = |\a| \cdot |\h|$.

Computing the subsequence regret of this family $\h_\mathrm{wide}$, we have
\[R^T_{\h_\mathrm{wide}} = \max_{(\mu, f) \in \h} \max_{j \in \a} \sum_{t \in [T] : \mu(t, a^t) = j} f(t, a^t) (r^t_{a^t} - r^t_{j}).\]
Now, we have the following upper bound on the wide-range regret:
\begin{align*}
    R^T_\mathrm{wide} &= \max_{(\mu, f) \in \h} \sum_{t \in [T] } f(t, a^t) \left( r^t_{a^t} - r^t_{\mu(t, a^t)} \right) 
    \\ &= \max_{(\mu, f) \in \h} \sum_{j \in \a} \sum_{t \in [T] : \mu(t, a^t) = j} f(t, a^t) \left( r^t_{a^t} - r^t_{j} \right) \\
    &\leq \max_{(\mu, f) \in \h} |\a|\, \max_{j \in \a} \sum_{t \in [T] : \mu(t, a^t) = j} f(t, a^t) \left( r^t_{a^t} - r^t_{j} \right) 
    \\ &= |\a| R^T_{\h_\mathrm{wide}}. 
\end{align*}

Since our subsequence regret results imply the existence of an algorithm such that $\E \left[R^T_{\h_\mathrm{wide}} \right] \leq 4\sqrt{T \ln |H'|} = 4\sqrt{T (\ln |\a| + \ln |\h|)}$, we have the following expected wide-range regret bound:
\[\E \left[R^T_\mathrm{wide} \right] \leq 4 |\a| \sqrt{T \left(\ln |\a| + \ln |\h| \right)}.\]

\end{document}